\newtheorem{theorem}{Theorem}[section]
\newtheorem{lemma}[theorem]{Lemma}
\newtheorem{corollary}[theorem]{Corollary}
\newtheorem{definition}[theorem]{Definition}
\newtheorem{proposition}[theorem]{Proposition}
\newtheorem{remark}[theorem]{Remark}
\newtheorem{assumption}{Assumption}
\providecommand{\norm}[1]{\left\lVert#1\right\rVert}
\providecommand{\E}{{\mathbb E}}
\providecommand{\E}[1]{{\mathbb E}\left.#1\right. }        
\providecommand{\Eb}[1]{{\mathbb E}\left[#1\right] }       
\providecommand{\EEb}[2]{{\mathbb E}_{#1}\left[#2\right] } 
\let\lll\ll
\renewcommand{\ll}{\mathbf{l}}
\newenvironment{talign*}
{\csname align*\endcsname}
{\endalign}
\newcommand*{\algrule}[1][\algorithmicindent]{\makebox[#1][l]{\hspace*{.5em}\thealgruleextra\vrule height \thealgruleheight depth \thealgruledepth}}%
\newcommand*{\thealgruleextra}{}
\newcommand*{\thealgruleheight}{.75\baselineskip}
\newcommand*{\thealgruledepth}{.25\baselineskip}
\def\ALG@printindent{%
	\ifnum \theALG@nested>0
	\ifx\ALG@text\ALG@x@notext
	\else
		\unskip
		\addvspace{-1pt}
		\ALG@printindent@tempcnta=1
		\loop
		\algrule[\csname ALG@ind@\the\ALG@printindent@tempcnta\endcsname]%
		\advance \ALG@printindent@tempcnta 1
		\ifnum \ALG@printindent@tempcnta<\numexpr\theALG@nested+1\relax
			\repeat
		\fi
	\fi
}%
\patchcmd{\ALG@doentity}{\noindent\hskip\ALG@tlm}{\ALG@printindent}{}{\errmessage{failed to patch}}
\newbox\statebox
\newcommand{\myState}[1]{%
	\setbox\statebox=\vbox{#1}%
	\edef\thealgruleheight{\dimexpr \the\ht\statebox+1pt\relax}%
	\edef\thealgruledepth{\dimexpr \the\dp\statebox+1pt\relax}%
	\ifdim\thealgruleheight<.75\baselineskip
		\def\thealgruleheight{\dimexpr .75\baselineskip+1pt\relax}%
	\fi
	\ifdim\thealgruledepth<.25\baselineskip
		\def\thealgruledepth{\dimexpr .25\baselineskip+1pt\relax}%
	\fi
	\State #1%
	\def\thealgruleheight{\dimexpr .75\baselineskip+1pt\relax}%
	\def\thealgruledepth{\dimexpr .25\baselineskip+1pt\relax}%
}
\definecolor{turquoise}{RGB}{64, 224, 208}
\title{DELTA: Diverse Client Sampling for Fasting Federated Learning}
\author{Lin Wang$^{1,2}$, Yongxin Guo$^{1,2}$, Tao Lin$^{4,5}$,
Xiaoying Tang$^{1,2,3}$\thanks{Corresponding author.}
 \\
$^1$School of Science and Engineering, The Chinese University of Hong Kong (Shenzhen) \\
$^2$The Shenzhen Institute of Artificial Intelligence and Robotics for Society \\
$^3$The Guangdong Provincial Key Laboratory of Future Networks of Intelligence\\
$^4$Research Center for Industries of the Future, Westlake University\\
$^5$School of Engineering, Westlake University.
}
\begin{document}

\maketitle

\begin{abstract}
Partial client participation has been widely adopted in Federated Learning (FL) to reduce the communication burden efficiently. However, an inadequate client sampling scheme can lead to the selection of unrepresentative subsets, resulting in significant variance in model updates and slowed convergence. Existing sampling methods are either biased or can be further optimized for faster convergence.
In this paper, we present DELTA, an unbiased sampling scheme designed to alleviate these issues. DELTA characterizes the effects of client diversity and local variance, and samples representative clients with valuable information for global model updates. In addition, DELTA is a proven optimal unbiased sampling scheme that minimizes variance caused by partial client participation and outperforms other unbiased sampling schemes in terms of convergence. 
Furthermore, 
to address full-client gradient dependence,
we provide a practical version of DELTA depending on the available clients' information, and also analyze its convergence.
Our results are validated through experiments on both synthetic and real-world datasets. 
\end{abstract}

\section{Introduction}
\label{Introduction}

Federated Learning (FL) is a distributed learning paradigm that allows a group of clients to collaborate with a central server to train a model. Edge clients can perform local updates without sharing their data, which helps to protect their privacy. However, communication can be a bottleneck in FL, as edge devices often have limited bandwidth and connection availability~\citep{wang2021field}. To reduce the communication burden, only a subset of clients are typically selected for training. However, an improper client sampling strategy, such as uniform client sampling used in FedAvg~\citep{mcmahan2017communication}, can worsen the effects of data heterogeneity in FL. This is because the randomly selected unrepresentative subsets can increase the variance introduced by client sampling and slow down convergence. 

Existing sampling strategies can be broadly classified into two categories: biased and unbiased. Unbiased sampling is important because it can preserve the optimization objective. However, only a few unbiased sampling strategies have been proposed in FL, such as multinomial distribution (MD) sampling and cluster sampling. Specifically, cluster sampling can include clustering based on sample size and clustering based on similarity. 
Unfortunately, these sampling methods often suffer from slow convergence, large variance, and computation overhead issues~\citep{balakrishnan2021, fraboni2022general}.
\looseness=-1


\sidecaptionvpos{figure}{c}
\begin{SCfigure}[40][!t]
\centering
\includegraphics[width=.55\textwidth, ]{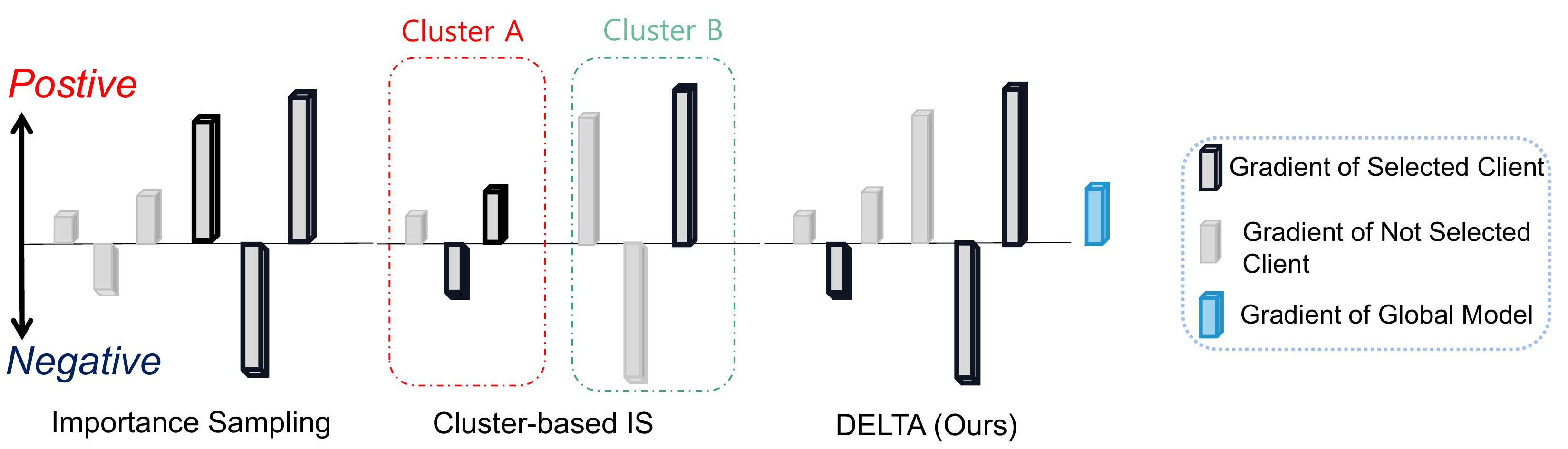}
 \hspace{-.2in}
 \vspace{-1em}
\caption{\small \textbf{Client selection illustration of different methods.
} IS (left) selects high-gradient clients but faces redundant sampling issues. Cluster-based IS (mid) addresses redundancy, but 
using small gradients for updating continuously can slow down convergence.
In contrast, DELTA (right) selects diverse clients with significant gradients without clustering operations.
\looseness=-1
}
 \label{client selection comparision}
\vspace{-2.2em}
\end{SCfigure}

To accelerate the convergence of FL with partial client participation, Importance Sampling (IS), an unbiased sampling strategy, has been proposed in recent literature~\citep{chen2022optimal,rizk2020federated}.
IS selects clients with a large gradient norm, as shown in Figure~\ref{client selection comparision}. Another sampling method shown in Figure~\ref{client selection comparision} is cluster-based IS, which first clusters clients according to the gradient norm and then uses IS to select clients with a large gradient norm within each cluster.
\looseness=-1

Though IS and cluster-based IS have their advantages, \textbf{1) IS could be inefficient because it can result in the transfer of excessive similar updates from the clients to the server}.  This problem has been pointed out in recent works~\citep{shen2022fast,xie2021federated},  and efforts are being made to address it. One approach is to use cluster-based IS, which groups similar clients together.  This can help, but \textbf{2) cluster-based IS has its drawbacks in terms of convergence speed and clustering effect.} 
Figure~\ref{performance of cluster and other methods} illustrates that both of these sampling methods can perform poorly at times. Specifically, compared with cluster-based IS, IS cannot fully utilize the diversity of gradients, leading to redundant sampling and a lack of substantial improvement in accuracy~\citep{shen2022fast,balakrishnan2021}.
While the inclusion of clients from small gradient groups in cluster-based IS leads to slow convergence as it approaches convergence, as shown by experimental results in Figure~\ref{App com gradient} and \ref{converge_com_clu_IS} in Appendix~\ref{App toy example and experiment 2}. Furthermore, the clustering algorithm's performance tends to vary when applied to different client sets with varying parameter configurations, such as different numbers of clusters, as observed in prior works~\citep{shen2022fast,sharma2017pros,thompson1990adaptive}.



\sidecaptionvpos{figure}{c}
\begin{SCfigure}[40][!t]
\centering
\includegraphics[width=.33\textwidth, height=.2\textwidth]{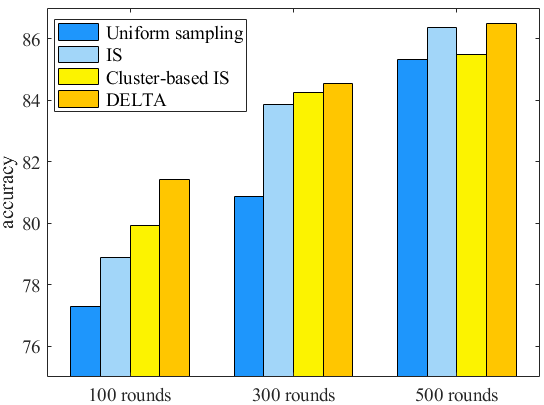}
 \hspace{-.2in}
 \vspace{-1em}
\caption{\small \textbf{Comparison of the convergence performance for different sampling methods. } In this example, we use a logistic regression model on non-iid MNIST data and sample 10 out of 200 clients. We run 500 communication rounds and report the average of the best 10 accuracies at 100, 300, and 500 rounds. This shows the accuracy performance from the initial training state to convergence.
\looseness=-1
}
\label{performance of cluster and other methods}
\vspace{-1.5em}
\end{SCfigure}



To address the limitations of IS and cluster-based IS, namely excessive similar updates and poor convergence performance, we propose a novel sampling method for Federated Learning termed \textbf{D}iv\textbf{E}rse c\textbf{L}ien\textbf{T} s\textbf{A}mpling (DELTA).  
Compared to IS and cluster-based IS methods, DELTA tends to select clients with diverse gradients, as shown in Figure~\ref{client selection comparision}. This allows DELTA to utilize the advantages of a large gradient norm for convergence acceleration while also overcoming the issue of gradient similarity.
\looseness=-1

Additionally, we propose practical algorithms for DELTA and IS that rely on accessible information from partial clients, addressing the limitations of existing analysis based on full client gradients~\citep{luo2022tackling,chen2022optimal}. We also provide convergence rates for these algorithms.
We replace uniform client sampling with DELTA in FedAvg, referred to as \textbf{FedDELTA}, and replace uniform client sampling with IS in FedAvg, referred to as \textbf{FedIS}. Their practical versions are denoted as \textbf{FedPracDELTA} and \textbf{FedPracIS}. 

\paragraph{Toy Example and Motivation.}   
We present a toy example to illustrate our motivation, where each client has a regression model.
The detailed settings of each model and the calculation of each sampling algorithm's gradient are provided in Appendix~\ref{App toy example and experiment 1}.
Figure~\ref{example gradient} shows that IS deviates from the ideal global model when aggregating gradients from clients with large norms. 
This motivates us to consider the correlation between local and global gradients in addition to gradient norms when sampling clients.
\emph{Compared to IS, DELTA selects clients with large gradient diversities, which exploits the clients' information of both gradient norms and directions, resulting in a closer alignment to the ideal global model.}

\looseness=-1

\sidecaptionvpos{figure}{c}
\vspace{-1.em}
\begin{SCfigure}[40][!t]
\centering
\includegraphics[width=.33\textwidth, height=.25\textwidth]{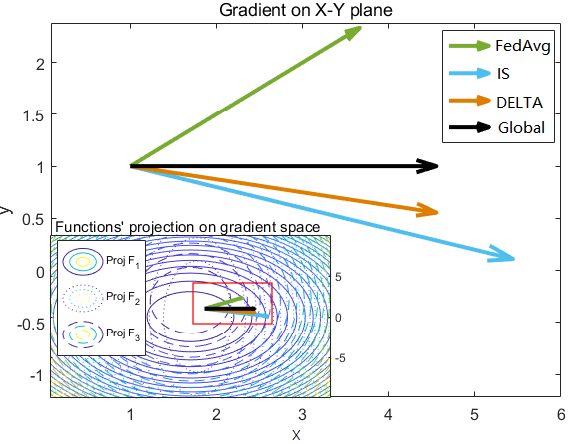}
\hspace{-.2in}
\caption{\small \textbf{Model update comparison: The closer to the ideal global update (black arrow), the better the sampling algorithm is.} The small window shows the projection of 3 clients' functions $F_1,F_2,F_3$ 
in the X-Y plane, where $\nabla F_1=(2,2), \nabla F_2=(4,1),\nabla F_3=(6,-3)$ at $(1,1)$. The enlarged image shows the aggregated gradients of FedAvg, IS, DELTA and ideal global gradient. Each algorithm samples two out of three clients: FedIS tends to select Client 2 and 3 with largeset gradient norms, DELTA tends to select Client 1 and 3 with the largest gradient diversity and FedAvg is more likely to select Client 1 and 2 compared to IS and DELTA. The complete gradient illustration with clients' gradient is shown in Figure~\ref{App toycase} in Appendix.
\looseness=-1
}
\label{example gradient}
\vspace{-2.5em}
\end{SCfigure}


\paragraph{Our contributions.}
In this paper, we propose  an efficient unbiased sampling scheme in the sense that (i) It effectively addresses the issue of excessive similar gradients without the need for additional clustering, while taking advantage of the accelerated convergence of gradient-norm-based IS and (ii) it is provable better than uniform sampling or gradient norm-based sampling. The sampling scheme is versatile and can be easily integrated with other optimization techniques, such as momentum, to improve convergence further.

As our key contributions,

\begin{itemize}[leftmargin=12pt,nosep]
\setlength{\itemsep}{3.5pt}
\item We present DELTA, an unbiased FL sampling scheme based on gradient diversity and local variance.
Our refined analysis shows that FedDELTA surpasses the state-of-the-art FedAvg in convergence rate by eliminating the $\mathcal{O}(\nicefrac{1}{T^{2/3}})$ term and a $\sigma_G^2$-related term of $\mathcal{O}(\nicefrac{1}{T^{1/2}})$.
\looseness=-1

\item 
We present a novel theoretical analysis of nonconvex FedIS, which yields a superior convergence rate compared to existing works while relying on a more lenient assumption. Moreover, our analysis eliminates the $\mathcal{O}(\nicefrac{1}{T^{2/3}})$ term of the convergence rate, in contrast to FedAvg.

\item
We present a practical algorithm for DELTA in partial participation settings, utilizing available information to mitigate the reliance on full gradients. We prove that the convergence rates of these practical algorithms can attain the same order as the theoretical optimal sampling probabilities for DELTA and IS. 
\end{itemize}

\section{Related Work}
\label{Related work}

Client sampling in federated learning (FL) can be categorized into unbiased and biased methods~\citep{fraboni2023general}. Unbiased methods, including multinomial sampling and importance sampling~\citep{li2020federated,chen2022optimal,rizk2020federated}, ensure that the expected client aggregation is equivalent to the deterministic global aggregation when all clients participate. Unlike unbiased sampling, which has received comparatively little attention, biased sampling has been extensively examined in the context of federated learning, such as selecting clients with higher loss~\citep{cho2022towards} or larger updates~\citep{ribero2020communication}.
Recently, cluster-based client selection, which involves grouping clients into clusters and sampling from these clusters, has been proposed to sample diverse clients and reduce variance~\citep{10081485,fraboni2021clustered,shen2022fast}. Nevertheless,the clustering will require extra communication and computational resources. The proposed DELTA algorithm can be seen as a muted version of a diverse client clustering algorithm without clustering operation.

While recent works~\citep{tyurin2022computation, li2022partial} have achieved comparable convergence rates to ours using variance reduction techniques, it is worth noting that these techniques are orthogonal to ours and can be easily integrated with our approach.
Although \citep{wang2022unified} achieved the same convergence rate as ours, but their method requires dependent sampling and mixing participation conditions, which can lead to security problems and exceed the communication capacity of the server. In contrast, our method avoids these issues by not relying on such conditions.
\looseness=-1

A more comprehensive discussion of the related work can be found in Appendix~\ref{app relate works}.
\looseness=-1

\section{Theoretical Analysis and An Improved FL Sampling Strategy}
\label{Sec of Analysis}
 This section presents FL preliminaries and analyzes sampling algorithms, including the convergence rate of nonconvex FedIS in Section~\ref{sec of FedIS}, improved convergence analysis for FL sampling in Section~\ref{sec of analyzing DELTA}, and proposal and convergence rate of the DELTA sampling algorithm in Section~\ref{Sec of sampling DELTA}.\looseness=-1

In FL, the objective of the global model is a sum-structured optimization problem:
\begin{small}
    \begin{align}
        \label{global objective}
        \textstyle
       f^* = \min_{x \in \mathbb{R}^d} \left[  f(x) \coloneqq \sum_{i=1}^m w_i F_i(x) \right] \, ,
    \end{align}
\end{small}%
where $F_i(x) = \EEb{\xi_i \sim D_i}{F_i(x,\xi_i)}$ represents the local objective function of client $i$ over data distribution $D_i$, and $\xi_i$ means the sampled data of client $i$. $m$ is the total number of clients and $w_i$ represents the weight of client $i$.
With partial client participation, FedAvg randomly selects $|S_t| =n$ clients ($n\leq m$) to communicate and update model. 
Then the loss function of actual participating users in each round can be expressed as: \looseness=-1
\begin{small}
    \begin{align}
        \label{standard practical loss}
        \textstyle
        f_{S_t}(x_t) = \frac{1}{n}\sum_{i\in S_t} F_i(x_t) \, .
    \end{align}
\end{small}

\begingroup
\setlength{\parskip}{3.pt plus0pt minus1.75pt}

For ease of theoretical analysis, we make the following commonly used assumptions:

\subsection{Assumptions}

\begin{assumption}[L-Smooth]
\label{Assumption 1}
There exists a constant $L>0$, such that 
$\norm{\nabla F_i(x)-\nabla F_i(y)} \leq L \norm{x-y},\forall x,y \in \mathbb{R}^d$, and $i = {1,2,\ldots,m}$.
\end{assumption}

\begin{assumption}[Unbiased Local Gradient Estimator and Local Variance]
\label{Assumprion 2}
	Let $\xi_t^i$ be a random local data sample in the round $t$ at client $i$: $\Eb{\nabla F_i(x_t,\xi_t^i)}=\nabla F_i(x_t), \forall i \in [m]$. The function $F_i(x_t,\xi_{t}^i)$ has a bounded local variance of $\sigma_{L,i}>0$, satisfying $\Eb{ \norm{ \nabla F_i(x_t,\xi_{t}^i)-\nabla F_i(x_t) }^2 } = \sigma_{L,i}^2\leq \sigma_L^2$.
\end{assumption}

\begin{assumption}[Bound Dissimilarity]
\label{Assumption 3}
There exists constants $\sigma_G\geq 0$ and $A \geq 0$ such that $\E{ \norm{ \nabla F_i(x) }^2 } \leq (A^2+1)\|\nabla f(x)\|^2 + \sigma_G^2.$ When all local loss functions are identical, $A^2=0$ and $\sigma_G^2=0$.
\end{assumption}
The above assumptions are commonly used in both non-convex optimization and FL literature, see e.g.~\citep{karimireddy2020scaffold,koloskova2020unified,wang2022unified}. 

We notice that Assumption~\ref{Assumption 3} can be further relaxed by Assumption 2 of \citep{khaled2020better}. We also provide Proposition~\ref{proposition of the relaxed assumption} in Appendix~\ref{Appendix tech} to show all our convergence analysis, including Theorem~\ref{theorem 1},\ref{theorem 2} and Corollary~\ref{Corollary practical FedIS},\ref{Corollary practical DELTA} can be easily extended to the relaxed assumption while keeping the order of convergence rate unchanged.
\looseness=-1

\endgroup


\begin{table*}[!t]
    \small
    \centering
    \vspace{-1.em} 
    \caption{\textbf{Comparison of convergence rate for different sampling algorithms:} Number of communication rounds required to reach $\epsilon$ or $\epsilon + \varphi$ ($\epsilon$ for unbiased sampling and $\epsilon + \varphi$ for biased sampling, where $\varphi$ is a non-convergent constant term) accuracy for FL.
    $\sigma_L$  is local variance bound, and $G$ bound is $E\|\nabla F_i(x) - \nabla f(x)\|^2 \leq G^2$.
    $\Gamma$ is the distance of global optimum and the average of local optimum (Heterogeneity bound), $\mu$ corresponds to $\mu$ strongly convex.
      and $\zeta_G $ is the gradient diversity.
    }
    \vspace{-0.5em} 
    \resizebox{1.\textwidth}{!}{
    \begin{threeparttable}
    \begin{tabular}{l  c  c  c l  l  l  }
    \toprule
    Algorithm & Convexity & Partial Worker & Unbiasedness & Convergence rate & Assumption\\
    \midrule 
    SGD & S/N & \checkmark & \checkmark & $\frac{\sigma_L^2}{\mu mK \epsilon}+(\frac{1}{\mu}$) / $\frac{\sigma_L^2}{mK \epsilon^2} + \frac{1}{\epsilon}$  & $\sigma_L$ bound \\
    \midrule
    \textbf{FedDELTA} & N & \checkmark & \checkmark & $\frac{\sigma_L^2}{nK\epsilon^2} + \frac{\grave{M}^2}{K\epsilon}$  & Assumption~\ref{Assumption 3} \\
    \textbf{FedPracDELTA} & N & \checkmark & \checkmark & $\frac{\tilde{U}^2\sigma_L^2}{nK\epsilon^2} + \frac{\tilde{U}^2\grave{M}^2}{K\epsilon}$  & Assumption~\ref{Assumption 3} and Assumption~\ref{Assumption 5 main} \\
    \midrule
     \textbf{FedIS} (ours)& N & \checkmark & \checkmark & $\frac{\sigma_L^2 + \color{blue}{K\sigma_G^2}}{nK\epsilon^2} + \frac{M^2}{K\epsilon}$  &    Assumption~\ref{Assumption 3} \\
     FedIS (others)~\citep{chen2022optimal} & N & \checkmark & \checkmark & $\frac{\hat{M}^2}{nK\epsilon^2}  +\frac{A^2+1}{\epsilon}+ \frac{\sigma_G}{\epsilon^{3/2}}$ & Assumption~\ref{Assumption 3} and $\rho$ Assumption\\
     FedIS (others)~\citep{9796935} & S & \checkmark & \checkmark & $\frac{\sigma_L^2+4nKG^2+6n\Gamma}{\mu^2nK\epsilon}+\frac{K^2G^2}{\epsilon}+\frac{\|w_0-w^*\|^2}{\mu K\epsilon} $& $G$ bound \\
      \textbf{FedPracIS} (ours) & N & \checkmark & \checkmark & $\frac{\sigma_L^2 + \color{blue}{KU^2\sigma_G^2}}{nK\epsilon^2} + \frac{M^2}{K\epsilon}$  &    Assumption~\ref{Assumption 3} and Assumption~\ref{Assumption 5 main}\\
     \midrule
     FedAvg~\citep{yang2021achieving} & N & \checkmark & \checkmark & $\frac{\sigma_L^2  }{nK\epsilon^2}+ \frac{4K\sigma_G^2  }{nK\epsilon^2}+\frac{\tilde{M}^2}{K\epsilon} + {\color{red}\frac{K^{1/3}\tilde{M}^2}{n^{1/3}\epsilon^{2/3}}}$
     & $G$ bound \\
     FedAvg~\citep{karimireddy2020scaffold} & N & \checkmark & \checkmark & $\frac{\hat{M}^2}{nK\epsilon^2}  +\frac{A^2+1}{\epsilon}+ \frac{\sigma_G}{\epsilon^{3/2}}$ & Assumption~\ref{Assumption 3}\\
    DivFL \citep{balakrishnan2021} & S &\checkmark & $\times$ & $\frac{1}{\epsilon} + \frac{1}{\varphi}$ & Heterogeneity Gap \\
    Power-of-Choice~\citep{cho2022towards} & S & \checkmark & $\times$ & $\frac{\sigma_L^2+G^2}{\epsilon+\varphi} + \frac{\Gamma}{\mu}$ & Heterogeneity Gap \\
    FedAvg \citep{yang2021achieving} & N & $\times$ & \checkmark & $\frac{\sigma_L^2}{mK\epsilon^2} + \frac{\sigma_L^2/(4K) + \sigma_G^2}{\epsilon} $  & $\sigma_G$ bound \\
    Arbitrary Sampling\citep{wang2022unified} & N &  Mix & \checkmark & $\frac{\zeta_G^2+(1+\sigma_L^2)n\rho}{nK\epsilon^2} + \frac{\acute{M}^2}{K\epsilon}$  & Assumption~\ref{Assumption 3} \\
    \midrule
    \end{tabular}   
          \begin{tablenotes} 
		\item $M^2=\sigma_L^2 + 4K \sigma_G^2$, $ \hat{M}^2 = \sigma_L^2 + K(1-\nicefrac{n}{m})\sigma_G^2$, $\tilde{M
         }^2=\sigma_L^2 + 6K\sigma_G^2$ , $\grave{M}^2 = \sigma_L^2 + 4K \zeta_G^2$, $\acute{M}^2 = K\zeta_G^2+K\sigma_L^2$.
        \item Convexity: S and N are abbreviations for strong convex and nonconvex, respectively. \quad $\rho$ assumption: Bound of the similarity among local gradients. 
        \item Mix participation: the number of participating clients is random, from none to full participation. 
     \end{tablenotes} 
    \end{threeparttable} }
    \label{tab:my_label}
     \vspace{-1.8em} 
\end{table*}
\looseness=-1

\begin{wrapfigure}{R}{0.5\textwidth}
\vspace{-2.5em}
\begin{minipage}{0.5\textwidth}
  \begin{algorithm}[H]\small
  \caption{\small \colorbox{lime!20}{\textbf{FedDELTA}} and \colorbox{yellow!20}{\textbf{FedPracDELTA}}: Federated learning with unbiased diverse sampling }
  \label{algorithm}
  \begin{algorithmic}[1]
  \Require{initial weights $x_0$, global learning rate $\eta$, local learning rate $\eta_l$, number of local epoch $K$, number of training rounds $T$}
  \Ensure{trained weights $x_T$}
  \For{round $t = 1, \ldots, T$}
      \myState{\small \colorbox{lime!20}{\textbf{Sampling}  clients using \textbf{DELTA}~\eqref{FedSRC-D}} }
      \myState{\small \colorbox{yellow!20}{\textbf{Sampling}  clients using \textbf{Practical DELTA}~\eqref{practical DELTA}}  }
        	\For{each worker $i \in S_t$,in parallel}
        	    \myState{$x_{t,0}^i=x_t$}
        	    \For{$k=0,\cdot\cdot\cdot,K-1$}
        	        \myState{compute $g_{t,k}^i = \nabla F_i(x_{t,k}^i,\xi_{t,k}^i)$}
        	        \myState{Local update:$x_{t,k+1}^i=x_{t,k}^i-\eta_Lg_{t,k}^i$}
        	    \EndFor
        	    \myState{Let $\Delta_t^i=x_{t,K}^i-x_{t,0}^i
              =-\eta_L\sum_{k=0}^{K-1} g_{t,k}^i$
             }
        	\EndFor
        	\myState{At Server:}
        	\myState{Receive $\Delta_t^i,i\in S_t$}
            \myState{let $\Delta_t=\frac{1}{|S_t|}\sum_{i\in S_t}\frac{n_i}{np_i^t}\Delta_t^i$}
        	\myState{Server update: $x_{t+1}=x_t+\eta\Delta_t$}
        	\myState{Broadcast $x_{t+1} $ to clients}
        \EndFor
\end{algorithmic}
\end{algorithm}
\end{minipage}
\vspace{-2.em}
\end{wrapfigure}
\looseness=-1

\subsection{Convergence Analysis of FedIS}
\label{sec of FedIS}
As discussed in the introduction, IS faces an excessive gradient similarity problem, necessitating the development of a novel diversity sampling method. 
Prior to delving into the specifics of our new sampling strategy, we first present the convergence rate of FL under standard IS analysis in this section; this analysis itself is not well explored, particularly in the nonconvex setting. The complete FedIS algorithm is provided in Algorithm~\ref{FedIS algorithm} of Appendix~\ref{app theorem1}, which differs from DELTA only in sampling probability (line 2) by using $p_i \propto \|\sum_{k=0}^{K-1}g_{t,k}^i\|$.
\looseness=-1


\begin{theorem}[Convergence rate of FedIS]
\label{theorem 1}
Let constant local and global learning rates $\eta_L$ and $\eta$ be chosen as such that $\eta_L < min\left(1/(8LK), C\right)$, where $C$ is obtained from the condition that $\frac{1}{2}-10L^2K^2(A^2+1)\eta_L^2-\frac{L^2\eta K(A^2+1)}{2n}\eta_L>0$ ,and $\eta \leq 1/(\eta_LL)$. In particular, suppose $\eta_L=\mathcal{O}\left(\frac{1}{\sqrt{T}KL}\right)$ and $\eta=\mathcal{O}\left(\sqrt{Kn}\right)$, under Assumptions~\ref{Assumption 1}-\ref{Assumption 3},
the expected gradient norm of FedIS algorithm~\ref{FedIS algorithm} will be bounded as follows: \looseness=-1
\begin{small}
\begin{align}
\textstyle
\min \limits_{t\in[T]} \E\|\nabla f(x_t)\|^2\leq 
\mathcal{O}\left(\frac{ f^0-f^*}{\sqrt{nKT}}\right) \!+\! \underbrace{\mathcal{O}\left(\frac{\sigma_L^2+K\sigma_G^2}{\sqrt{nKT}}\right) \!+ \! \mathcal{O}\left(\frac{M^2}{T}\right) \!}_{\text{order of} \ \Phi} \, .
\end{align}
\end{small}%
where $T$ is the total communication round, $K$ is the total local epoch times, $f^0=f(x_0)$, $f^* = f(x_*)$, $M = \sigma_L^2 + 4K\sigma_G^2$ and the expectation is over the local dataset samples among clients.
\end{theorem}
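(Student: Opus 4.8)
The plan is to follow the standard descent-lemma-plus-unbiasedness template for nonconvex FL analysis, but carefully tracking the importance-sampling weights so that they cancel in expectation. First I would establish the one-round progress: using $L$-smoothness (Assumption~\ref{Assumption 1}) on $f$ and the server update $x_{t+1} = x_t + \eta \Delta_t$, write
\begin{small}
\begin{align*}
\Eb{f(x_{t+1})} \le f(x_t) + \eta \Eb{\lin{\nabla f(x_t), \Delta_t}} + \frac{L\eta^2}{2}\Eb{\norm{\Delta_t}^2}.
\end{align*}
\end{small}
The key observation is that with sampling probabilities $p_i^t \propto \norm{\sum_k g_{t,k}^i}$ and the reweighting $\Delta_t = \frac{1}{n}\sum_{i\in S_t}\frac{n_i}{n p_i^t}\Delta_t^i$, the estimator is unbiased: $\Eb{\Delta_t \mid x_t} = -\eta_L \sum_i w_i \sum_k \Eb{g_{t,k}^i}$, so the inner-product term reduces to the usual $-\eta\eta_L K \norm{\nabla f(x_t)}^2$ plus a ``local drift'' term measuring how far $x_{t,k}^i$ has moved from $x_t$ during local steps.

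Second, I would bound the two error terms. The local drift $\Eb{\norm{x_{t,k}^i - x_t}^2}$ is controlled by the standard recursion for FedAvg-type local updates, invoking Assumptions~\ref{Assumprion 2} and~\ref{Assumption 3} to get a bound of order $\eta_L^2 K^2 (A^2+1)\norm{\nabla f(x_t)}^2 + \eta_L^2 K^2 \sigma_G^2 + \eta_L^2 K \sigma_L^2$, valid under the stepsize restriction $\eta_L < 1/(8LK)$. The variance term $\Eb{\norm{\Delta_t}^2}$ is where importance sampling matters: I would decompose it into the ``bias-corrected mean'' squared plus the sampling variance, and show that the sampling variance of $\frac{1}{n}\sum_{i\in S_t}\frac{n_i}{np_i^t}(-\eta_L\sum_k g_{t,k}^i)$ is minimized (hence well-controlled) by the chosen $p_i^t$; crucially, even bounding it crudely by $\sum_i w_i \norm{\sum_k g_{t,k}^i}^2$-type quantities and then applying Assumption~\ref{Assumption 3} gives the $\frac{\sigma_L^2 + K\sigma_G^2}{n}$ scaling with the $1/n$ from partial participation. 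Combining, the $\norm{\nabla f(x_t)}^2$ coefficient is $-\eta\eta_L K(\tfrac12 - 10L^2K^2(A^2+1)\eta_L^2 - \tfrac{L^2\eta K(A^2+1)}{2n}\eta_L)$, which the theorem's hypothesis forces to be negative, so it can be moved to the left-hand side.

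Third, I would telescope over $t = 1,\dots,T$, divide by the accumulated coefficient of $\min_t \Eb{\norm{\nabla f(x_t)}^2}$, and substitute $\eta_L = \Theta(1/(\sqrt{T}KL))$, $\eta = \Theta(\sqrt{Kn})$. This makes the initial-gap term $\frac{f^0 - f^*}{\eta\eta_L K T} = \mathcal{O}(\frac{f^0-f^*}{\sqrt{nKT}})$, the variance term $\mathcal{O}(\frac{\sigma_L^2 + K\sigma_G^2}{\sqrt{nKT}})$, and the residual drift/higher-order terms collapse to $\mathcal{O}(M^2/T)$ with $M = \sigma_L^2 + 4K\sigma_G^2$, yielding $\Phi$.

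The main obstacle I anticipate is the variance-term bound for the importance-sampling estimator: unlike uniform sampling, $\Delta_t$ involves the ratio $\frac{n_i}{np_i^t}$, and naively these weights can be large, so I must exploit the specific form $p_i^t \propto \norm{\sum_k g_{t,k}^i}$ to show $\sum_i \frac{w_i^2}{p_i^t}\norm{\sum_k g_{t,k}^i}^2 = (\sum_i w_i \norm{\sum_k g_{t,k}^i})^2$, and then relate $\norm{\sum_k g_{t,k}^i}$ back to $\norm{\nabla f(x_t)}^2$, $\sigma_L^2$, and $\sigma_G^2$ via Jensen, Assumption~\ref{Assumprion 2}, and Assumption~\ref{Assumption 3} — this step is where the improved $\sigma_G^2$-dependence (and the removal of the $\mathcal{O}(1/T^{2/3})$ term seen in FedAvg analyses) comes from, and it requires handling the coupling between the random sampling set $S_t$ and the stochastic gradients carefully, conditioning on the gradients first before taking the sampling expectation.
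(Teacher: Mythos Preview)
Your proposal is correct and follows essentially the same approach as the paper: the descent lemma on $f$, the split into the inner-product term $A_1$ and the second-moment term $A_2 = \Eb{\norm{\Delta_t}^2}$, the local-drift bound (the paper's Lemma~\ref{Our local update bound}) to control $A_1$, and the decomposition of $A_2$ into the sampling variance $\mathop{\text{Var}}(\tfrac{1}{mp_i^t}\hat g_i^t)$ plus a full-participation term are all exactly what the paper does in Appendix~\ref{app theorem1}. The ``main obstacle'' you flag---substituting the IS probabilities to get $\sum_i \tfrac{w_i^2}{p_i^t}\norm{\hat g_i^t}^2 = (\sum_i w_i\norm{\hat g_i^t})^2$ and then invoking Assumption~\ref{Assumption 3} to extract $K^2\sigma_G^2 + K^2(A^2+1)\norm{\nabla f(x_t)}^2$ (which is absorbed into the left-hand side via the stepsize condition)---is precisely the step the paper isolates in the subsection following the generic bound, and is indeed what removes the $\mathcal{O}(1/T^{2/3})$ term.
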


The FedIS sampling probability $p_i^t$ is determined by minimizing the variance of convergence with respect to $p_i^t$.  The variance term $\Phi$ is:
\begin{small}
    \begin{equation}
    \label{equation of phi}
    \Phi =
     \frac{5\eta_L^2KL^2}{2}M^2 + \frac{\eta\eta_LL}{2m}\sigma_L^2 + \frac{L\eta\eta_L}{2nK}\mathop{\text{Var}}(\frac{1}{mp_i^t}\hat{g}_i^t), 
\end{equation}
\end{small}%
where  $\mathop{\text{Var}}(\nicefrac{1}{(mp_i^t)}\hat{g}_i^t)$ is called \emph{update variance}. 
By optimizing the update variance, we get the sampling probability FedIS:
\begin{small}
\begin{align}
    p_i^t = \frac{\|\hat{g}_i^t\|}{\sum_{j=1}^m \|\hat{g}_j^t\|} \, ,
    \label{sampling probability FedIS}
\end{align}
\end{small}
where $\hat{g}_i^t=  \sum_{k=0}^{K-1} \nabla F_i(x_{k,t}^i,\xi_{k,t}^i)$ is the sum of the gradient updates of multiple local updates. 
The proof details of Theorem~\ref{theorem 1} and derivation of sampling probability FedIS are detailed in Appendix~\ref{app theorem1} and Appendix~\ref{App FedSRC-G}.

\begin{remark}[Explanation for the convergence rate]
It is worth mentioning that although a few works provide the convergence upper bound of FL with gradient-based sampling, several limitations exist in these analyses and results: \\
1) \citep{rizk2020federated, luo2022tackling} analyzed FL with IS using a strongly convex condition, whereas we extended the analysis to the non-convex problem. \\
2) Our analysis results, compared to the very recent non-convex analysis of FedIS~\citep{chen2022optimal} and FedAvg, remove the term $\mathcal{O}(T^{-\frac{2}{3}})$, although all these works choose a learning rate of $\mathcal{O}(T^{-\frac{1}{2}})$. Thus, our result achieves a tighter convergence rate when we use $\mathcal{O}(1/T + 1/T^{2/3})$ (provided by \citep{patel2022towards}) as our lower bound of convergence (see Table~\ref{tab:my_label}).\\
The comparison results in Table~\ref{tab:my_label} reveal that even when $\sigma_G$ is large and becomes a dependency term for convergence rate, FedIS (ours) is still better than FedAvg and FedIS (others) since our result reduces the coefficient of $\sigma_G$ in the dominant term $\mathcal{O}(T^{-\frac{1}{2}})$.
\end{remark}

\begin{remark}[Novelty of our FedIS analysis]
     Despite the existence of existing convergence analysis of partial participant FL~\citep{yang2021achieving,reddi2020adaptive}, including FedIS that builds on this analysis~\citep{luo2022tackling,chen2020optimal}, none of them take full advantage of the nature of unbiased sampling, and thus yield an imprecise upper bound on convergence. To tighten the FedIS upper bound, we first derive a tighter convergence upper bound for unbiased sampling FL. By adopting uniform sampling for unbiased probability, we achieve a tighter FedAvg convergence rate. Leveraging this derived bound, we optimize convergence variance using IS. 
\end{remark}     
Compared with existing unbiased sampling FL works, including FedAvg and FedIS (others), our analysis on FedIS entails:
(1) \textbf{A tighter Local Update Bound Lemma:} We establish Lemma~\ref{Our local update bound} using Assumption~\ref{Assumption 3}, diverging from the stronger assumption $\|\nabla F_i(x_t))-\nabla f(x_t)\|^2 \leq \sigma_G^2$ (used in~\citep{yang2021achieving,reddi2020adaptive}), and the derived Lemma~\ref{Our local update bound} achieves a tighter upper bound than other works (Lemma 4 in \citep{reddi2020adaptive}, Lemma 2 in \citep{yang2021achieving}).
(2) \textbf{A tighter upper bound on aggregated model updates $E\|\Delta_t\|^2$:} By fully utilizing the nature of unbiased sampling, we convert the bound analysis of $A_2=E\|\Delta_t\|^2$ equally to a bound analysis of participant variance $V\left(\frac{1}{m p_i^t} \hat{g}_i^t\right)$ and aggregated model update with full user participation. In contrast, instead of exploring the property the unbiased sampling,  \citep{reddi2020adaptive} repeats to use Lemma 4 and \citep{yang2021achieving} uses Lemma 2 for bound $A_2$. This inequality transform imposes a loose upper bound for $A_2$,  resulting in a convergence variance term determined by $\eta_L^3$, which reacts to the rate order being $\mathcal{O}(T^{-\frac{ 2}{3}})$.
(3) \textbf{Relying on a more lenient assumption:} Beyond the aforementioned analytical improvement, our IS analysis obviates the necessity for unusual assumptions in other FedIS analysis such as Mix Participation \citep{luo2022tackling} and $\rho$-Assumption \citep{chen2020optimal}.

\begin{remark}[Extending FedIS to practical algorithm]
The existing analysis of IS algorithms~\citep{luo2022tackling,chen2022optimal} relies on information from full clients, which is not available in partial participation FL. We propose a practical algorithm for FedIS that only uses information from available clients and provide its convergence rate in Corollary~\ref{Corollary practical FedIS} in Section~\ref{practical algorithm}.
\end{remark}

Despite its success in reducing the variance term in the convergence rate, FedIS is far from optimal due to issues with high gradient similarity and the potential for further minimizing the variance term (i.e., the global variance $\sigma_G$ and local variance $\sigma_L$ in $\Phi$).
In the next section, we will discuss how to address this challenging variance term.

\subsection{An Improved Convergence Analysis for FedDELTA} 
\label{sec of analyzing DELTA}
\begingroup
\setlength{\parskip}{3.pt plus0pt minus1.75pt}
FedIS and FedDELTA have different approaches to analyzing objectives, with FedIS analyzing the global objective and FedDELTA analyzing a surrogate objective $\tilde{f}(x)$ (cf. \eqref{surrogeta objective}). This leads to different convergence variance and sampling probabilities between the two methods. A flowchart (Figure~\ref{analysis flow} in Appendix~\ref{App theorem2}) has been included to illustrate the differences between FedIS and FedDELTA.
\looseness=-1

\paragraph{The limitations of FedIS.}
As shown in Figure~\ref{client selection comparision}, IS may have excessive similar gradient selection. The variance $\Phi$ in~\eqref{equation of phi} reveals that the standard IS strategy can only control the update variance $\mathop{\text{Var}}(\nicefrac{1}{(mp_i^t)}\hat{g}_i^t$, leaving other terms in $\Phi$, namely $\sigma_L$ and $\sigma_G$, untouched. Therefore, the standard IS is ineffective at addressing the excessive similar gradient selection problem, motivating the need for a new sampling strategy to address the issue of $\sigma_L$ and $\sigma_G$.
\looseness=-1

\paragraph{The decomposition of the global objective.}
As inspired by the proof of Theorem~\ref{theorem 1} as well as the corresponding Lemma~\ref{lemma1} (stated in Appendix) proposed for unbiased sampling, 
the gradient of global objective can be decomposed into the gradient of surrogate objective $\tilde{f}(x_t)$ and update gap, 
\begin{align}
    \E{\norm{ \nabla f(x_t) }^2} = \E{ \norm{ \nabla \tilde{f}_{S_t}(x_t) }^2 } + \chi_t^2 \,,
    \label{update gap equation}
\end{align}
where $\chi_t = \E{\norm{\nabla \tilde{f}_{S_t}(x_t) - \nabla f(x_t)}} $ is the update gap.

Intuitively, the surrogate objective represents the practical objective of the participating clients in each round, while the update gap $\chi_t$ represents the distance between partial client participation and full client participation. The convergence behavior of the update gap $\chi_t^2$ is analogous to the update variance in $\Phi$, and the convergence of the surrogate objective $\E{ \norm{ \nabla \tilde{f}_{S_t}(x_t) }^2 }$ depends on the other variance terms in $\Phi$, namely the local variance and global variance.
\looseness=-1

Minimizing the surrogate objective allows us to further reduce the variance of convergence, and we will focus on analyzing surrogate objective below.
We first formulate the surrogate objective with an arbitrary unbiased sampling probability.
\looseness=-1

\paragraph{Surrogate objective formulation.}  
The expression of the surrogate objective relies on the property of IS. In particular, IS aims to substitute the original sampling distribution $p(z)$ with another arbitrary sampling distribution $q(z)$ while keeping the expectation unchanged: $\EEb{q(z)}{ F_i(z) } = \EEb{p(z)}{ \nicefrac{q_i(z)}{p_i(z)}F_i(z) }$.
According to the Monte Carlo method, when $q(z)$ follows the uniform distribution, we can estimate $\EEb{q(z)}{ F_i(z) }$ by $\nicefrac{1}{m}\sum_{i=1}^m F_i(z)$ and $\EEb{p(z)}{ \nicefrac{q_i(z)}{p_i(z)}F_i(z) } $ by $\nicefrac{1}{n}\sum_{i\in S_t}\nicefrac{1}{mp_i} F_i(z)$, where $m$ and $|S_t|=n$ are the sample sizes.

Based on IS property, we formulate the surrogate objective:
    \begin{small}
    \begin{align}
        \label{surrogeta objective}
        \textstyle
        \tilde{f}_{S_t}(x_t) = \frac{1}{n}\sum_{i\in S_t}\frac{1}{mp_i^t} F_i(x_t) \,,
    \end{align}
\end{small}%
where $m$ is the total number of clients, $|S_t|=n$ is the number of participating clients in each round, and $p_t^i$ is the probability that client $i$ is selected at round $t$.


As noted in Lemma~\ref{lemma2} in the appendix, we have:\footnote{With slight abuse of notation, we use the $\tilde{f}(x_t)$ for $\tilde{f}_{S_t}(x_t)$ in this paper. }: 
\begin{align}
    \min _{t \in[T]} \E\|\nabla f(x_t)\|^2 =\min _{t \in[T]}  \E\|\nabla \tilde{f}(x_t)\|^2+\E\|\chi_t^2\|  
    \leq \min _{t \in[T]} 2\E\|\nabla \tilde{f}(x_t)\|^2  \,.
    \label{from surrogate to global}
\end{align}
Then the convergence rate of the global objective can be formulated as follows:

\begin{theorem}[Convergence upper bound of FedDELTA]
\label{theorem 2}
Under Assumption \ref{Assumption 1}--\ref{Assumption 3} and let local and global learning rates $\eta$ and $\eta_L$ satisfy $\eta_L<\nicefrac{1}{(2\sqrt{10K}L\sqrt{\frac{1}{n}\sum_{l=1}^m\frac{1}{mp_l^t}})}$ and $\eta\eta_L\leq \nicefrac{1}{KL}$, the minimal gradient norm will be bounded as below:
\begin{small}
\begin{equation}
\textstyle
\min_{t \in[T]} \E\left\|\nabla f\left(x_{t}\right)\right\|^{2} \leq \frac{f^0-f^*}{c \eta \eta_{L} K T}+\frac{ \tilde{\Phi}}{c} \,,
\end{equation}
\end{small}%
where $f^0=f(x_0)$, $f^* = f(x_*)$, $c$ is a constant, and the expectation is over the local dataset samples among all workers.
The combination of variance $\tilde{\Phi}$ represents combinations of local variance and client gradient diversity.
\end{theorem}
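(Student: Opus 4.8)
The plan is to run the standard descent-lemma machinery of \citet{yang2021achieving} but applied to the surrogate objective $\tilde f$ rather than the global objective $f$, exploiting the two reductions already in hand: equation~\eqref{from surrogate to global}, which lets us analyze $\min_t \E\|\nabla\tilde f(x_t)\|^2$ in place of $\min_t\E\|\nabla f(x_t)\|^2$ up to a factor of $2$, and the improved Local Update Bound Lemma (Lemma~\ref{Our local update bound}) derived for Theorem~\ref{theorem 1} under Assumption~\ref{Assumption 3}. First I would start from $L$-smoothness of $f$ (Assumption~\ref{Assumption 1}) and write the one-round descent inequality for $f(x_{t+1})-f(x_t)$ in terms of the server update $\Delta_t=\frac{1}{n}\sum_{i\in S_t}\frac{1}{mp_i^t}\Delta_t^i$; taking expectation over the sampling $S_t$ and using unbiasedness of the importance-sampling estimator, the cross term becomes an inner product involving $\nabla\tilde f(x_t)$ (this is exactly where the surrogate objective surfaces — the reweighting $\frac{1}{mp_i^t}$ makes $\E_{S_t}\Delta_t$ proportional to a gradient-like quantity of $\tilde f$, not of $f$).

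Next I would split the resulting expression into (i) a term $-c\,\eta\eta_L K\,\E\|\nabla\tilde f(x_t)\|^2$ providing descent, (ii) a bias/drift term controlled by the Local Update Bound Lemma, which bounds $\sum_k\E\|x_{t,k}^i-x_t\|^2$ by a multiple of $\eta_L^2 K^2$ times $(A^2+1)\|\nabla f(x_t)\|^2+\sigma_G^2$ plus the local-variance contribution $\sigma_L^2$, and (iii) the second-moment term $\E\|\Delta_t\|^2$. For (iii), as in the FedIS analysis I would decompose $\E_{S_t}\|\Delta_t\|^2$ into the variance of the reweighted single-client update plus the squared mean; the variance piece is where $\frac{1}{n}\sum_l\frac{1}{mp_l^t}$ enters (hence the learning-rate restriction $\eta_L<1/(2\sqrt{10K}L\sqrt{\frac1n\sum_l\frac{1}{mp_l^t}})$), and after applying Assumptions~\ref{Assumprion 2}--\ref{Assumption 3} it collapses into local-variance and gradient-diversity terms. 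Collecting everything, telescoping over $t=0,\dots,T-1$, dividing by $c\eta\eta_L K T$, and using \eqref{from surrogate to global} to pass back from $\tilde f$ to $f$, gives the claimed bound with the leading $\frac{f^0-f^*}{c\eta\eta_L K T}$ term and a residual $\tilde\Phi/c$ that packages the local variance and the gradient-diversity quantity $\zeta_G$ (consistent with $\grave M^2=\sigma_L^2+4K\zeta_G^2$ in Table~\ref{tab:my_label}).

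The main obstacle is step (ii)--(iii): one must be careful that the $\|\nabla f(x_t)\|^2$ factors appearing in the drift and second-moment bounds (through Assumption~\ref{Assumption 3}) are absorbed into the descent term rather than left dangling — this is precisely what forces the conditions on $\eta_L$ (the $\frac12-10L^2K^2(A^2+1)\eta_L^2-\dots>0$ type condition) and determines the constant $c$. A secondary subtlety is keeping the analysis on the surrogate objective consistent: the descent is written for $f$ (which is what we ultimately bound) but the negative term we extract is $\E\|\nabla\tilde f(x_t)\|^2$, so the conversion identity \eqref{update gap equation}/\eqref{from surrogate to global} must be invoked at exactly the right moment so that the update-gap $\chi_t^2$ does not need to be bounded separately — it is folded into the $f$-to-$\tilde f$ factor of $2$. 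Once the learning rates are tuned so all $\|\nabla f(x_t)\|^2$ remainders have nonnegative coefficient, the rest is routine telescoping and bookkeeping to identify $\tilde\Phi$.
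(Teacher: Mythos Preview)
Your high-level plan (telescope a one-round descent inequality, then invoke \eqref{from surrogate to global}) matches the paper, but two concrete steps are misidentified and would not produce the stated $\tilde\Phi$.

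\textbf{Descent is taken on $\tilde f$, not on $f$.} You propose to apply $L$-smoothness to $f$ and then claim the cross term yields $-\E\|\nabla\tilde f(x_t)\|^2$ because ``the reweighting $\frac{1}{mp_i^t}$ makes $\E_{S_t}\Delta_t$ proportional to a gradient-like quantity of $\tilde f$.'' This is backwards: unbiasedness (Lemma~\ref{lemma1}) gives $\E_{S_t}\Delta_t$ proportional to the \emph{full} average, i.e.\ to $\nabla f$, so starting from $f$ you recover $-\|\nabla f(x_t)\|^2$ as the negative term --- exactly the FedIS analysis. Since $\|\nabla f(x_t)\|^2\le\E\|\nabla\tilde f(x_t)\|^2$ (Lemma~\ref{lemma2}), you cannot trade a negative $\|\nabla f\|^2$ for a negative $\E\|\nabla\tilde f\|^2$. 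The paper instead applies smoothness to the \emph{random} function $\tilde f_{S_t}$ conditionally on $S_t$, which gives $-\|\nabla\tilde f(x_t)\|^2$ directly; taking the $S_t$-expectation afterwards turns $\E_{S_t}\tilde f(x_{t+1})$ and $\E_{S_t}\tilde f(x_t)$ into $f(x_{t+1})$ and $f(x_t)$ for telescoping, while the positive $\|\nabla f(x_t)\|^2$ contributed by the drift bound is absorbed into $-\E\|\nabla\tilde f(x_t)\|^2$ via Lemma~\ref{lemma2} (this is the right direction). Crucially, working conditionally on $S_t$ is what makes the drift term in $A_1$ carry the weights $\frac{1}{mp_i^t}$; if you take the $S_t$-expectation first (as your route does), the $p$-dependence vanishes from $A_1$ and you cannot obtain $\tilde\Phi=\sum_l\frac{1}{p_l^t}(\sigma_{L,l}^2+4K\zeta_{G,l}^2)\cdot(\ldots)$.

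\textbf{Wrong local-update lemma.} You invoke Lemma~\ref{Our local update bound}, which bounds the drift by $\sigma_G^2$. The DELTA proof needs the per-client version, Lemma~\ref{local update bound of DELTA}, which keeps $\zeta_{G,i}^2=\|\nabla F_i(x_t)-\nabla f(x_t)\|^2$ for each $i$ \emph{before} the sampling expectation. Only then does the $S_t$-expectation produce $\sum_l\frac{1}{m^2p_l^t}\zeta_{G,l}^2$, i.e.\ the gradient-diversity term that defines $\tilde\Phi$ and is later minimized in $p$ to obtain the DELTA probabilities. Using Lemma~\ref{Our local update bound} you would land back on $\sigma_G^2$ with no $p$-dependence, recovering FedIS's $\Phi$ rather than $\tilde\Phi$.
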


We derive the convergence rates for both sampling with replacement and sampling without replacement.
For sampling without replacement:
\begin{small}
    \begin{align}
    \label{tildephi}
    \textstyle
        \tilde{\Phi} = \frac{5L^2K\eta_L^2}{2mn}\sum_{i=1}^m\frac{1}{p_i^t}(\sigma_{L,i}^2+4K\zeta_{G,i,t} ^2)+\frac{L\eta_L\eta}{2n}\sum_{i=1}^m\frac{1}{m^2p_i^t}\sigma_{L,i}^2 \, .
    \end{align}
\end{small}%
For sampling with replacement, 
\begin{small}
    \begin{align}
    \textstyle
        \label{new Phi}
        \tilde{\Phi} = \frac{5L^2K\eta_L^2}{2m^2}\sum_{i=1}^m\frac{1}{p_i^t}(\sigma_{L,i}^2+4K\zeta_{G,i,t}^2)+\frac{L\eta_L\eta}{2n}\sum_{i=1}^m\frac{1}{m^2p_i^t}\sigma_{L,i}^2 \, ,
    \end{align}
\end{small}%
where $\zeta_{G,i,t}=\| \nabla F_i(x_t)-\nabla f(x_t)\|$ and let $ \zeta_G$ be a upper bound for all $i$, i.e., $\zeta_{G,i,t}\leq \zeta_G$. The proof details of Theorem~\ref{theorem 2} can be found in Appendix~\ref{App theorem2}.
\looseness=-1

\begin{remark}[The novelty of DELTA analysis]
    IS focuses on minimizing $V\left(\frac{1}{m p_i^t} \hat{g}_i^t\right)$ in convergence variance $\Phi$ (Eq. (4)), while leaving other terms like $\sigma_L$ and $\sigma_G$ unreduced. Unlike IS roles to reduce the update gap, we propose analyzing the surrogate objective for additional variance reduction.

Compared with FedIS, our analysis of DELTA entails:
\textbf{Focusing on surrogate objective, introducing a novel Lemma and bound:}
    (1) we decompose global objective convergence into surrogate objective and update gap~\eqref{update gap equation}. For surrogate objective analysis, we introduce Lemma~\ref{local update bound of DELTA} to bound local updates. 
    (2) leveraging the unique surrogate objective expression and Lemma~\ref{local update bound of DELTA}, we link sampling probability with local variance and gradient diversity, deriving novel upper bounds for $A_1$ and $A_2$. 
    (3) by connecting update gap's convergence behavior to surrogate objective through Definition~\ref{objective gap} and Lemma~\ref{lemma2}, along with \eqref{update gap equation}, we establish $\tilde{\Phi}$ as the new global objective convergence variance.
\textbf{Optimizing convergence variance through novel $\tilde{\Phi}$:}
FedIS aims to reduce the update variance term $V(\frac{1}{(mp_i^t)}\hat{g}_i^t)$ in $\Phi$, while FedDELTA aims to minimize the entire convergence variance $\tilde{\Phi}$, which is composed of both gradient diversity and local variance. By minimizing $\tilde{\Phi}$, we get the sampling method DELTA, which further reduces the variance terms of $\Phi$ that cannot be minimized through IS.

\end{remark}

\subsection{Proposed Sampling Strategy: DELTA}
\label{Sec of sampling DELTA}

The expression of the convergence upper bound suggests that utilizing sampling to optimize the convergence variance can accelerate the convergence. Hence, we can formulate an optimization problem that minimizes the variance $\tilde{\Phi}$ with respect to the proposed sampling probability $p_i^t$:

\begin{small}
\begin{align} 
\textstyle
    \mathop {\min }\limits_{p_i^t}\tilde{\Phi}    \quad \text{s.t.} \quad \sum_{i=1}^m p_i^t=1 \,,
\end{align}
\end{small}%
where $\tilde{\Phi}$ is a linear combination of local variance $\sigma_{L,i}$ and gradient diversity $\zeta_{G,i,t}$ (cf.\ Theorem~\ref{theorem 2}).

\begin{corollary}[Optimal sampling probability of DELTA]
\label{DELTA corollary}
By solving the above optimization problem, the optimal sampling probability is determined as follows: 
\begin{small}
    \begin{align} \label{FedSRC-D}
    \begin{split}
        \textstyle
        p_i^t =\frac{\sqrt{\alpha_1 \zeta_{G,i,t}^2 + \alpha_2 \sigma_{L,i}^2}}{\sum_{j=1}^m \sqrt{\alpha_1 \zeta_{G,j,t}^2 + \alpha_2 \sigma_{L,j}^2}} 
    \end{split} \,,
    \end{align}
\end{small}%
where $\alpha_1$ and $\alpha_2$ are constants defined as $\alpha_1=20K^2L\eta_L$ and $\alpha_2 =5KL\eta_L+\frac{\eta}{n} $.
\end{corollary}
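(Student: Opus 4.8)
The plan is to recognize the objective $\tilde\Phi$ (taking the sampling-with-replacement form~\eqref{new Phi}, which is the one matching the stated $\alpha_1,\alpha_2$) as a positively weighted sum of reciprocals of the sampling probabilities, and then solve the resulting convex program in closed form. First I would pool the two sums in~\eqref{new Phi} over the common index $i$ and factor out the $p$-independent constant $\frac{L\eta_L}{2m^2}$:
\begin{equation}
\tilde\Phi = \frac{L\eta_L}{2m^2}\sum_{i=1}^m \frac{1}{p_i^t}\Big(20K^2L\eta_L\,\zeta_{G,i,t}^2 + \big(5KL\eta_L+\tfrac{\eta}{n}\big)\sigma_{L,i}^2\Big) = \frac{L\eta_L}{2m^2}\sum_{i=1}^m \frac{c_i}{p_i^t},
\end{equation}
where $c_i \coloneqq \alpha_1\zeta_{G,i,t}^2 + \alpha_2\sigma_{L,i}^2 \ge 0$ with $\alpha_1 = 20K^2L\eta_L$ and $\alpha_2 = 5KL\eta_L+\frac{\eta}{n}$. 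Since the leading factor is a positive constant independent of $\{p_i^t\}$, minimizing $\tilde\Phi$ subject to $\sum_i p_i^t=1$ is equivalent to minimizing $\sum_i c_i/p_i^t$ over the open simplex $\{p^t : p_i^t>0,\ \sum_i p_i^t = 1\}$.

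Next I would solve $\min\sum_i c_i/p_i^t$ s.t. $\sum_i p_i^t=1$. The map $p\mapsto \sum_i c_i/p_i^t$ is strictly convex on the positive orthant and diverges as any $p_i^t\to 0^+$, so the constrained minimizer is the unique interior KKT point. Forming the Lagrangian $\mathcal L = \sum_i c_i/p_i^t + \lambda\big(\sum_i p_i^t - 1\big)$ and setting $\partial\mathcal L/\partial p_i^t = -c_i/(p_i^t)^2 + \lambda = 0$ gives $p_i^t = \sqrt{c_i/\lambda}$; imposing $\sum_i p_i^t = 1$ fixes $\sqrt\lambda = \sum_j\sqrt{c_j}$, hence $p_i^t = \sqrt{c_i}\big/\sum_j\sqrt{c_j}$. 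Equivalently, one can avoid multipliers and invoke Cauchy--Schwarz: $\big(\sum_i\sqrt{c_i}\big)^2 = \big(\sum_i \sqrt{c_i/p_i^t}\,\sqrt{p_i^t}\big)^2 \le \big(\sum_i c_i/p_i^t\big)\big(\sum_i p_i^t\big) = \sum_i c_i/p_i^t$, with equality exactly when $c_i/(p_i^t)^2$ is constant in $i$, i.e.\ $p_i^t\propto\sqrt{c_i}$.

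Substituting $c_i = \alpha_1\zeta_{G,i,t}^2 + \alpha_2\sigma_{L,i}^2$ back in yields
\begin{equation}
p_i^t = \frac{\sqrt{\alpha_1\zeta_{G,i,t}^2 + \alpha_2\sigma_{L,i}^2}}{\sum_{j=1}^m \sqrt{\alpha_1\zeta_{G,j,t}^2 + \alpha_2\sigma_{L,j}^2}},
\end{equation}
which is exactly~\eqref{FedSRC-D}. The analogous computation starting from the without-replacement variance~\eqref{tildephi} differs only in the $p$-independent prefactor (and $\alpha_2$ would carry a different constant), so it produces probabilities of the same structural form. This argument is essentially routine; the only place requiring genuine care is the bookkeeping in the first step — correctly merging the two sums and extracting the common factor so that the bracketed weight collapses to $\alpha_1\zeta_{G,i,t}^2 + \alpha_2\sigma_{L,i}^2$ — together with the observation that this common factor, although it depends on $\eta,\eta_L,K,L$, does not affect the $\argmin$, so the optimal sampling rule is the stated scale-free ratio.
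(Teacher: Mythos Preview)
Your proposal is correct and follows essentially the same route as the paper: rewrite $\tilde\Phi$ as a positive constant times $\sum_i c_i/p_i^t$ with $c_i=\alpha_1\zeta_{G,i,t}^2+\alpha_2\sigma_{L,i}^2$, then solve the constrained minimization via KKT (the paper states the problem and invokes the closed-form KKT solution without spelling out the Lagrangian). Your derivation is in fact more explicit than the paper's, and your added Cauchy--Schwarz argument and the remark that the $p$-independent prefactor is irrelevant to the $\argmin$ are nice touches that the paper omits.
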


\begin{remark}
    We note that a tension exists between the optimal sampling probability~\eqref{FedSRC-D} and the setting of partial participation for FL. Thus, we also provide a practical implementation version for DELTA and analyze its convergence in Section~\ref{practical algorithm}. In particular, we will show that the convergence rate of the practical implementation version keeps the same order with a coefficient difference. 
\end{remark}

\begin{corollary}[Convergence rate of FedDELTA]
    Let $\eta_L=\mathcal{O}\left(\frac{1}{\sqrt{T}KL}\right)$, $\eta=\mathcal{O}\left(\sqrt{Kn}\right)$ and substitute the optimal sampling probability~\eqref{FedSRC-D} back to $\tilde{\Phi}$. Then for sufficiently large T, the expected norm of DELTA algorithm~\ref{algorithm} satisfies:
\begin{small}
    \begin{align}
    \begin{split}
    \textstyle
        \min_{t \in[T]} \E\|\nabla f(x_t)\|^2 \leq \mathcal{O}\left(\frac{f^0-f^*}{\sqrt{nKT}}\right) + 
        \underbrace{ \mathcal{O}\left(\frac{\sigma_L^2}{\sqrt{nKT}}\right) + \mathcal{O}\left(\frac{\sigma_L^2  + 4K\zeta_{G}^2}{KT}\right)}_{\text{order of } \tilde{\Phi}}
    \end{split} \, .
    \end{align}
\end{small}%
\end{corollary}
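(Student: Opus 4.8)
The plan is to chain Theorem~\ref{theorem 2} with Corollary~\ref{DELTA corollary} and then track orders under the prescribed learning-rate schedule. I would start from the bound $\min_{t\in[T]}\E{\|\nabla f(x_t)\|^2}\le \frac{f^0-f^*}{c\eta\eta_L KT}+\frac{\tilde\Phi}{c}$ of Theorem~\ref{theorem 2}, in its sampling-with-replacement form, i.e.\ with $\tilde\Phi$ as in~\eqref{new Phi}. The key algebraic observation is that $\tilde\Phi$ has the shape $\sum_{i=1}^m c_i^t/p_i^t$ with $c_i^t=\frac{L\eta_L}{2m^2}\big(\alpha_1\zeta_{G,i,t}^2+\alpha_2\sigma_{L,i}^2\big)$ and $\alpha_1,\alpha_2$ as in Corollary~\ref{DELTA corollary}. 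Minimizing $\sum_i c_i^t/p_i^t$ over the simplex $\{p_i^t\ge 0,\ \sum_i p_i^t=1\}$ gives, by Cauchy--Schwarz (equivalently a one-line Lagrange-multiplier computation), the value $\big(\sum_{i=1}^m\sqrt{c_i^t}\big)^2$, attained exactly at $p_i^t\propto\sqrt{c_i^t}$, which is precisely~\eqref{FedSRC-D}. Substituting therefore replaces $\tilde\Phi$ by this closed form.

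Next I would coarsen the closed form. Using the uniform bounds $\sigma_{L,i}\le\sigma_L$ and $\zeta_{G,i,t}\le\zeta_G$ from Assumptions~\ref{Assumprion 2}--\ref{Assumption 3}, one has $c_i^t\le\frac{L\eta_L}{2m^2}\big(\alpha_1\zeta_G^2+\alpha_2\sigma_L^2\big)$ for every $i$, hence $\big(\sum_i\sqrt{c_i^t}\big)^2\le\frac{L\eta_L}{2}\big(\alpha_1\zeta_G^2+\alpha_2\sigma_L^2\big)$, i.e.\ $\tilde\Phi=\mathcal{O}\!\big(L^2K\eta_L^2\sigma_L^2+L\eta_L\eta\,\sigma_L^2/n+L^2K^2\eta_L^2\zeta_G^2\big)$. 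The remaining step is bookkeeping: insert $\eta_L=\mathcal{O}(1/(\sqrt{T}KL))$ and $\eta=\mathcal{O}(\sqrt{Kn})$ and simplify term by term. One finds $\eta\eta_LKT=\Theta(\sqrt{KnT}/L)$, so the optimization-gap term is $\mathcal{O}\big((f^0-f^*)/\sqrt{nKT}\big)$; the middle term $L\eta_L\eta\,\sigma_L^2/n$ becomes $\mathcal{O}(\sigma_L^2/\sqrt{nKT})$; and the two $\eta_L^2$ terms collapse to $\mathcal{O}\big((\sigma_L^2+4K\zeta_G^2)/(KT)\big)$, which is exactly the asserted rate.

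The main obstacle --- and the reason the statement hedges with ``for sufficiently large $T$'' --- is verifying that the learning-rate admissibility conditions of Theorem~\ref{theorem 2}, namely $\eta_L<1/\big(2\sqrt{10K}\,L\,\sqrt{\tfrac1n\sum_{l}\tfrac{1}{mp_l^t}}\big)$ and $\eta\eta_L\le 1/(KL)$, survive the substitution of the optimal $p_i^t$. Under~\eqref{FedSRC-D} the factor $\tfrac1n\sum_l\tfrac{1}{mp_l^t}$ is not uniformly bounded: it is controlled by the spread of the $\sqrt{c_l^t}$'s (roughly $\max_l\sqrt{c_l^t}/\min_l\sqrt{c_l^t}$), which can be large. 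Hence $\eta_L$ cannot be held fixed; instead one uses that $\eta_L=\mathcal{O}(1/\sqrt T)\to 0$ and $\eta\eta_L=\mathcal{O}(\sqrt{n/T})\to 0$, so both constraints hold once $T$ exceeds a threshold depending on $L$, $K$, $n$ and that spread, and simultaneously the constant $c$ of Theorem~\ref{theorem 2} stays bounded away from $0$ (again because $\eta\eta_LL$ is then small). Apart from this bounded-$T$-dependent caveat, the derivation is the routine substitution sketched above; I would also note that running the same argument with the without-replacement $\tilde\Phi$ of~\eqref{tildephi} changes only absolute constants (and $\eta/n$ in $\alpha_2$ to $\eta/m$), leaving the stated order intact.
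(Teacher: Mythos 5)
Your proposal is correct and takes essentially the same route as the paper: substitute the optimal probability \eqref{FedSRC-D} into $\tilde{\Phi}$ (the paper does this in its appendix, obtaining exactly your closed form $\frac{L\eta_L}{2}\bigl(\frac{1}{m}\sum_{l}\sqrt{\alpha_1\zeta_{G,l}^2+\alpha_2\sigma_{L,l}^2}\bigr)^2$), coarsen with $\zeta_{G,i,t}\le\zeta_G$ and $\sigma_{L,i}\le\sigma_L$, and insert $\eta_L=\mathcal{O}(1/(\sqrt{T}KL))$, $\eta=\mathcal{O}(\sqrt{Kn})$ into Theorem~\ref{theorem 2}. The only (minor) divergence is how the step-size admissibility under the dynamic $p_i^t$ is handled: you let $\eta_L\to 0$ for sufficiently large $T$, while the paper's remark bounds the factor $\frac{1}{m^2}\sum_i 1/p_i^t$ by a constant via bounded gradients; both resolve the same point.
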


\textbf{Difference between FedDELTA and FedIS.} 
The primary distinction between FedDELTA and FedIS lies in the difference between $\tilde{\Phi}$ and $\Phi$.
FedIS aims to decrease the update variance term $\mathop{\text{Var}}(\nicefrac{1}{(mp_i^t)}\hat{g}_i^t)$ in $\Phi$, while FedDELTA aims to reduce the entire quantity $\tilde{\Phi}$, which is composed of both gradient diversity and local variance. By minimizing $\tilde{\Phi}$, we can further reduce the terms of $\Phi$ that cannot be minimized through FedIS. This leads to different expressions for the optimal sampling probability.
The difference between the two resulting update gradients is discussed in Figure~\ref{example gradient}.
Additionally, as seen in Table~\ref{tab:my_label}, FedDELTA achieves a superior convergence rate of $\mathcal{O}(\nicefrac{G^2}{\epsilon^2})$ compared to other unbiased sampling algorithms.
\looseness=-1

\textbf{Compare DELTA with uniform sampling.}
According to the Cauchy-Schwarz inequality, DELTA is at least better than uniform sampling by reducing variance: 
$
    \frac{\tilde{\Phi}_{\text{uniform}}}{\tilde{\Phi}_{\text{DELTA}}} = \frac{m\sum_{i=1}^m\left({\sqrt{\alpha_1\sigma_L^2 +\alpha_2 \zeta_{G,i,t}^2}}\right)^2}{\left(\sum_{i=1}^m{\sqrt{\alpha_1\sigma_L^2 +\alpha_2 \zeta_{G,i,t}^2}}\right)^2} \geq 1 \,.
$
This implies that DELTA does reduce the variance, especially when $\frac{\left(\sum_{i=1}^m{\sqrt{\alpha_1\sigma_L^2 +\alpha_2 \zeta_{G,i,t}^2}}\right)^2}{\sum_{i=1}^m\left({\sqrt{\alpha_1\sigma_L^2 +\alpha_2 \zeta_{G,i,t}^2}}\right)^2} \lll m$.
\looseness=-1

\textbf{The significance of DELTA.}
(1) DELTA is the first unbiased sampling algorithm, to the best of our knowledge, that considers both gradient diversity and local variance in sampling, accelerating convergence. 
(2) 
Developing DELTA inspires an improved convergence analysis by focusing on the surrogate objective, leading to a superior convergence rate for FL.
(3) Moreover, DELTA can be seen as an unbiased version with the complete theoretical justification for the existing heuristic or biased diversity sampling algorithm of FL, such as~\citep{balakrishnan2021}.
\looseness=-1

\section{FedPracDELTA and FedPracIS: The Practical Algorithms} 
\label{practical algorithm}


The gradient-norm-based sampling method necessitates the calculation of the full gradient in every iteration~\citep{elvira2021advances, zhao2015stochastic}. However, acquiring each client's gradient in advance is generally impractical in FL.
To overcome this obstacle, we leverage the gradient from the previous participated  round to estimate the gradient of the current round, thus reducing computational resources~\citep{rizk2020federated}.
\looseness=-1

For FedPracIS, at round 0, all probabilities are set to $\nicefrac{1}{m}$. Then, during the~$i_{th}$ iteration, once participating clients $i \in S_t$ have sent the server their updated gradients, the sampling probabilities are updated as follows:
\begin{small}
\begin{align}
\label{practical FedIS}
    p_{i,t+1}^* = \frac{\|\hat{g}_{i,t}\|}{\sum_{i \in S_t} \|\hat{g}_{i,t}\|} (1 - \sum_{i \in S_t^c} p_{i,t}^*) \,,
\end{align}
\end{small}
where the multiplicative factor ensures that all probabilities sum to 1. The FedPracIS algorithm 
is shown in Algorithm~\ref{FedIS algorithm} of Appendix~\ref{app theorem1}. 

For FedPracDELTA, we use the average of the latest participated clients' gradients to approximate the true gradient of the global model. For local variance, it is obtained by the local gradient's variance over local batches. Specifically,  $\zeta_{G,i,t} = \|\hat{g}_{i,t}-\nabla \hat{f}(x_t)\|$, where $\nabla \hat{f}(x_t) = \frac{1}{n}\sum_{i \in S_t} \hat{g}_{i,t}= \frac{1}{n} \sum_{i\in S_t}\sum_{k=0}^{K-1} \nabla F_i(x_{k,t}^i,\xi_{k,t}^i)$ and $\sigma_{L,i}^2=\frac{1}{|B|}\sum_{b\in B}( \hat{g}_{i,t}^b-\frac{1}{|B|}\sum_{b\in B} \hat{g}_{i,t}^b)^2$, where $b\in B$ is the local data batch. 
Then the  sampling probabilities are updated as follows:

\begin{small}
\begin{align}
    \label{practical DELTA}
    p_{i, t+1}^* = \frac{\sqrt{\alpha_1\zeta_{G,i,t}^2+\alpha_2\sigma_{L,i}^2}}{\sum_{i\in S_t}\sqrt{\alpha_1\zeta_{G,i,t}^2+\alpha_2\sigma_{L,i}^2}}(1-\sum_{j\in S_t^c}p_{i,t}^*) \, .
\end{align}
\end{small}
The FedPracDELTA algorithm is shown in Algorithm~\ref{algorithm}.
Specifically, for $\alpha$, the default value is 0.5, whereas $\zeta_G$ and $\sigma_L$ can be implemented by computing the locally obtained gradients.
\begin{assumption}[Local gradient norm bound]
\label{Assumption 5 main}
The gradients $\nabla F_i(x)$ are uniformly upper bounded (by a constant $G>0$)
$\|\nabla F_i(x) \|^2 \leq G^2 ,\forall i.$
\end{assumption}
Assumption~\ref{Assumption 5 main} is a general assumption in IS community to bound the gradient norm~\citep{zhao2015stochastic,elvira2021advances,katharopoulos2018not}, and it is also used in the FL community to analyze convergence~\citep{balakrishnan2021,zhang2020fedpd}.
This assumption tells us a useful fact that will be used later: $\|\nabla F_i(x_{t,k},\xi_{t,k}) / \nabla F_i(x_{s,k},\xi_{s,k})\| \leq U$. While, for DELTA, the assumption used is a more relaxed version of Assumption~\ref{Assumption 5 main}, namely, $\E\|\nabla F_i(x) - \nabla f(x)\|^2 \leq G^2$ (further details are provided in Appendix~\ref{app convergence of practical algorithm}).
\looseness=-1

\begin{corollary}[Convergence rate of FedPracIS] 
Under Assumption~\ref{Assumption 1}-\ref{Assumption 5 main}, the expected norm of FedPracIS will be bounded as follows:
\begin{small}
\begin{align}
\begin{split}
\textstyle
\min \limits_{t\in[T]} E\|\nabla f(x_t)\|^2
\leq 
\mathcal{O}\left(\frac{ f^0-f^*}{\sqrt{nKT}}\right) \!+\! \mathcal{O}\left(\frac{\sigma_L^2}{\sqrt{nKT}}\right) \!+ \! \mathcal{O}\left(\frac{M^2}{T}\right) \!+ \! \mathcal{O}\left(\frac{KU^2\sigma_{G,s}^2}{\sqrt{nKT}} \right)
\end{split} \, ,
\end{align}
\end{small}%
where $M = \sigma_L^2 + 4K\sigma_{G,s}^2$,  $\sigma_{G,s}$ is the gradient dissimilarity bound of round $s$, and $\|\nabla F_i(x_{t,k},\xi_{t,k}) /  \nabla F_i(x_{s,k},\xi_{s,k})\| \leq U$ for all $i$ and $k$.
\label{Corollary practical FedIS}
\end{corollary}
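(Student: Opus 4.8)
The plan is to follow the same convergence-analysis skeleton used for Theorem~\ref{theorem 1} (FedIS with the optimal, full-gradient-based probability~\eqref{sampling probability FedIS}), and quantify the additional error incurred by replacing the current-round probability $p_i^t$ with the stale surrogate $p_{i,t+1}^*$ from~\eqref{practical FedIS}. First I would recall that in the proof of Theorem~\ref{theorem 1} the only place the sampling probability enters is through the update-variance term $\mathop{\text{Var}}(\tfrac{1}{mp_i^t}\hat g_i^t)$ in $\Phi$ (cf.~\eqref{equation of phi}); unbiasedness of the estimator $\Delta_t = \tfrac{1}{n}\sum_{i\in S_t}\tfrac{n_i}{np_i^t}\Delta_t^i$ is preserved as long as the $p_{i,t}^*$ are a valid probability vector summing to $1$, which the multiplicative correction factor $(1-\sum_{i\in S_t^c}p_{i,t}^*)$ guarantees. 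So the practical algorithm is still an unbiased sampling scheme, and the descent-lemma bookkeeping ($A_1$, $A_2$ in the Theorem~\ref{theorem 1} proof, Lemma~\ref{Our local update bound}, Lemma~\ref{lemma1}) goes through verbatim with $p_i^t$ replaced by $p_{i,t}^*$.

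Next I would control the update-variance term under the stale probabilities. The key mechanism is Assumption~\ref{Assumption 5 main}: since $\|\nabla F_i(x)\|^2\le G^2$ for all $i$, the ratio of a gradient at round $t$ to the gradient at the last participation round $s$ satisfies $\|\nabla F_i(x_{t,k},\xi_{t,k})/\nabla F_i(x_{s,k},\xi_{s,k})\|\le U$, hence $\|\hat g_{i,t}\| \le U\,\|\hat g_{i,s}\|$ and, comparing the probability~\eqref{practical FedIS} built from round-$s$ gradients against the ideal~\eqref{sampling probability FedIS} built from round-$t$ gradients, we get $p_{i,t}^* \ge \tfrac{1}{U^2}\,p_i^{t,\text{ideal}}$ up to the normalization factor (which is itself $\Theta(1)$ since the complement-mass term is bounded). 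Plugging this two-sided comparison into $\mathop{\text{Var}}(\tfrac{1}{mp_{i,t}^*}\hat g_i^t)\le U^2\,\mathop{\text{Var}}(\tfrac{1}{mp_i^{t,\text{ideal}}}\hat g_i^t)$ and then bounding the ideal update variance exactly as in Theorem~\ref{theorem 1}, the $\sigma_G^2$-dependent part of the variance picks up a $U^2$ factor and the dissimilarity bound becomes the round-$s$ quantity $\sigma_{G,s}^2$, which is precisely why the extra term $\mathcal{O}\!\left(\tfrac{KU^2\sigma_{G,s}^2}{\sqrt{nKT}}\right)$ appears while the $\sigma_L^2$ term and the $\mathcal{O}(M^2/T)$ term retain their Theorem~\ref{theorem 1} form.

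Finally I would substitute the learning-rate choices $\eta_L=\mathcal{O}(1/(\sqrt{T}KL))$, $\eta=\mathcal{O}(\sqrt{Kn})$ into the resulting recursion, telescope over $t=1,\dots,T$, divide by $c\eta\eta_L KT$, and collect terms to obtain the stated bound, with $M=\sigma_L^2+4K\sigma_{G,s}^2$. I expect the main obstacle to be the probability-comparison step: making the inequality $p_{i,t}^* \gtrsim U^{-2} p_i^{t,\text{ideal}}$ rigorous requires simultaneously (i) controlling the normalization factor $(1-\sum_{i\in S_t^c}p_{i,t}^*)$, which mixes stale probabilities of non-participating clients that may themselves be several rounds old, and (ii) handling the worst case where a client has not been sampled for many rounds, so that "round $s$" is far in the past — one must argue (e.g. by a uniform lower bound on each $p_{i,t}^*$ coming from the initialization $1/m$ and the structure of the update, or by absorbing the staleness into the constant $U$) that the ratio bound still holds with a round-independent $U$. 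Once that comparison is in hand, the rest is a routine re-run of the Theorem~\ref{theorem 1} computation.
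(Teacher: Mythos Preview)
Your overall plan---reuse the Theorem~\ref{theorem 1} skeleton, note that unbiasedness is preserved so only the update-variance term $\mathop{\text{Var}}(\tfrac{1}{mp_i}\hat g_i^t)$ in $\Phi$ changes, bound that term under stale probabilities, then substitute $\eta_L=\mathcal{O}(1/(\sqrt{T}KL))$, $\eta=\mathcal{O}(\sqrt{Kn})$---is exactly the paper's approach.

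The one substantive difference is \emph{how} you bound the variance. You go via a probability comparison $p_{i,t}^*\gtrsim U^{-2}\,p_i^{t,\text{ideal}}$ and then invoke the ideal-variance bound. As you yourself flag, this inequality needs a \emph{two-sided} control of $\|\hat g_i^t\|/\|\hat g_i^s\|$ (you need $\sum_j\|\hat g_j^t\|\gtrsim U^{-1}\sum_j\|\hat g_j^s\|$ for the denominators), and Assumption~\ref{Assumption 5 main} only gives the upper direction. The paper sidesteps this entirely by \emph{direct substitution}: drop the centering, write
\[
\mathop{\text{Var}}\Bigl(\tfrac{1}{mp_i^s}\hat g_i^t\Bigr)\;\le\;\tfrac{1}{m^2}\sum_i\tfrac{\|\hat g_i^t\|^2}{p_i^s}\;=\;\tfrac{1}{m^2}\Bigl(\textstyle\sum_j\|\hat g_j^s\|\Bigr)\sum_i\tfrac{\|\hat g_i^t\|^2}{\|\hat g_i^s\|}\;\le\;\tfrac{U^2}{m^2}\Bigl(\textstyle\sum_i\|\hat g_i^s\|\Bigr)^2,
\]
which uses only the one-sided bound $\|\hat g_i^t\|\le U\|\hat g_i^s\|$, and then applies Assumption~\ref{Assumption 3} at round $s$ to reach $K^2U^2\sigma_{G,s}^2+K^2U^2(A^2+1)\|\nabla f(x_s)\|^2$ (the $\|\nabla f\|^2$ piece is absorbed into the left-hand side as in the step~\eqref{equation with new eta_L}). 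This also explains why the bound is phrased in terms of $\sigma_{G,s}$ rather than $\sigma_{G,t}$: the direct route lands on round-$s$ gradient norms, whereas your route (were the comparison to go through) would naturally produce $\sigma_{G,t}$. So the obstacle you identified is real for your chosen mechanism, but it disappears if you bound the variance by direct substitution rather than by comparison to the ideal probability.
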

\looseness=-1

\begin{corollary}[Convergence rate of FedPracDELTA]
Under Assumption~\ref{Assumption 1}-\ref{Assumption 5 main}, the expected norm of FedPracDELTA satisfies:
\begin{small}
\begin{align}
    \begin{split}
    \textstyle
        \min_{t \in[T]} \E\|\nabla f(x_t)\|^2 \leq 
         \mathcal{O}\left(\frac{f^0-f^*}{\sqrt{nKT}}\right) 
        + 
     \mathcal{O}\left(\frac{\tilde{U}^2\sigma_{L,s}^2}{\sqrt{nKT}}\right) + \mathcal{O}\left(\frac{\tilde{U}^2\sigma_{L,s}^2  + 4K\tilde{U}^2\zeta_{G,s}^2}{KT}\right)
    \end{split} \, ,
\end{align}
\end{small}
where $\tilde{U}$ is a constant that $\|\nabla F_i(x_t)-\nabla f(x_t)\| / \| \nabla F_i(x_s)-\nabla f(x_s)\| \leq \tilde{U}_1\leq \tilde{U}$ and $\|\sigma_{L,t}/\sigma_{L,s}\|\leq \tilde{U}_2\leq \tilde{U}$, and $\zeta_{G,s}$ is the gradient diversity bound of round s for all clients.
\label{Corollary practical DELTA}
\end{corollary}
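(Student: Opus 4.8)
The plan is to treat \textbf{FedPracDELTA} as an instance of the generic unbiased‑sampling analysis already developed for \textbf{FedDELTA} (Theorem~\ref{theorem 2}), and then pay a bounded ``staleness price'' for replacing the oracle quantities $\zeta_{G,i,t},\sigma_{L,i,t}$ at round $t$ by the quantities $\zeta_{G,i,s},\sigma_{L,i,s}$ measured at client $i$'s most recent participation round $s=s(i,t)$. Write $a_{i,r}\coloneqq\sqrt{\alpha_1\zeta_{G,i,r}^2+\alpha_2\sigma_{L,i,r}^2}$, so that the ideal probability~\eqref{FedSRC-D} is $p_i^t=a_{i,t}/\sum_j a_{j,t}$ while the practical update~\eqref{practical DELTA} makes $p_{i,t+1}^*$ proportional to $a_{i,s}$ up to the renormalisation factor $\rho_t\coloneqq 1-\sum_{j\in S_t^c}p_{j,t}^*\in(0,1]$. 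The first step is to observe that the aggregate estimator $\Delta_t=\frac1{|S_t|}\sum_{i\in S_t}\frac{n_i}{np_i^t}\Delta_t^i$ with $p_i^t$ replaced by $p_{i,t}^*$ is still unbiased for the surrogate objective $\tilde f$, since $\{p_{i,t}^*\}_{i=1}^m$ is a valid distribution on $[m]$; hence the decomposition~\eqref{update gap equation}, Lemma~\ref{lemma2}, and the whole chain leading to Theorem~\ref{theorem 2} carry over verbatim with $p_i^t$ replaced by $p_{i,t}^*$, yielding $\min_t\E\|\nabla f(x_t)\|^2\le\frac{f^0-f^*}{c\eta\eta_L KT}+\tilde\Phi^{*}/c$, where $\tilde\Phi^{*}$ is~\eqref{tildephi}--\eqref{new Phi} with every $1/p_i^t$ replaced by $1/p_{i,t}^*$. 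One also rechecks that the step‑size constraint $\eta_L<1/(2\sqrt{10K}L\sqrt{\frac1n\sum_l 1/(mp_l^{*})})$ remains compatible with $\eta_L=\mathcal{O}(1/(\sqrt T KL))$, which follows from the ratio bound of the next step since it keeps $\sum_l 1/(mp_l^{*})$ within a constant factor of $\sum_l 1/(mp_l^t)$.

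The second step is the staleness estimate. Assumption~\ref{Assumption 5 main} (for \textbf{FedPracDELTA}, its relaxed form $\E\|\nabla F_i(x)-\nabla f(x)\|^2\le G^2$), together with the definitions of $\tilde U_1,\tilde U_2$, bounds the ratios $\zeta_{G,i,t}/\zeta_{G,i,s}$ and $\sigma_{L,i,t}/\sigma_{L,i,s}$ above by $\tilde U_1,\tilde U_2$; applying the same reasoning with the two rounds interchanged bounds them below, so $a_{i,t}/a_{i,s}\in[\tilde U^{-1},\tilde U]$ uniformly in $i$ with $\tilde U=\max(\tilde U_1,\tilde U_2)$. The factors $\rho_t$ only preserve the probability‑vector property, and an induction on $t$ shows every coordinate stays bounded away from $0$ (because each $a_{i,\cdot}$ lies, up to the constants $\alpha_1,\alpha_2$, between the uniform envelopes set by $\sigma_L$ and $\zeta_G$), whence $1/p_{i,t}^*\le\tilde U^2\,(1/p_i^t)$ for all $i,t$ — this non‑degeneracy is exactly what makes the bound finite. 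Substituting it into $\tilde\Phi^{*}$ and then plugging the \emph{ideal} $p_i^t$ from Corollary~\ref{DELTA corollary} back in reproduces the Cauchy--Schwarz step $\big(\sum_i a_{i,t}\big)^2\le m\sum_i a_{i,t}^2$ used to prove the convergence‑rate corollary for FedDELTA, except each surviving term now carries an extra $\tilde U^2$; bounding the residual current‑round quantities by $\sigma_{L,i,t}^2\le\tilde U^2\sigma_{L,s}^2$ and $\zeta_{G,i,t}^2\le\tilde U^2\zeta_{G,s}^2$ and redefining $\tilde U$ to absorb the finitely many compounded constants gives $\tilde\Phi^{*}$ of the same shape as $\tilde\Phi$ but with $\sigma_L^2,\zeta_G^2$ replaced by $\tilde U^2\sigma_{L,s}^2,\tilde U^2\zeta_{G,s}^2$.

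The last step is mechanical: choosing $\eta_L=\mathcal{O}(1/(\sqrt T KL))$ and $\eta=\mathcal{O}(\sqrt{Kn})$ and inserting $\tilde\Phi^{*}$ into the bound of Step~1, the terms group exactly as for FedDELTA, giving $\mathcal{O}\big((f^0-f^*)/\sqrt{nKT}\big)+\mathcal{O}\big(\tilde U^2\sigma_{L,s}^2/\sqrt{nKT}\big)+\mathcal{O}\big(\tilde U^2(\sigma_{L,s}^2+4K\zeta_{G,s}^2)/(KT)\big)$, which is the claimed rate.

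\textbf{The main obstacle} is Step~2. The ``last participation round'' $s=s(i,t)$ is itself random and client‑dependent, and~\eqref{practical DELTA} updates the distribution recursively through $\rho_t$, so one must argue \emph{uniformly over $t$} that no coordinate $p_{i,t}^*$ collapses toward $0$ — otherwise $1/p_{i,t}^*$, and with it $\tilde\Phi^{*}$, is unbounded. This uniform non‑degeneracy, and the single clean constant $\tilde U$ in the statement, is precisely what Assumption~\ref{Assumption 5 main} (resp.\ its relaxation) provides; once it is in hand the rest is a re‑run of the FedDELTA argument.
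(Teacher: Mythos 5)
Your proposal is correct and takes essentially the same route as the paper: reuse the Theorem~\ref{theorem 2} surrogate-objective bound with the practical (stale) sampling probability, control the staleness of $\zeta_{G,i}$ and $\sigma_{L,i}$ via the $\tilde{U}$ ratio bounds from (the relaxed) Assumption~\ref{Assumption 5 main}, finish with Cauchy--Schwarz and the choices $\eta_L=\mathcal{O}(1/(\sqrt{T}KL))$, $\eta=\mathcal{O}(\sqrt{Kn})$. The only differences are minor: the paper bounds $\sum_i (R_i^t/R_i^s)^2 R_i^s\sum_j R_j^s$ directly, picking up a single factor $\tilde{U}^2$, whereas your detour through the ideal round-$t$ probability compounds the ratio (effectively $\tilde{U}^4$), which you rightly absorb into the constant; and the recursive renormalization/non-degeneracy issue you highlight as the main obstacle is not treated explicitly in the paper, which handles it only implicitly through the importance-sampling lower bound $p_i\ge\gamma>0$ used for FedPracIS.
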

\looseness=-1

\begin{remark}
    The analysis of the FedPracIS and FedPracDELTA is independent of the unavailable information in the partial participation setting.
    The convergence rates are of the same order as that of our theoretical algorithm but with an added coefficient constant term that limits the gradient changing rate, as shown in Table~\ref{tab:my_label}.
\end{remark}

The complete derivation and discussion of the practical algorithm can be found in Appendix~\ref{app convergence of practical algorithm}.
\looseness=-1

\section{Experiments}
\label{experiment}

In this section, we evaluate the efficiency of the theoretical algorithm FedDELTA and the practical algorithm FedPracDELTA on various datasets. Our code is available at \href{https://github.com/L3030/DELTA_FL}{https://github.com/L3030/DELTA\_FL}.

\paragraph{Datasets.
}
(1) We evaluate FedDELTA on synthetic data and split-FashionMNIST. The synthetic data follows $y = \log \left( \nicefrac{(A_i x - b_i)^2}{2} \right)$ and "split" means letting $10\%$ of clients own $90\%$ of the data.
(2) We evaluate FedPracDELTA on non-iid FashionMNIST, CIFAR-10 and LEAF~ \citep{caldas2018leaf}. 
Details of data generation and partitioning are provided in Appendix~\ref{app experiment setup}.
\looseness=-1

\paragraph{Baselines and Models.}
We compare our algorithm, Fed(Prac)DELTA (Algorithm~\ref{algorithm}), with Fed(Prac)IS (Algorithm~\ref{FedIS algorithm} in Appendix~\ref{app theorem1}), FedAVG~\citep{mcmahan2017communication}, which uses random sampling, and Power-of-choice~\citep{cho2022towards}, which uses loss-based sampling and Cluster-based IS~\citep{shen2022fast}. 
We utilize the regression model on synthetic date, the CNN model on Fashion-MNIST and Leaf,  and the ResNet-18 on CIFAR-10. All algorithms are compared under the same experimental settings, such as lr and batch size.
Full details of the sampling process of baselines and the setup of experiments are provided in Appendix~\ref{app experiment setup}.

The maximum values reported in Table~\ref{acc of real dataset} are observed during the last 4\% of rounds, where these algorithms have already reached convergence. The term 'maximum five accuracies' refers to the mean of the five highest accuracy values obtained within the plateau region of the accuracy curve. 
\looseness=-1


\paragraph{Figure~\ref{Performance of synthetic data} illustrates the theoretical FedDELTA outperforms other biased and unbiased methods in convergence speed on synthetic datasets.} The superiority of the theoretical DELTA is also confirmed on split-FashionMNIST, as shown in Appendix~\ref{App experi} in Figure~\ref{split FEMNIST accuracy}. Additional experimental results, which include a range of different choices of regression parameters $A_i,b_i$, noise $\nu$, and client numbers, are presented in Figure~\ref{Performance of different algorithms on noisy quadratic model Appendix}, Figure~\ref{synthetic with small noise}, and Figure~\ref{synthetic with 200 clients} in Appendix~\ref{app additional experiments}. 
\looseness=-1




\begin{figure*}[!t]
 \centering
 \vspace{-1.em}
 \subfigure[$\nu = 20$]{ \includegraphics[width=.3\textwidth,]{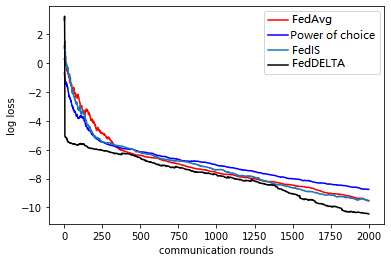}\label{fig:a}}
 \subfigure[$\nu = 30$]{ \includegraphics[width=.3\textwidth,]{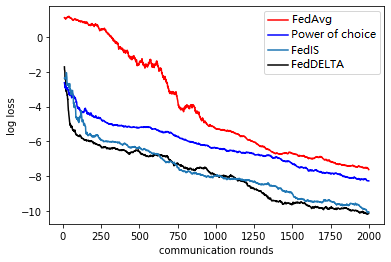}\label{fig:b}}
 \subfigure[$\nu = 40$]{ \includegraphics[width=.3\textwidth,]{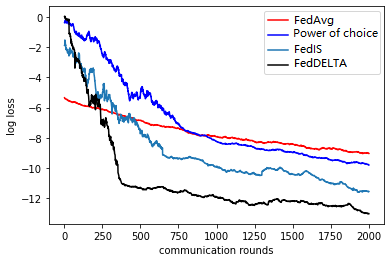}\label{fig:c}}
 \vspace{-.5em}
 \caption{\small \textbf{Performance of different algorithms on the regression model.} The loss is calculated by $f(x, y) = \norm{y - log(\nicefrac{(A_i x - b_i)^2}{2})}^2$, $A_i=10$, $b_i=1$. The logarithm of global loss is reported for various degrees of gradient noise, $\nu$, and all methods are well-tuned to yield the best results for each algorithm under each setting.
 }
 \label{Performance of synthetic data}
 \vspace{-.5em}
\end{figure*}
\looseness=-1

\begin{table}[!t]
 \centering
 \vspace{-0.em}
 \caption{\small \textbf{Performance of algorithms over various datasets.} We run 500 communication rounds on FashionMNIST, CIFAR-10, FEMNIST, and CelebA for each algorithm. We report the mean of maximum 5 accuracies for test datasets and the average number of communication rounds and time to reach the threshold accuracy.
 }
 \label{acc of real dataset}
 \vspace{-.em}
    \resizebox{1.\textwidth}{!}{
  \begin{tabular}{l c c c c c c c c c c}
   \toprule
   \multirow{2}{*}{Algorithm} & \multicolumn{3}{c}{FashionMNIST }& \multicolumn{3}{c}{CIFAR-10} \\
   \cmidrule(lr){2-4} \cmidrule(lr){5-7} 
            & Acc (\%) & Rounds for 70\% & Time (s) for 70\% & Acc (\%) & Rounds for 54\% & Time (s) for 54\%     \\
   \midrule
   FedAvg     & 70.35\small {\transparent{0.5}±0.51}  & 426 (1.0$\times$) &  1795.12 (1.0$\times$) & 54.28\small {\transparent{0.5}±0.29} & 338 (1.0$\times$) &3283.14 (1.0$\times$) \\
   Cluster-based IS   &71.21 \small {\transparent{0.5}±0.24}  & 362 (1.17$\times$)&  1547.41 (1.16$\times$) & 54.83\small {\transparent{0.5}±0.02} &323 (1.05$\times$) &3188.54 (1.03$\times$) \\ 
   FedPracIS        &71.69\small {\transparent{0.5}±0.43} & 404 (1.05$\times$) &1719.26 (1.04$\times$)  & 55.05\small {\transparent{0.5}±0.27} & 313 (1.08$\times$) & 3085.05 (1.06$\times$)  \\
   FedPracDELTA        & \textbf{72.10\small {\transparent{0.5}±0.49}}  & \textbf{322 (1.32$\times$)} & \textbf{1372.33 (1.31$\times$)}  & \textbf{55.20 \scriptsize{\transparent{0.5}±0.26}} & \textbf{303 (1.12$\times$)} & \textbf{2989.98 (1.1$\times$)} \\ 
   \toprule
   \multirow{2}{*}{Algorithm} & \multicolumn{3}{c}{FEMNIST } & \multicolumn{3}{c}{CelebA}\\
   \cmidrule(lr){2-4} \cmidrule(lr){5-7} 
            & Acc (\%) & Rounds for 70\% & Time (s) for 70\% & Acc (\%) & Rounds for 85\% & Time (s) for 85\% \\
   \midrule
   FedAvg    & 71.82\small {\transparent{0.5}±0.93} & 164 (1.0$\times$) & 330.02 (1.0$\times$)& 85.92\small {\transparent{0.5}±0.89} & 420 (1.0$\times$) & 3439.81 (1.0$\times$)\\
   Cluster-based IS & 70.42\small {\transparent{0.5}±0.66} & 215 (0.76$\times$)   & 453.56 (0.73$\times$)& 86.77\small {\transparent{0.5}±0.11} & 395 (1.06$\times$)     & 3474.50 (1.01$\times$)\\
   FedPracIS   &80.11\small {\transparent{0.5}±0.29} & 110 (1.51$\times$) & 223.27 (1.48$\times$) &88.12\small {\transparent{0.5}±0.71}   & 327 (1.28$\times$)  & 2746.82 (1.25$\times$)   \\
   FedPracDELTA                      & \textbf{81.44\small {\transparent{0.5}±0.28}} &\textbf{98 (1.67$\times$)}  & \textbf{198.95 (1.66$\times$)}& \textbf{89.67 \small {\transparent{0.5}±0.56}} & \textbf{306 (1.37$\times$)} & \textbf{2607.12 (1.32$\times$)} \\ 
   \bottomrule
  \end{tabular}
  }
  \vspace{-0.6em}
\end{table}


\paragraph{Table~\ref{acc of real dataset} shows the FedPracDELTA has better performance in accuracy, communication rounds, and training wall-clock times.}
Notably, FedPracDELTA significantly accelerates convergence by requiring fewer training rounds and less time to achieve the threshold accuracy in FashionMNIST, CIFAR-10, FEMNIST, and CelebA. Additionally, on the natural federated dataset LEAF (FEMNIST and CelebA), our results demonstrate that both FedPracDELTA and FedPracIS exhibit substantial improvements over FedAvg. 
Figure~\ref{accuracy of practical algorithm on FEMNIST} in Appendix~\ref{app additional experiments} illustrates the superior convergence of FedPracDELTA, showcasing the accuracy curves of sampling algorithms on FEMNIST.
\looseness=-1

\paragraph{Table~\ref{sampling integrate others in FEMNIST} demonstrates that when compatible with momentum or proximal regularization, our method keeps its superiority in convergence.} We combine various optimization methods such as proximal regularization~\citep{li2018federated}, momentum~\citep{9003425}, and VARP~\citep{jhunjhunwala2022fedvarp} with sampling algorithms to assess their performance on FEMNIST and FashionMNIST. Additional results for proximal and momentum on CIFAR-10, and for VARP on FashionMNIST, are available in Table~\ref{acc of pro and mom} and Table~\ref{acc FedVARP} in Appendix~\ref{app additional experiments}.
\looseness=-1

\begin{table*}[!t]
 \small
 \centering
 \vspace{-.em}
 \caption{\small
  \textbf{\small Performance of sampling algorithms integration with other optimization methods on FEMNIST.} PracIS and PracDELTA are the sampling methods of Algorithm FedPracIS and FedPracDELTA, respectively, using the sampling probabilities defined in equations \eqref{practical FedIS} and \eqref{practical DELTA}. For proximal and momentum methods, we use the default hyperparameter setting $\mu=0.01$ and $\gamma=0.9$.
 }
 \vspace{-.em}
 \label{sampling integrate others in FEMNIST}
 \resizebox{1.\textwidth}{!}{%
  \begin{tabular}{l c c c c c c c c c c}
   \toprule
   \multirow{2}{*}{Backbone with Sampling} & \multicolumn{2}{c}{Uniform Sampling} & \multicolumn{2}{c}{Cluster-based IS} & \multicolumn{2}{c}{PracIS} & \multicolumn{2}{c}{PracDELTA}\\
   \cmidrule(lr){2-3} \cmidrule(lr){4-5} \cmidrule(lr){6-7} \cmidrule(lr){8-9}
            & Acc (\%) & Rounds for 80\% & Acc (\%) & Rounds for 80\% & Acc (\%) & Rounds for 80\% & Acc (\%) & Rounds for 80\% \\
   \midrule
    FedAvg   & 71.82\small {\transparent{0.5}±0.93} & 164 (for 70\%)  & 70.42\small {\transparent{0.5}±0.66} & 215 (for 70\%) &80.11\small {\transparent{0.5}±0.29} & 110 (for 70\%) & \textbf{81.44\small {\transparent{0.5}±0.28}} &\textbf{98 } (for 70\%)  \\
    FedAvg + momentum  & 80.86\small {\transparent{0.5}±0.49} & 268   & 80.86\small {\transparent{0.5}±0.49} & 281  &81.80 \small {\transparent{0.5}±0.05} &246  & \textbf{82.58 \small {\transparent{0.5}±0.44}} & \textbf{200}  \\
    FedAvg + proximal     & 81.41 \small {\transparent{0.5}±0.34} & 313 &  80.88 {\transparent{0.5}±0.38} & 326  & 81.28{\transparent{0.5}±0.25} & 289   & \textbf{82.54 {\transparent{0.5}±0.57}} & \textbf{245}                   \\
   \bottomrule
  \end{tabular}%
  }
  \vspace{-1.em}
\end{table*}

\paragraph{Ablation studies.} We also provide ablation studies of heterogeneity $\alpha$ in Table~\ref{acc of different alpha} and the impact of the number of sampled clients on accuracy in Figure~\ref{app number of clients} in Appendix~\ref{app additional experiments}.
\looseness=-1

\endgroup

\section{Conclusions, Limitations, and Future Works}

This work studies the unbiased client sampling strategy to accelerate the convergence speed of FL by leveraging diverse clients. To address the prevalent issue of full-client gradient dependence in gradient-based FL~\citep{9796935,chen2020optimal}, we extend the theoretical algorithm DELTA to a practical version that utilizes information from the available clients.

Nevertheless, addressing the backdoor attack defense issue remains crucial in sampling algorithms. Furthermore, there is still significant room for developing an efficient and effective practical algorithm for gradient-based sampling methods. We will prioritize this as a future research direction.

\section{Acknowledgement}
This work is supported in part by the National Natural Science Foundation of China under Grant No. 62001412, in part by the funding from Shenzhen Institute of Artificial Intelligence and Robotics for Society, in part by the Shenzhen Key Lab of Crowd Intelligence Empowered Low-Carbon Energy Network (Grant No. ZDSYS20220606100601002), and in part by the Guangdong Provincial Key Laboratory of Future Networks of Intelligence (Grant No. 2022B1212010001).
This work is also supported in part by the Research Center for Industries of the Future (RCIF) at Westlake University, and Westlake Education Foundation.

\clearpage
\newpage
\bibliography{main}
\bibliographystyle{plain}


\clearpage
\appendix
\onecolumn
\onecolumn
{
 \hypersetup{linkcolor=black}
 \parskip=0em
 \renewcommand{\contentsname}{Contents of Appendix}
 \tableofcontents
 \addtocontents{toc}{\protect\setcounter{tocdepth}{3}}
}

\section{An Expanded Version of The Related Work}
\label{app relate works}
FedAvg is proposed by~\citep{mcmahan2017communication} as a de facto algorithm of FL, in which multiple local SGD steps are executed on the available clients to alleviate the communication bottleneck. While communication efficient, heterogeneity, such as system
heterogeneity~\citep{li2018federated,li2019convergence,wang2020tackling,mitra2021achieving,diao2020heterofl}, and statistical/objective heterogeneity~\citep{lin2020ensemble,karimireddy2020scaffold,li2018federated,wang2020tackling,guo2021towards}, results in inconsistent optimization objectives and drifted clients
models, impeding federated optimization considerably.
\looseness=-1

\textbf{Objective inconsistency in FL.}
Several works also encounter difficulties from the objective inconsistency caused by partial client participation~\citep{li2019convergence,cho2022towards,balakrishnan2021}. \citep{li2019convergence, cho2022towards} use the local-global gap $f^* - \frac{1}{m}\sum_{i=1}^m F_i^*$ to measure the distance between the global optimum and the average of all local personal optima, where the local-global gap results from objective inconsistency at the final optimal point. In fact, objective inconsistency occurs in each training round, not only at the final optimal point. \citep{balakrishnan2021} also encounter objective inconsistency caused by partial client participation. However, they use $|\frac{1}{n}\sum_{i=1}^n \nabla F_i(x_t) - \nabla f(x_t)| \leq \epsilon$ as an assumption to describe such update inconsistency caused by objective inconsistency without any analysis on it. To date, the objective inconsistency caused by partial client participation has not been fully analyzed, even though it is prevalent in FL, even in homogeneous local updates. Our work provides a fundamental convergence analysis on the influence of the objective inconsistency of partial client participation.
\looseness=-1

\textbf{Client selection in FL.} 
In general, sampling methods in federated learning (FL) can be classified as biased or unbiased. Unbiased sampling guarantees that the expected value of client aggregation is equal to that of global deterministic aggregation when all clients participate. Conversely, biased sampling may result in suboptimal convergence. A prominent example of unbiased sampling in FL is multinomial sampling (MD), which samples clients based on their data ratio~\citep{wang2020tackling,fraboni2021clustered}. Additionally, importance sampling (IS), an unbiased sampling method, has been utilized in FL to reduce convergence variance. For instance, \citep{chen2020optimal} use update norm as an indicator of importance to sample clients, \citep{rizk2020federated} sample clients based on data variability, and \citep{9249424} use test accuracy as an estimation of importance.
Meanwhile, various biased sampling strategies have been proposed to speed up training, such as selecting clients with higher loss~\citep{cho2022towards}, as many clients as possible under a threshold~\citep{qu2021contextaware}, clients with larger updates~\citep{ribero2020communication}, and greedily sampling based on gradient diversity~\citep{balakrishnan2021}. However, these biased sampling methods can exacerbate the negative effects of objective inconsistency and only converge to a neighboring optimal solution.
Another line of research focuses on reinforcement learning for client sampling, treating each client as an agent and aiming to find the optimal action~\citep{zhao2021adaptive, xia2020multi,cho2020bandit,shi2021federated,9478892}. There are also works that consider online FL, in which the client selection must consider the client's connection ability~\citep{9796818,9766408,9656631,9916164,9809926,9647925}.
Recently, cluster-based client selection has gained some attention in FL~\citep{fraboni2021clustered,9521361,muhammad2020fedfast,shen2022fast,9569487,ruan2022fedsoft,kim2021dynamic, 10081485,9820684}. Though clustering adds additional computation and memory overhead, \citep{fraboni2021clustered,shen2022fast} show that it is helpful for sampling diverse clients and reducing variance. Although some studies employ adaptive cluster-based IS to address the issue of slow convergence due to small gradient groups~\citep{shen2022fast,fiedler2022coresets}, these approaches differ from our method as they still require an additional clustering operation. The proposed DELTA~\footnote{
With a slight abuse of the name, we use DELTA for the rest of the paper to denote either the sampling probability or the federated learning algorithm with sampling probability DELTA, as does FedIS.
}
in Algorithm~\ref{algorithm} can be viewed as a muted version of the diverse client clustering algorithm, while promising to be unbiased.
\looseness=-1

\textbf{Importance sampling.}
Importance sampling is a statistical method that allows for the estimation of certain quantities by sampling from a distribution that is different from the distribution of interest. It has been applied in a wide range of areas, including Monte Carlo integration~\citep{elvira2021advances,zhao2015stochastic,alain2015variance}, Bayesian inference~\citep{katharopoulos2017biased,katharopoulos2018not}, and machine learning~\citep{stich2017safe,johnson2018training}.

In a recent parallel work, \citep{rizk2020federated} demonstrated mean square convergence of strongly convex federated learning under the assumption of a bounded distance between the global optimal model and the local optimal models.\citep{chen2020optimal} analyzed the convergence of strongly convex and nonconvex federated learning by studying the improvement factor, which is the ratio of the participation variance using importance sampling and the participation variance using uniform sampling. This algorithm dynamically selects clients without any constraints on the number of clients, potentially violating the principle of partial user participation. It is worth noting that both of these sampling methods are based on the gradient norm, ignoring the effect of the direction of the gradient.
Other works have focused on the use of importance sampling in the context of online federated learning, where the client selection must consider the client's connection ability. For example, \citep{zhao2021adaptive} proposed an adaptive client selection method based on reinforcement learning, which takes into account the communication cost and the accuracy of the local model when selecting clients to participate in training. \citep{xia2020multi} also employed reinforcement learning for adaptive client selection, treating each client as an agent and aiming to find the optimal action that maximizes the accuracy of the global model.\citep{cho2020bandit} introduced a bandit-based federated learning algorithm that uses importance sampling to select the most informative clients in a single communication round. \citep{shi2021federated} considered the problem of federated learning with imperfect feedback, where the global model is updated based on noisy and biased local gradients, and proposed an importance sampling method to adjust for the bias and reduce the variance of convergence.

\section{Toy Example and Experiments for Illustrating Our Observation}
\label{App toy example and experiment}
\subsection{Toy example}
\label{App toy example and experiment 1}
Figure~\ref{App toycase} is a separate illustrated version of each sampling algorithm provided in Figure~\ref{example gradient}. 

We consider a regression problem involving three clients, each with a unique square function: $F_1(x,y) =  x^2+y^2$; $F_2(x,y) = 4(x-\frac{1}{2})^2+ \frac{1}{2} y^2$;$F_3(x,y)=3 x^2+\frac{3}{2} (y-2)^2$. 
Suppose $(x_t, y_t)=(1,1)$ at current round $t$, the gradients of three clients are $\nabla F_1= (2,2)$, $\nabla F_2= (4,1)$, and $\nabla F_3= (6,-3)$. Suppose only two clients are selected to participate in training. The closer the selected user's update is to the global model, the better.

\emph{For ideal global model}, $\nabla F_{global}= \frac{1}{3} \sum_{i=1}^3 \nabla F_i=(4,0)$, which is the average over all clients.

\emph{For FedIS}, $\nabla F_{FedIS} =  \frac{1}{2}(\nabla F_2 + \nabla F_3) = (5,-1)$: It tends to select Client 2 and 3 who have large gradient norms, as $\|\nabla F_3\| > \|\nabla F_2\|>\|\nabla F_1\|$.

\emph{For DELTA}, $\nabla F_{DELTA}=  \frac{1}{2}(\nabla F_1 + \nabla F_3) = (4,-\frac{1}{2})$: It tends to select Client 1 and 3 who have the largest gradient diversity than that of other clients pair, where the gradient diversity can be formulated by $\mathop{div}_i = \| \nabla F_i(x_t,y_t) - \nabla F_{global}(x_t,y_t)\|$~\citep{NEURIPS2020_f4f1f13c,lin2022personalized}.

\emph{For FedAvg}, $\nabla F_{FedAvg}= \frac{1}{2}(\nabla F_1 + \nabla F_2) = (3,\frac{3}{2})$: It assigns each client with equal sampling probability. 
Compared to FedIS and DELTA, FedAvg is more likely to select Client 1 and 2. 
To facilitate the comparison, FedAvg is assumed to select Client 1 and 2 here.

From Figure~\ref{example gradient}, we can observe that the gradient produced by DELTA is closest to that of the ideal global model.
Specifically, using $L2$ norm as the distance function $\mathcal{D}$, we have $\mathcal{D}(\nabla F_{DELTA}, \nabla F_{global}) < \mathcal{D}(\nabla F_{FedIS}, \nabla F_{global}) < \mathcal{D}(\nabla F_{FedAvg}, \nabla F_{global})$. 
This illustrates the selection of more diverse clients better approaches the ideal global model, thereby making it more efficient.

\begin{figure*}[!ht]
 \centering
 \subfigure[\small Overview of different methods.
 ]{\includegraphics[width=.48\textwidth,]{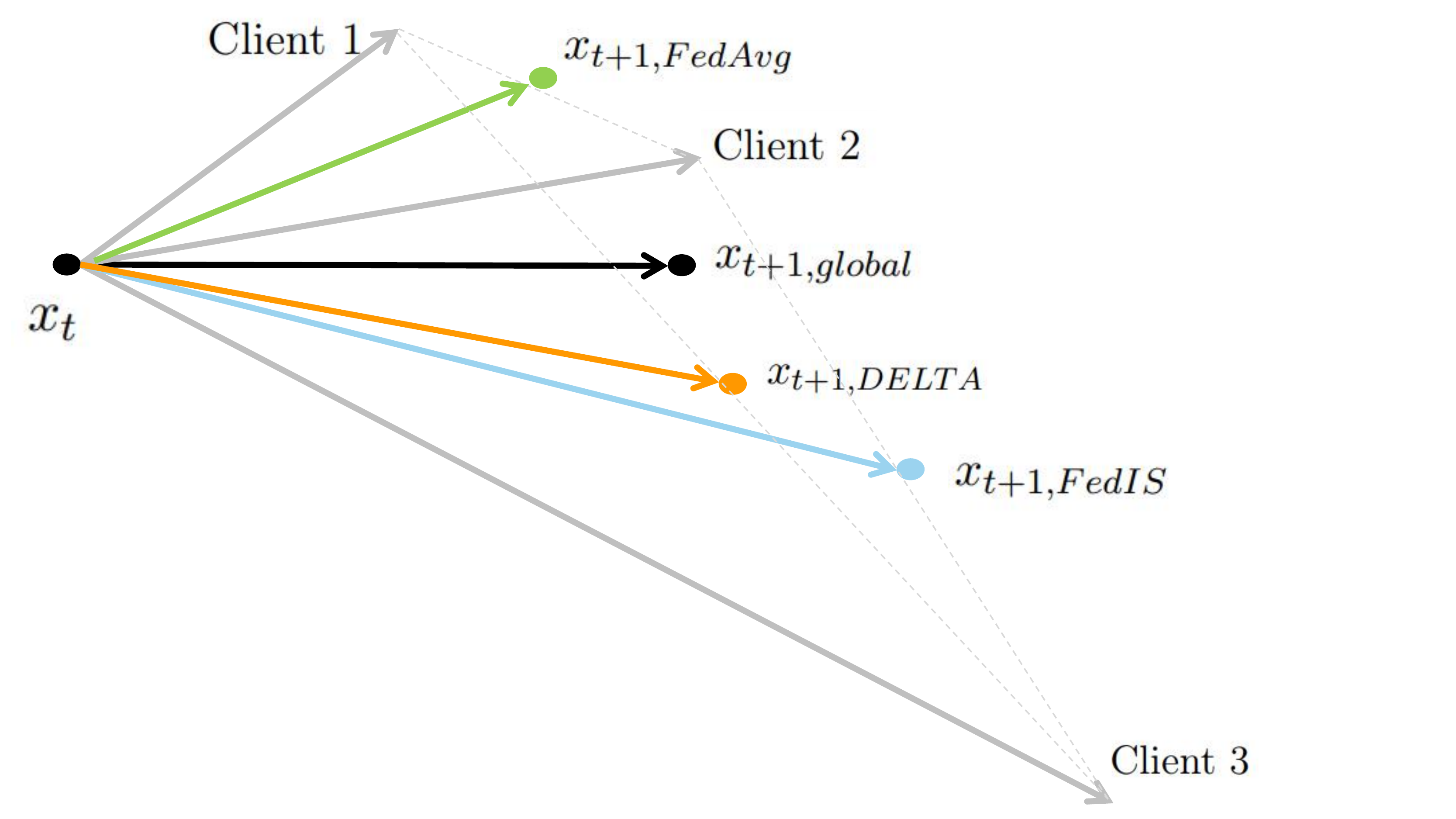} \label{toycase_total}}
 \hspace{.05in}
 \subfigure[\small FedAvg. \looseness=-1
 ]{ \includegraphics[width=.48\textwidth,]{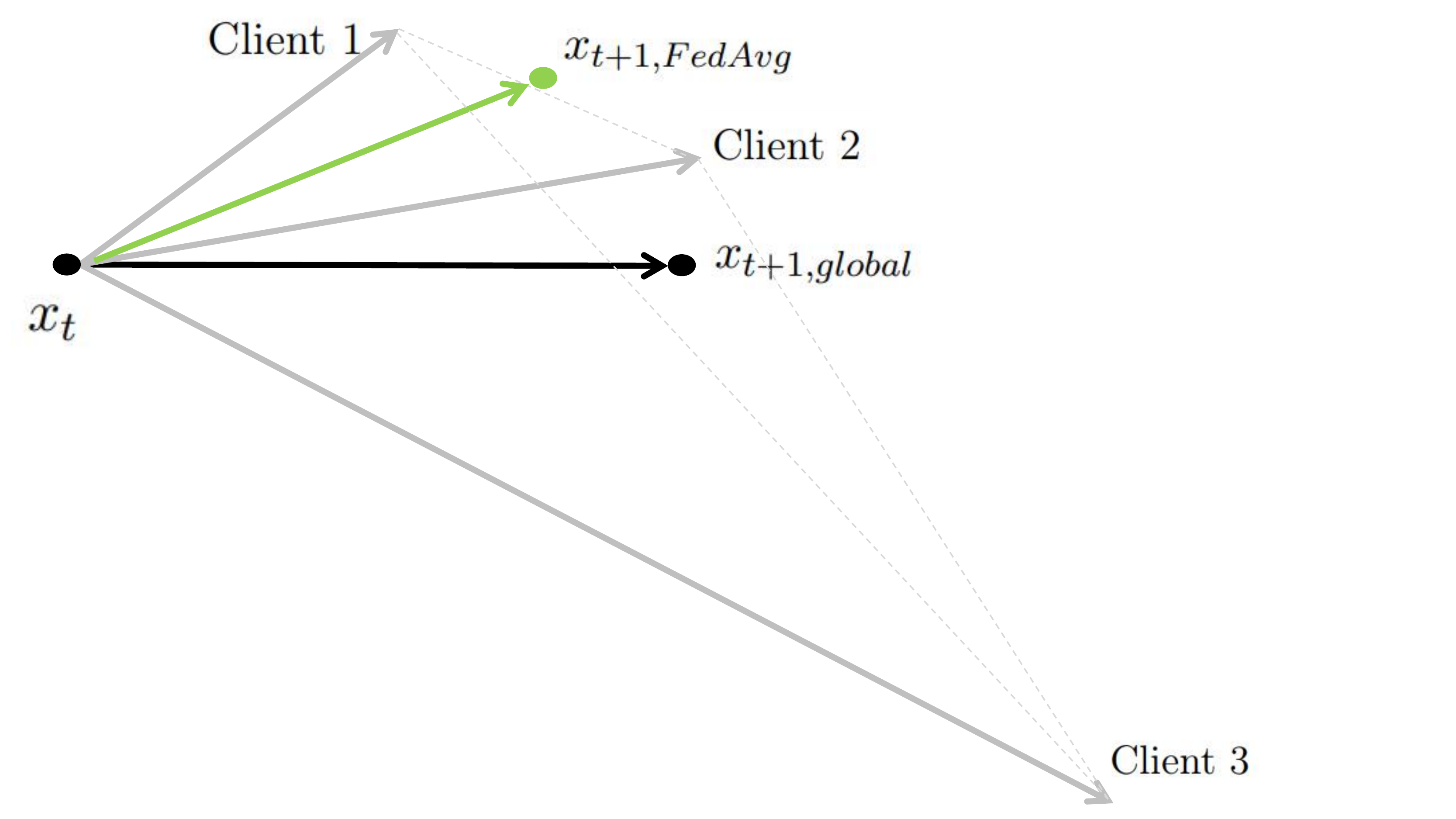}\label{toycase_FedAvg}}
  \hspace{.05in}
  \subfigure[\small FedIS. \looseness=-1
 ]{ \includegraphics[width=.48\textwidth,]{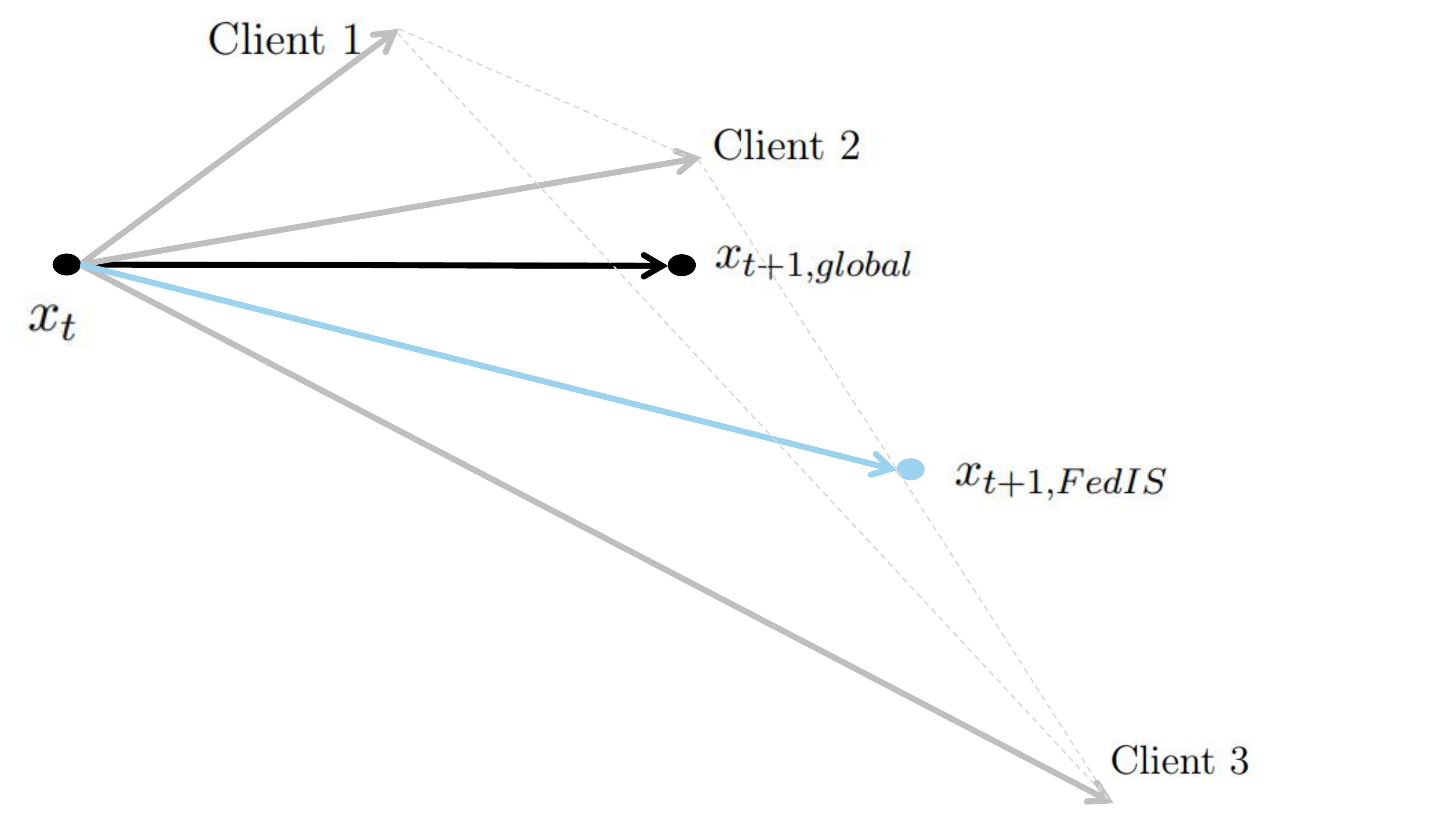}\label{toycase_FedIS}}
  \hspace{.05in}
  \subfigure[\small DELTA. \looseness=-1
 ]{ \includegraphics[width=.48\textwidth,]{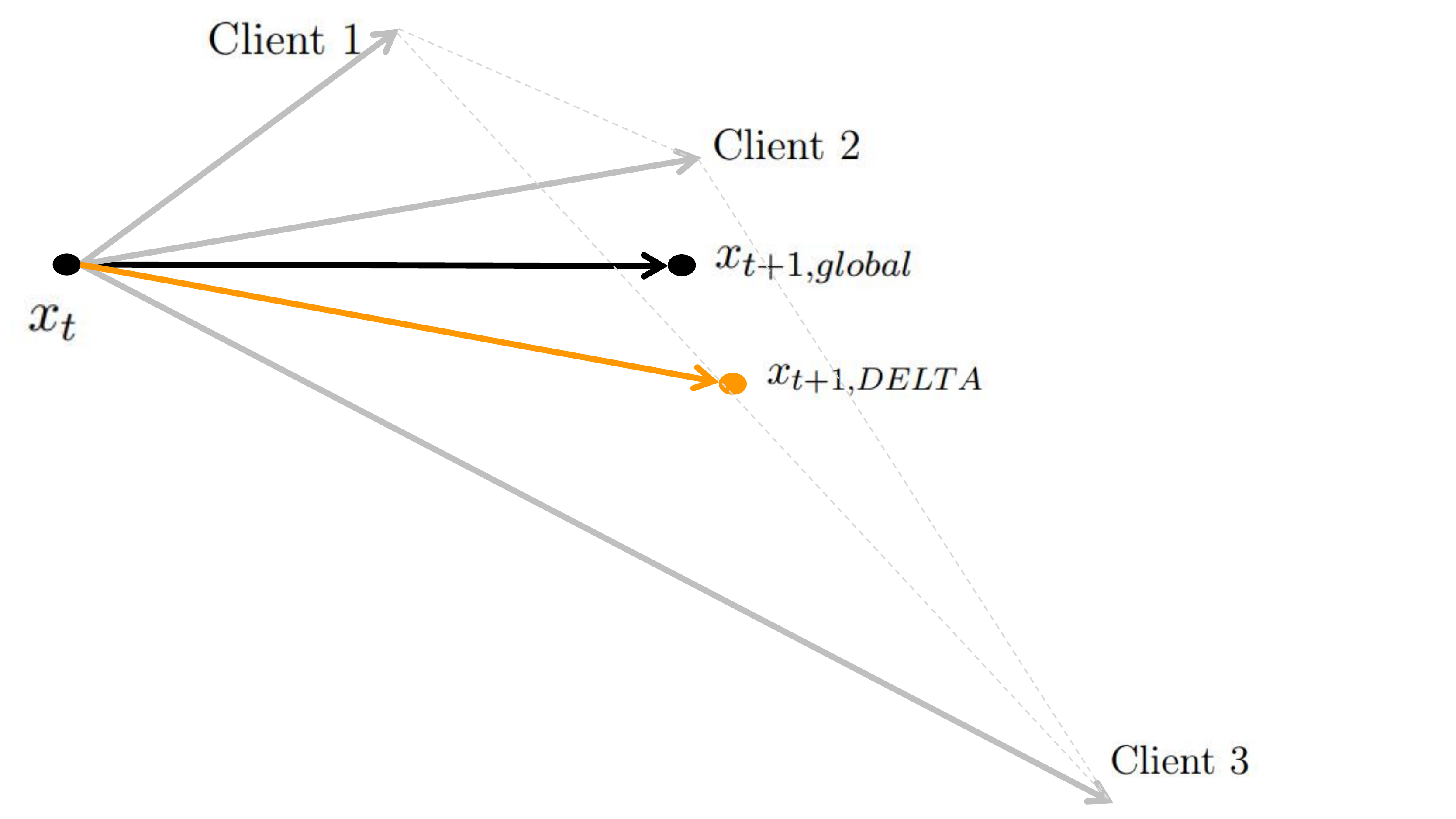}\label{toycase_Delta}}
  \hspace{.05in}
 \vspace{-1.em}
 \caption{\small
 \textbf{Overview of objective inconsistency.}
 The intuition of objective inconsistency in FL is caused by client sampling. When Client 1 \& 2, are selected to participate the training, then the model $x^{t+1}$ becomes $x_{FedAvg}^{t+1}$ instead of $x_{global}^{t+1}$, resulting in \emph{objective inconsistency}. Different sampling strategies can cause different surrogate objectives, thus causing different biases. From Fig~\ref{toycase_total} we can see DELTA achieves minimal bias among the three unbiased sampling methods.
  }
 \label{App toycase}
\vspace{-1.8em}
\end{figure*}

\subsection{Experiments for illustrating our observation.}
\label{App toy example and experiment 2}
\textbf{Experiment setting.} For the experiments to illustrate our observation in the introduction, we apply a logistic regression model on the non-iid MNIST dataset. 10 clients are selected from 200 clients to participate in training in each round. We set 2 cluster centers for cluster-based IS. And we set the mini batch-size to 32, the learning rate to 0.01, and the local update time to 5 for all methods. We run 500 communication rounds for each algorithm.
We report the average of each round's selected clients' gradient norm and the minimum of each round's selected clients' gradient norm.

\textbf{Performance of gradient norm.} We report the gradient norm performance of cluster-based IS and IS to show that cluster-based IS selects clients with small gradients. As we mentioned in the introduction, the cluster-based IS always selects some clients from the cluster with small gradients, which will slow the convergence in some cases. We provide the average gradient norm comparison between IS and cluster-based IS in Figure~\ref{ave_norm_com}. In addition, we also provide the minimal gradient norm comparison between IS and cluster-based IS in Figure~\ref{min_norm_com}.

\begin{figure*}[!h]
 \centering
\subfigure[\small Average gradient norm comparison
 ]{\includegraphics[width=.48\textwidth,]{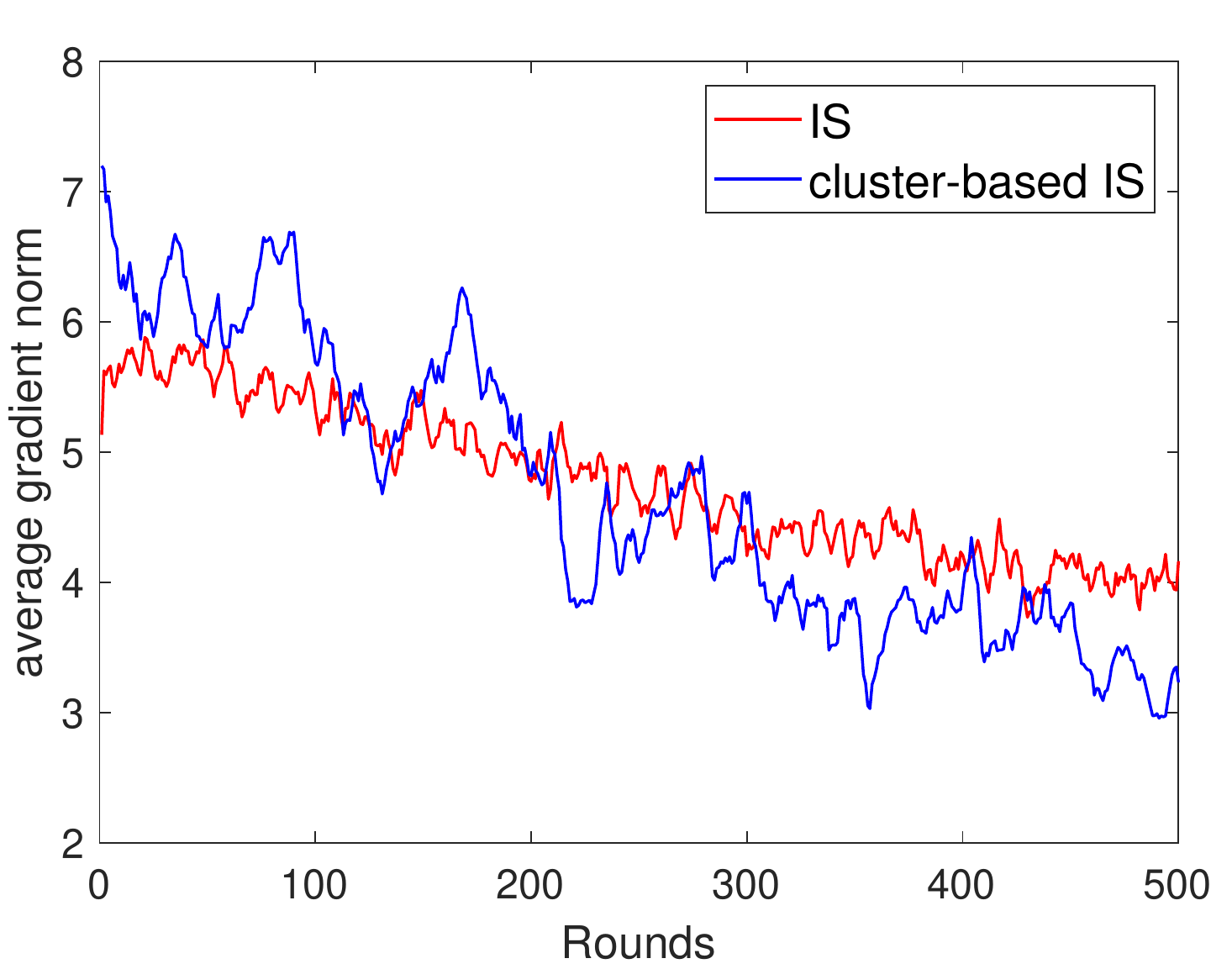} \label{ave_norm_com}}
 \hspace{.05in}
 \subfigure[\small Minimal gradient norm comparison \looseness=-1
 ]{ \includegraphics[width=.48\textwidth,]{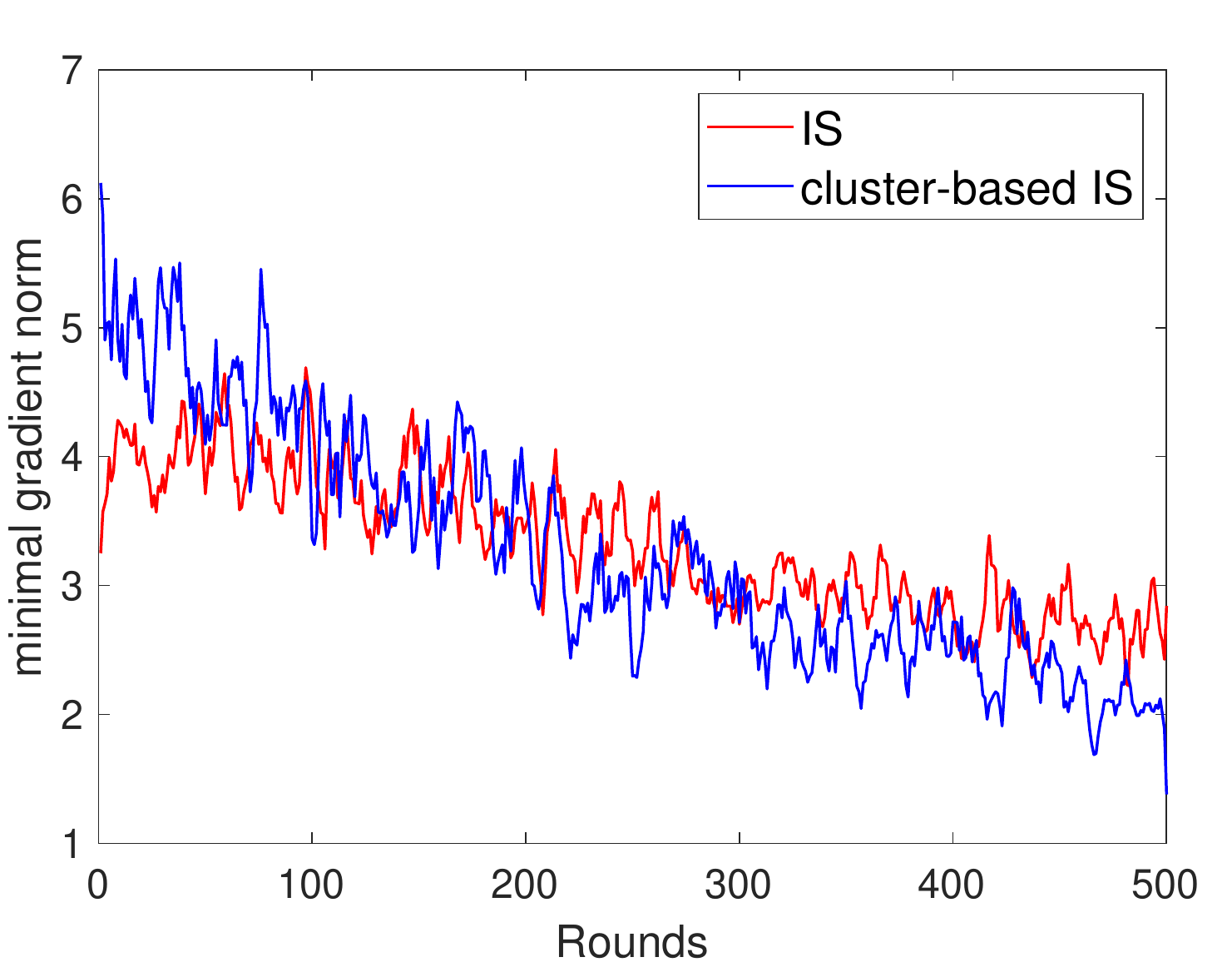}\label{min_norm_com}}
 \caption{\small
 \textbf{The gradient norm comparison.}
 Both results indicate that cluster-based IS selects clients with small gradients after about half of the training rounds compared to IS.
  }
 \label{App com gradient}
\vspace{-1.em}
\end{figure*}

\textbf{Performance of removing small gradient clusters.} We report on a comparison of the accuracy and loss performance between vanilla cluster-based IS and the removal of cluster-based IS with small gradient clusters. Specifically, we consider a setting with two cluster centers. After 250 rounds, we replace the clients in the cluster containing the smaller gradient with the clients in the cluster containing the larger gradient while maintaining the same total number of participating clients. The experimental results are shown in Figure~\ref{converge_com_clu_IS}. We can observe that vanilla cluster-based IS performs worse than cluster-based IS without small gradients, indicating that small gradients are a contributing factor to poor performance.

\begin{figure*}[!h]
 \centering
\subfigure[\small Accuracy performance comparison
 ]{\includegraphics[width=.48\textwidth,]{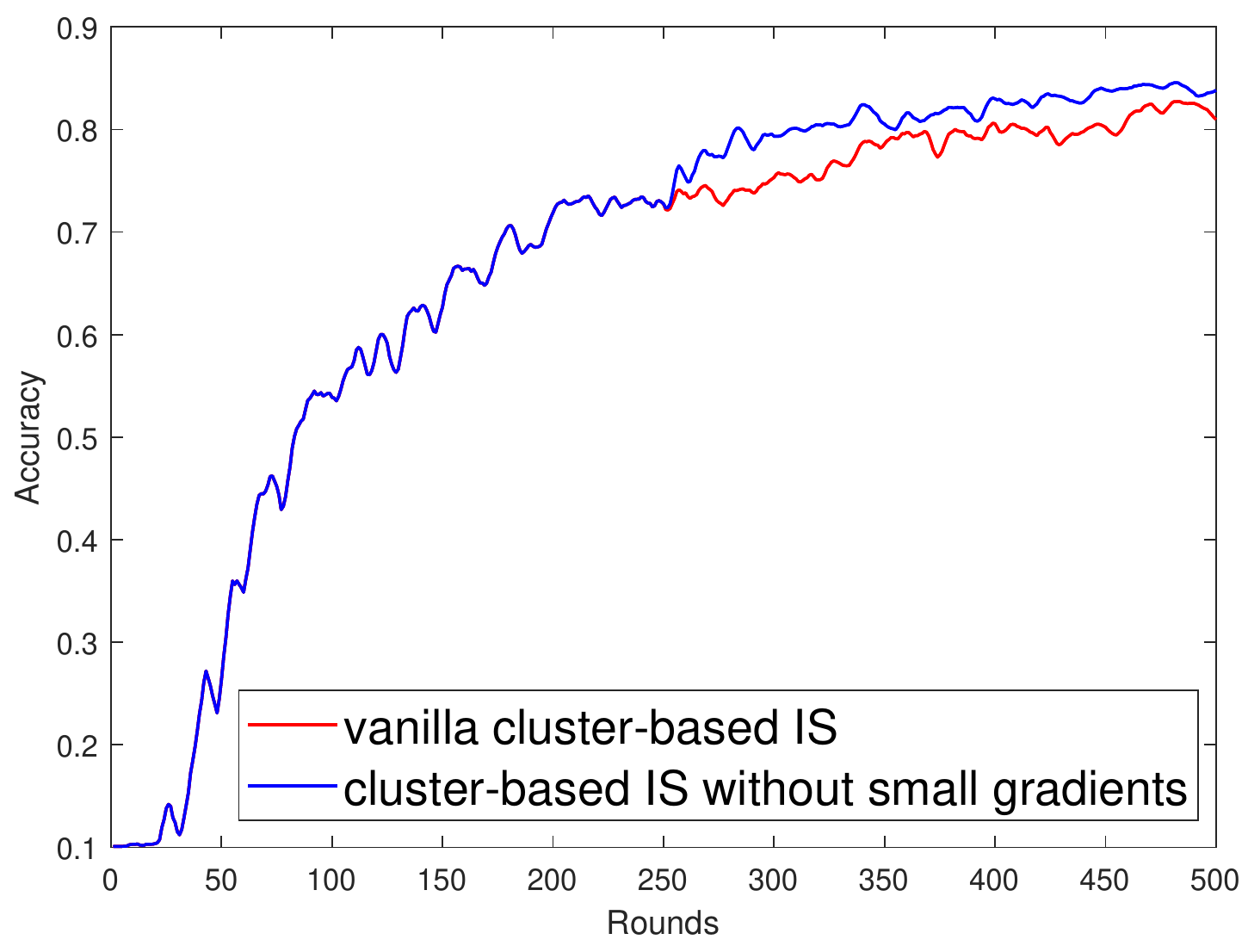} \label{con_acc_com}}
 \hspace{.05in}
 \subfigure[\small Loss performance comparison \looseness=-1
 ]{ \includegraphics[width=.48\textwidth,]{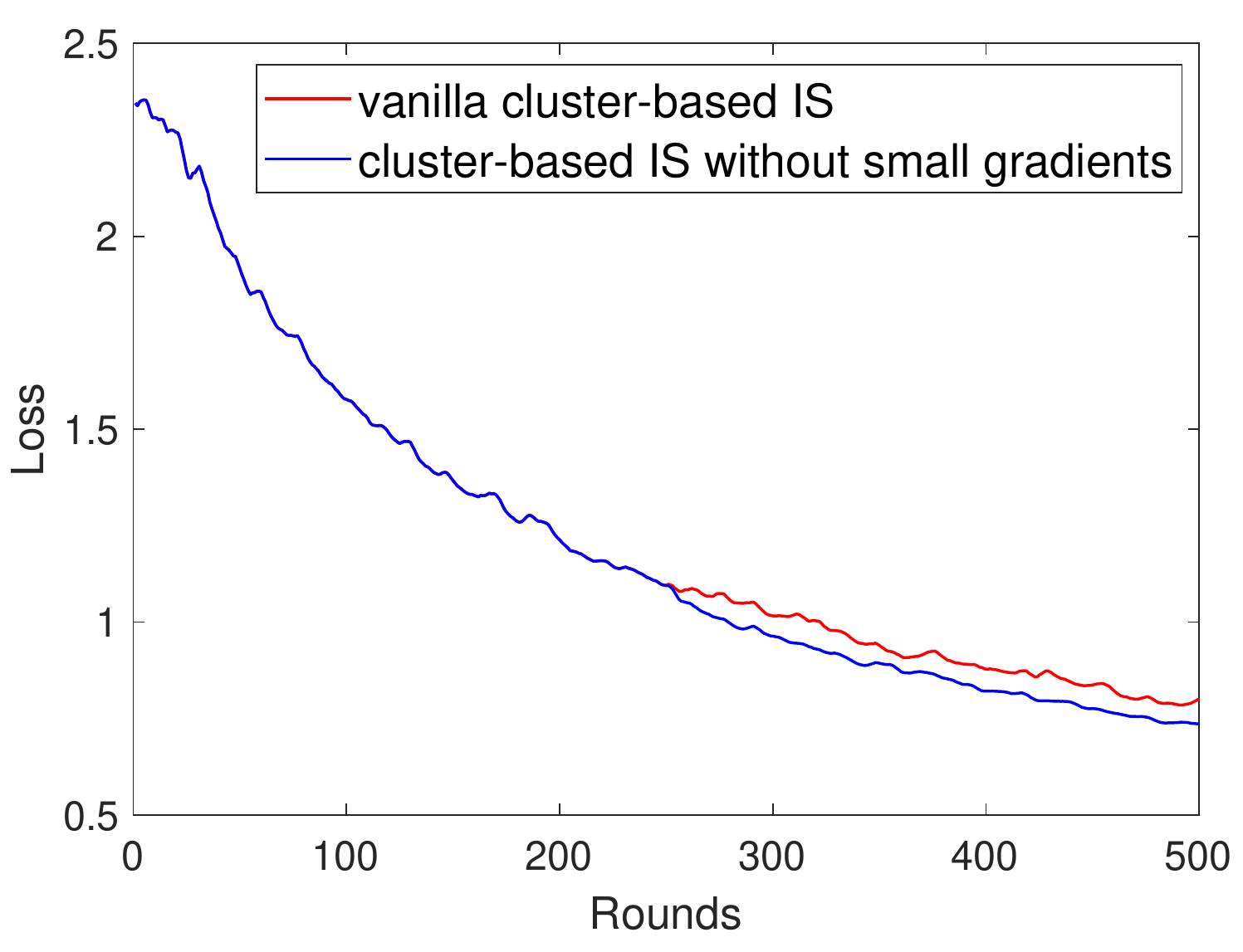}\label{con_loss_com}}
 \caption{\small
\textbf{An illustration that cluster-based IS sampling from the cluster with small gradients will slow convergence.}
 When the small gradient-norm cluster's clients are replaced by the clients from the large gradient-norm cluster, we see the performance improvement of cluster-based IS.
\looseness=-1
  }
 \label{converge_com_clu_IS}
\vspace{-.5em}
\end{figure*}

\section{Techniques}
\label{Appendix tech}
Here, we present some technical lemmas that are useful in the theoretical proof.
We substitute $\frac{1}{m}$ for $\frac{n_i}{N}$ to simplify the writing in all subsequent proofs. $\frac{n_i}{N}$ is the data ratio of client $i$. All of our proofs can be easily extended from $f(x_t)=\frac{1}{m}\sum_{i=1}^mF_i(x_t)$ to $f(x_t)=\sum_{i=1}^m \frac{n_i}{N}F_i(x_t)$.
\begin{lemma}
\label{lemma1}
(Unbiased Sampling).  Importance sampling is unbiased sampling. 
$\E(\frac{1}{n}\sum_{i\in S_t}\frac{1}{mp_i}{\nabla F_i(x_t)}) = \frac{1}{m}\sum_{i=1}^m{\nabla F_i(x_t)}$
, no matter whether the sampling is with replacement or without replacement.
\end{lemma}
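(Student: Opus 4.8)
\textbf{Proof proposal for Lemma~\ref{lemma1}.}

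The plan is to directly compute the expectation by expanding the sampling procedure and checking the claim separately for the two sampling regimes, since the probability that a given client appears in $S_t$ differs between them. The key observation is that in both cases the \emph{marginal} probability that client $i$ contributes to the sum is proportional to $p_i$ in exactly the way needed to cancel the $\frac{1}{mp_i}$ reweighting factor.

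First I would treat sampling \emph{with replacement}. Here $S_t = \{j_1,\ldots,j_n\}$ is a multiset, where each $j_\ell$ is drawn i.i.d.\ from the distribution $\Pr[j_\ell = i] = p_i$. Then
\begin{align*}
\E\left[\frac{1}{n}\sum_{\ell=1}^n \frac{1}{m p_{j_\ell}}\nabla F_{j_\ell}(x_t)\right]
= \frac{1}{n}\sum_{\ell=1}^n \E\left[\frac{1}{m p_{j_\ell}}\nabla F_{j_\ell}(x_t)\right]
= \frac{1}{n}\sum_{\ell=1}^n \sum_{i=1}^m p_i \cdot \frac{1}{m p_i}\nabla F_i(x_t),
\end{align*}
and each inner sum collapses to $\frac{1}{m}\sum_{i=1}^m \nabla F_i(x_t)$, so averaging over the $n$ identical terms gives the claim. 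Next I would handle sampling \emph{without replacement}. Write the contribution as $\sum_{i=1}^m \mathbf{1}[i \in S_t]\,\frac{1}{n m p_i}\nabla F_i(x_t)$, so by linearity the expectation is $\sum_{i=1}^m \Pr[i \in S_t]\,\frac{1}{n m p_i}\nabla F_i(x_t)$. The point is that the sampling-without-replacement scheme used here is the one for which the inclusion probability satisfies $\Pr[i \in S_t] = n p_i$ (e.g.\ a sampling design with prescribed first-order inclusion probabilities, which is the standard convention when one writes "$p_i$" as the per-client sampling probability in this setting). Substituting this in, the factor $n p_i$ cancels $\frac{1}{n m p_i}$ to leave $\frac{1}{m}\sum_{i=1}^m \nabla F_i(x_t)$ again.

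The main obstacle — really a matter of being careful rather than a genuine difficulty — is pinning down precisely what "sampling without replacement with probabilities $p_i$" means, since a size-$n$ without-replacement design does not have a canonical definition from the marginals alone; one must invoke that the design is chosen so that the first-order inclusion probabilities are $n p_i$ (which forces $n p_i \le 1$), and then the computation is immediate. I would state this normalization explicitly at the start of the proof. The with-replacement case needs no such care and follows purely from linearity of expectation plus the i.i.d.\ structure.
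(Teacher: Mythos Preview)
Your proposal is correct and follows essentially the same approach as the paper: for sampling with replacement you both use linearity plus the i.i.d.\ draw structure, and for sampling without replacement you both rewrite the sum via indicator variables $\mathbb{I}[i\in S_t]$ and use $\Pr[i\in S_t]=np_i$ to cancel the reweighting factor. Your version is in fact slightly more careful than the paper's, since you explicitly flag the need to interpret ``without replacement with probabilities $p_i$'' as a design with first-order inclusion probabilities $np_i$; the paper simply asserts $\E(\mathbb{I}_m)=np_l^t$ without comment.
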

Lemma~\ref{lemma1} proves that the importance sampling is an unbiased sampling strategy, either in sampling with replacement or sampling without replacement.

\begin{proof}
    For with replacement:
    \begin{align}
    \E\left(\frac{1}{n}\sum_{i \in S_t} \frac{1}{mp_i^t} \nabla F_i(x_t)\right)&=\frac{1}{n}\sum_{i \in S_t} \E\left( \frac{1}{mp_i^t} \nabla F_i(x_t)\right)=\frac{1}{n}\sum_{i \in S_t} \E\left(\E\left(\frac{1}{mp_i^t} \nabla F_i(x_t) \mid S \right)\right) \notag \\
    &=\frac{1}{n}\sum_{i \in S_t} \E\left(\sum_{l=1}^{m}p_l^t\frac{1}{mp_l^t} \nabla F_l(x_t)\right)=\frac{1}{n}\sum_{i \in S_t} \nabla f(x_t)=\nabla f(x_t) \, ,
    \end{align}
    
    For without replacement:
\begin{align}
    \E\left(\frac{1}{n}\sum_{i \in S_t} \frac{1}{mp_i} \nabla F_i(x_t)\right) 
    &=\frac{1}{n}\sum_{l=1}^m \E\left(\mathbb{{I}}_m \frac{1}{mp_l^t}\nabla F_l(x_t)\right)=\frac{1}{n}\sum_{l=1}^m \E(\mathbb{{I}}_m) \times \E(\frac{1}{mp_l^t}\nabla F_l(x_t)) \notag \\ 
    &=\frac{1}{n}\E(\sum_{l=1}^m\mathbb{{I}}_m)\times \E(\frac{1}{mp_l^t}\nabla F_l(x_t))=\frac{1}{n}n\times \sum_{l=1}^mp_l^t\frac{1}{mp_l^t}\nabla F_l(x_t)  \notag \\ 
    &=\frac{1}{n}\sum_{l=1}^m np_l^t \times \frac{1}{mp_l^t}\nabla F_l(x_t) = \frac{1}{m}\sum_{l=1}^m\nabla F_l(x_t)=\nabla f(x_t) \, ,
\end{align}
where $\mathbb{{I}}_m \triangleq
 \begin{cases}
    1   &  \text{$if\ x_l\in S_t$}  \, ,\\
    0   &  \text{otherwise} \, .
 \end{cases}          $

In the expectation, there are three sources of stochasticity. They are client sampling, local SGD, and the filtration of $x_t$. Therefore, the expectation is taken over all of these sources of randomness.
Here, $S$ represents the sources of stochasticity other than client sampling. More precisely, $S$ represents the filtration of the stochastic process ${x_j, j=1,2,3…}$ at time $t$ and the stochasticity of local SGD.
\end{proof}

\begin{lemma}[update gap bound]
\label{lemma2}
\begin{align}
    \chi^2 = \E\|\frac{1}{n}\sum_{i \in S_t} \frac{1}{mp_i^t} \nabla F_i(x_t) - \nabla f(x_t) \|^2
    =\E\|\nabla \tilde{f}(x_t)\|^2 - \|\nabla f(x_t)\|^2 \leq \E\|\nabla \tilde{f}(x_t)\|^2 \, .
\end{align}
where the first equation follows from $\E[x-\E(x)]^2 = \E[x^2]-[\E(x)]^2$ and Lemma~\ref{lemma1}.
\end{lemma}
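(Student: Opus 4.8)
The plan is to prove Lemma~\ref{lemma2} directly from the definition of $\tilde{f}$ and the unbiasedness established in Lemma~\ref{lemma1}, using only the elementary variance identity $\E[\norm{X - \E X}^2] = \E[\norm{X}^2] - \norm{\E X}^2$ applied to the random vector $X = \frac{1}{n}\sum_{i\in S_t}\frac{1}{mp_i^t}\nabla F_i(x_t)$.

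First I would recall from \eqref{surrogeta objective} that $\nabla \tilde{f}(x_t) = \frac{1}{n}\sum_{i\in S_t}\frac{1}{mp_i^t}\nabla F_i(x_t)$, so the random vector $X$ above is exactly $\nabla\tilde{f}(x_t)$ (viewing the randomness of the sampled set $S_t$ as the source of stochasticity, with the local SGD randomness and filtration of $x_t$ handled as in the proof of Lemma~\ref{lemma1}). By Lemma~\ref{lemma1}, $\E X = \nabla f(x_t)$, so $\chi^2 = \E\norm{X - \E X}^2$. Then I would invoke the bias–variance decomposition: for any random vector $X$, $\E\norm{X - \E X}^2 = \E\norm{X}^2 - \norm{\E X}^2$, which follows by expanding the square $\norm{X - \E X}^2 = \norm{X}^2 - 2\lin{X, \E X} + \norm{\E X}^2$ and taking expectations, using linearity of expectation on the cross term. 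Substituting $X = \nabla\tilde{f}(x_t)$ and $\E X = \nabla f(x_t)$ yields $\chi^2 = \E\norm{\nabla\tilde{f}(x_t)}^2 - \norm{\nabla f(x_t)}^2$, which is the claimed identity. The final inequality $\chi^2 \le \E\norm{\nabla\tilde{f}(x_t)}^2$ is then immediate since $\norm{\nabla f(x_t)}^2 \ge 0$.

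There is essentially no obstacle here: the statement is a direct application of a standard identity, and the only subtlety worth spelling out is bookkeeping over the several sources of randomness (client sampling, local SGD, and the filtration of $x_t$), which is already discussed at the end of the proof of Lemma~\ref{lemma1}. If one wants to be careful, one can first condition on everything except the client sampling, apply the variance identity to the conditional law of $S_t$, and then take the outer expectation; but since Lemma~\ref{lemma1} already asserts unbiasedness with respect to the full expectation, the one-line argument above suffices. I would therefore present the proof in two or three lines, citing Lemma~\ref{lemma1} for $\E X = \nabla f(x_t)$ and the elementary identity for the decomposition, and noting that the inequality drops a nonnegative term.
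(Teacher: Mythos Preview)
Your proposal is correct and follows essentially the same approach as the paper: expand the square, use Lemma~\ref{lemma1} for $\E[\nabla\tilde f(x_t)] = \nabla f(x_t)$, obtain the variance identity $\E\norm{X-\E X}^2 = \E\norm{X}^2 - \norm{\E X}^2$, and drop the nonnegative term $\norm{\nabla f(x_t)}^2$. The paper also conditions first on the non-sampling randomness and then takes the outer expectation, exactly as you suggest in your closing remark.
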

For ease of understanding, we give a detailed derivation of the Lemma~\ref{lemma2}.
\begin{align}
    \E\left(\|\nabla \tilde{f}(x_t) - \nabla f(x_t)\|^2\mid S\right) = &\E\left(\|\nabla \tilde{f}(x_t)\|^2 \mid S\right)-2\E\left(\|\nabla \tilde{f}(x_t)\| \|\nabla f(x_t)\| \mid S\right)\notag \\
    &+ \E\left(\|\nabla f(x_t)\|^2 \mid S\right) \, ,
\end{align}
where $\E(x\mid S)$ means the expectation on $x$ over the sampling space. We have $\E\left(\|\nabla \tilde{f}(x_t)\mid S\right) = \nabla f(x_t)$ and $\E\left(\|\nabla f(x_t)\|^2\mid S\right) = \|\nabla f(x_t)\|^2$ ($\|\nabla f(x)\|$ is a constant for stochasticity $S$ and the expectation over a constant is the constant itself.) \\
Therefore, we conclude 
\begin{align}
    \E\left(\|\nabla \tilde{f}(x_t) - \nabla f(x_t)\|^2\mid S\right) = \E\left(\|\nabla \tilde{f}(x_t)\|^2 \mid S\right)-\|\nabla f(x_t)\|^2 \leq \E\left(\|\nabla \tilde{f}(x_t)\|^2\mid S\right)\, .
\end{align}
We can further take the expectation on both sides of the inequality according to our needs, without changing the relationship.

The following lemma follows from Lemma 4 of~\citep{reddi2020adaptive}, but with a looser condition Assumption~\ref{Assumption 3}, instead of $\sigma_G^2$ bound. With some effort, we can derive the following lemma:
\begin{lemma}[Local updates bound.] 
\label{Our local update bound}
For any step-size satisfying $\eta_L \leq \frac{1}{8LK}$, we can have the following results:
\begin{align}
    \E\|x_{i,k}^t-x_t\|^2 \leq 5K(\eta_L^2\sigma_L^2+4K\eta_L^2\sigma_G^2) + 20K^2(A^2+1)\eta_L^2\|\nabla f(x_t)\|^2 \, .
\end{align}
\end{lemma}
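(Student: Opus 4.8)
\textbf{Proof proposal for Lemma~\ref{Our local update bound}.}

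The plan is to unroll the local SGD recursion $x_{i,k+1}^t = x_{i,k}^t - \eta_L g_{i,k}^t$ starting from $x_{i,0}^t = x_t$, so that $x_{i,k}^t - x_t = -\eta_L \sum_{j=0}^{k-1} g_{i,j}^t$, and then bound $\E\|x_{i,k}^t - x_t\|^2$ by a recursive inequality in $k$. First I would split each stochastic gradient as $g_{i,j}^t = (g_{i,j}^t - \nabla F_i(x_{i,j}^t)) + \nabla F_i(x_{i,j}^t)$, use Assumption~\ref{Assumprion 2} to control the variance part by $\sigma_L^2$ (the cross terms vanish in expectation since the noise is unbiased and independent across local steps), and apply the standard inequality $\|\sum_{j=0}^{k-1} a_j\|^2 \le k\sum_{j=0}^{k-1}\|a_j\|^2$ to the remaining deterministic part. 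This yields roughly
\begin{align}
\E\|x_{i,k}^t - x_t\|^2 \le \eta_L^2 k \sigma_L^2 + \eta_L^2 k \sum_{j=0}^{k-1}\E\|\nabla F_i(x_{i,j}^t)\|^2 \, .
\end{align}

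Next I would further decompose $\nabla F_i(x_{i,j}^t) = (\nabla F_i(x_{i,j}^t) - \nabla F_i(x_t)) + \nabla F_i(x_t)$. The first piece is bounded using $L$-smoothness (Assumption~\ref{Assumption 1}) by $L^2\E\|x_{i,j}^t - x_t\|^2$, which reintroduces the quantity we are bounding and will be absorbed later; the second piece is bounded using Assumption~\ref{Assumption 3} by $(A^2+1)\|\nabla f(x_t)\|^2 + \sigma_G^2$. Substituting these and bounding $\sum_{j=0}^{k-1} \le K$ terms, we obtain an inequality of the shape
\begin{align}
\E\|x_{i,k}^t - x_t\|^2 \le \eta_L^2 K \sigma_L^2 + 2\eta_L^2 K^2 \sigma_G^2 + 2\eta_L^2 K^2 (A^2+1)\|\nabla f(x_t)\|^2 + 2\eta_L^2 K L^2 \sum_{j=0}^{k-1}\E\|x_{i,j}^t - x_t\|^2 \, .
\end{align}
Since this holds for every $k \le K$, I would take the maximum over $k$ (or sum the recursion), use the step-size condition $\eta_L \le \frac{1}{8LK}$ so that the coefficient $2\eta_L^2 K^2 L^2 \le \frac{1}{32} < 1$ of the accumulated term is strictly less than one, and rearrange to move that term to the left-hand side; the factor $\frac{1}{1 - 2\eta_L^2 K^2 L^2}$ is at most a small constant (bounded by, say, $\frac{32}{31}$ or crudely by $2$), which together with the $2$'s in the decomposition produces the constants $5$ and $20$ in the stated bound.

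The main obstacle — though it is more bookkeeping than conceptual difficulty — is to carefully track the constants through the nested decompositions and the geometric-type summation so that the self-referential term is genuinely absorbed and the final constants come out as $5K$ and $20K^2$ rather than something larger; one must be slightly careful about whether to apply Young's inequality with parameter $2$ at each split and whether to bound $\sum_{j=0}^{k-1}$ by $k$ or by $K$. A minor point is that the recursion should be set up on $\max_{0\le k\le K}\E\|x_{i,k}^t-x_t\|^2$ (or proved by induction on $k$) to make the absorption rigorous, since $\E\|x_{i,k}^t - x_t\|^2$ need not be monotone in $k$. The result then follows exactly as in Lemma~4 of~\citep{reddi2020adaptive}, with Assumption~\ref{Assumption 3} substituted wherever the cruder $\sigma_G^2$ bound was previously used.
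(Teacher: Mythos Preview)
Your overall strategy is sound and would prove the lemma, but there is one small slip, and the execution differs from the paper's.

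\textbf{The slip.} After unrolling $x_{i,k}^t-x_t=-\eta_L\sum_{j<k}g_{i,j}^t$ and splitting $g_{i,j}^t=\epsilon_j+\nabla F_i(x_{i,j}^t)$ with $\epsilon_j:=g_{i,j}^t-\nabla F_i(x_{i,j}^t)$, you assert that ``the cross terms vanish in expectation.'' The cross terms \emph{within} $\sum_j\epsilon_j$ do vanish (martingale differences), but the cross terms between $\epsilon_j$ and $\nabla F_i(x_{i,j'}^t)$ for $j'>j$ do not: $x_{i,j'}^t$ depends on $\epsilon_j$ through the trajectory, so $\E\langle\epsilon_j,\nabla F_i(x_{i,j'}^t)\rangle\neq 0$ in general. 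Your first display is therefore off by a factor of~$2$ (apply $\|a+b\|^2\le 2\|a\|^2+2\|b\|^2$ to the noise/signal split first); after that fix, your ``take $\max_k$ and absorb'' argument goes through and the resulting constants still fit comfortably under the stated $5K$ and $20K^2$.

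\textbf{Comparison with the paper.} The paper works with the one-step recursion $\E\|x_{i,k}^t-x_t\|^2=\E\|(x_{i,k-1}^t-x_t)-\eta_L g_{i,k-1}^t\|^2$ rather than the full unroll. At each step the mean-zero noise $\eta_L\epsilon_{k-1}$ is orthogonal to everything measurable at time $k-1$, in particular to $x_{i,k-1}^t-x_t$, so it separates out \emph{exactly}, with no factor of $2$; Young's inequality with parameter $\tfrac{1}{2K-1}$ on the remaining terms then gives the multiplicative factor $(1+\tfrac{1}{2K-1})$ and additive coefficients $4K$. The step-size condition $\eta_L\le\tfrac{1}{8LK}$ is used to absorb the $4K\eta_L^2L^2$ smoothness term into the multiplicative factor, yielding $(1+\tfrac{1}{K-1})$, and unrolling that linear recursion gives the geometric sum $(K-1)\big[(1+\tfrac{1}{K-1})^K-1\big]\le 5K$, from which the constants $5K$ and $20K^2$ fall out directly. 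Both routes are standard; the one-step recursion is slightly cleaner here precisely because it sidesteps the noise/future-gradient correlation you overlooked.
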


\begin{proof}
\begin{align}
&\E_t\|x_{t,k}^i-x_t\|^2 \notag \\
&=\E_t\|x_{t,k-1}^i-x_t-\eta_Lg_{t,k-1}^t\|^2   \notag\\
&=\E_t\|x_{t,k-1}^i-x_t-\eta_L ( g_{t,k-1}^t-\nabla F_i(x_{t,k-1}^i)+\nabla F_i(x_{t,k-1}^i) -\nabla F_i(x_t)+\nabla F_i(x_t) ) \|^2     \notag\\
&\leq (1+\frac{1}{2K-1})\E_t\|x_{t,k-1}^i-x_t\|^2+\E_t\|\eta_L(g_{t,k-1}^t-\nabla F_i(x_{t,k}^i))\|^2 \notag \\ 
& +4K \E_t[\|\eta_L(\nabla F_i(x_{t,K-1}^i)-\nabla F_i(x_t))\|^2] + 4K\eta_L^2\E_t\|\nabla F_i(x_t)\|^2              \notag\\
& \leq (1+\frac{1}{2K-1})\E_t\|x_{t,k-1}^i-x_t\|^2+\eta_L^2\sigma_{L}^2+4K\eta_L^2L^2\E_t\|x_{t,k-1}^i-x_t\|^2  \notag\\
&+4K\eta_L^2\sigma_{G}^2+4K\eta_L^2(A^2+1)\|\nabla f(x_t)\|^2   \notag \\
&\leq (1+\frac{1}{K-1})\E\|x_{t,k-1}^i-x_t\|^2+ \eta_L^2\sigma_{L}^2+4K\eta_L^2\sigma_{G}^2+4K(A^2+1)\|\eta_L\nabla f(x_t)\|^2  \, .  
\end{align}

Unrolling the recursion, we obtain:
\begin{align}
&\quad \E_t\|x_{t,k}^i-x_t\|^2 \leq \sum_{p=0}^{k-1}(1+\frac{1}{K-1})^p\left[\eta_L^2\sigma_{L}^2+4K\eta_L^2\sigma_{G}^2+4K(A^2+1)\|\eta_L\nabla f(x_t)\|^2\right] \notag \\
&\leq (K-1)\left[(1+\frac{1}{K-1})^K-1\right]\left[\eta_L^2\sigma_{L}^2+4K\eta_L^2\sigma_{G}^2+4K(A^2+1)\|\eta_L\nabla f(x_t)\|^2\right] \notag \\
&\leq 5K(\eta_L^2\sigma_L^2+4K\eta_L^2\sigma_G^2) + 20K^2(A^2+1)\eta_L^2\|\nabla f(x_t)\|^2 \, .
\end{align}
\end{proof}

In the following Proposition, we will demonstrate that the convergence rate in this paper with the relaxed version of Assumption 3 remains unchanged. 
\begin{proposition}[convergence under relaxed Assumption 3~\citep{khaled2020better}]
\label{proposition of the relaxed assumption}
The relaxed version of Assumption~\ref{Assumption 3} in this paper is:
\begin{align}
    \E{ \norm{ \nabla F_i(x) }^2 } \leq 2B(f(x)-f^{inf}) + (A^2+1)\|\nabla f(x)\|^2 + \sigma_G^2 \, .
\end{align}
Since we have $f(x)-f^{inf}\leq f^0-f^{inf} \leq F$, where $F$ is a positive constant. This implies that we can substitute $\sigma_g$ with $2BF+\sigma_G$ in all analyses without altering the outcomes (one can directly conclude this from using the above bound in Lemma~\ref{Our local update bound}). In the final convergence rate, it is straightforward to see that the convergence rate remains unchanged, yet the constant term $\sigma_g$ becomes $2BF+\sigma_G$.
\end{proposition}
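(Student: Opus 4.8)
The plan is to show that the relaxed ABC-type bound of \citep{khaled2020better} enters the analysis in exactly the same places as Assumption~\ref{Assumption 3}, and only as an additive correction to the constant $\sigma_G^2$. First I would locate every use of Assumption~\ref{Assumption 3}: it is invoked only to upper bound $\E\|\nabla F_i(x_t)\|^2$ (equivalently the gradient diversity $\zeta_{G,i,t}^2 = \|\nabla F_i(x_t)-\nabla f(x_t)\|^2$), and this happens inside Lemma~\ref{Our local update bound} and, through it, in the bounds on the per-round drift $A_1$ and the aggregated update $A_2 = \E\|\Delta_t\|^2$ that drive Theorems~\ref{theorem 1} and~\ref{theorem 2} and Corollaries~\ref{Corollary practical FedIS}--\ref{Corollary practical DELTA}. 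Replacing Assumption~\ref{Assumption 3} by $\E\|\nabla F_i(x)\|^2 \le 2B(f(x)-f^{inf}) + (A^2+1)\|\nabla f(x)\|^2 + \sigma_G^2$ leaves the coefficient $(A^2+1)$ of $\|\nabla f(x_t)\|^2$ untouched — so the step-size conditions in the theorems are unchanged — and merely inserts an extra summand $2B(f(x_t)-f^{inf})$ into each of these inequalities.

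Next I would absorb that extra summand. Along the trajectory one has $f(x_t)-f^{inf}\le f(x_0)-f^{inf} =: F$, hence $2B(f(x_t)-f^{inf})\le 2BF$, a constant that depends on neither $T$, $n$, nor $K$. Setting $\tilde\sigma_G^2 := \sigma_G^2 + 2BF$ (and, correspondingly, reading $\zeta_{G,i,t}$ and the composite constants $M,\tilde M,\grave M$ through $\tilde\sigma_G$), every inequality in the proofs of Lemma~\ref{Our local update bound}, Theorem~\ref{theorem 1}, Theorem~\ref{theorem 2}, Corollary~\ref{Corollary practical FedIS} and Corollary~\ref{Corollary practical DELTA} goes through verbatim. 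Because $\tilde\sigma_G^2$ is still a constant, the dependence of the final bounds on $T$, $n$, $K$, $\epsilon$ is exactly as before; only the problem-dependent constant multiplying the rate changes, with $\sigma_G$ replaced everywhere by $\sqrt{\sigma_G^2 + 2BF}$, which is the claim.

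The main obstacle is justifying $f(x_t)\le f(x_0)$ without circularity, since the descent inequality one would invoke already relies on a gradient bound. I would close this by induction: assuming $f(x_s)-f^{inf}\le F$ for all $s\le t$, the relaxed form of Lemma~\ref{Our local update bound} holds at step $t$ with the constant $\tilde\sigma_G$, yielding a one-step descent $\E f(x_{t+1})\le \E f(x_t) + \mathcal{O}(\eta_L^2 K \tilde\sigma_G^2)$; for the prescribed step sizes $\eta_L = \mathcal{O}(1/(\sqrt{T}KL))$ the accumulated slack over $T$ rounds is $\mathcal{O}(\tilde\sigma_G^2/(KL^2))$, still a constant, which can be folded into the definition of $F$ (the resulting fixed-point equation for $F$ has a finite solution since the feedback coefficient $\mathcal{O}(B/(KL^2))$ multiplies $2BF$). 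Alternatively — and more cleanly — one can skip the crude bound entirely, keep the term $2B(f(x_t)-f^{inf})$ in the descent recursion, write it as $\E f(x_{t+1}) - f^{inf} \le (1+\mathcal{O}(\eta_L^2 BK))(\E f(x_t)-f^{inf}) - c\eta\eta_L K\,\E\|\nabla f(x_t)\|^2 + (\text{$\sigma_G$-free variance})$, and unroll: the prefactor telescopes to $\prod_t (1+\mathcal{O}(\eta_L^2 BK)) \le \exp(\mathcal{O}(B/(KL^2))) = \mathcal{O}(1)$, so it again merely inflates constants and leaves the order of the rate unchanged.
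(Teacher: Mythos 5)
Your proposal is correct and follows essentially the same route as the paper: substitute the relaxed bound of \citep{khaled2020better} wherever Assumption~\ref{Assumption 3} is invoked (i.e., in Lemma~\ref{Our local update bound} and the resulting bounds on $A_1$, $A_2$), use $f(x)-f^{inf}\leq f^0-f^{inf}\leq F$ to absorb the extra term, and replace $\sigma_G^2$ by $2BF+\sigma_G^2$ throughout, which changes only constants and not the order of the rate. Your closing argument for the trajectory bound $f(x_t)-f^{inf}\leq F$ (the induction/unrolling to avoid circularity) is actually more careful than the paper, which simply asserts this inequality without further justification.
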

Thus, we can assert that we have furnished the analysis under the relaxed assumption condition.

\section{Convergence of FedIS, Proof of Theorem~\ref{theorem 1}}
\label{app theorem1}

The complete version of FedIS algorithm is shown below:

\begin{algorithm}[!t]\small
  \caption{\small \colorbox{turquoise!20}{\textbf{FedIS}} and \colorbox{pink!20}{\textbf{FedPracIS}}: Federated learning with importance sampling}
  \label{FedIS algorithm}
  \begin{algorithmic}[1]
  \Require{initial weights $x_0$, global learning rate $\eta$, local learning rate $\eta_l$, number of training rounds $T$}
  \Ensure{trained weights $x_T$}
  \For{round $t = 1, \ldots, T$}
      \myState{Select clients by using \colorbox{turquoise!20}{IS~\eqref{sampling probability FedIS}} or \colorbox{pink!20}{Practical IS~\eqref{practical FedIS}}.}
        	\For{each worker $i \in S_t$,in parallel}
        	    \myState{$x_{t,0}^i=x_t$}
        	    \For{$k=0,\cdot\cdot\cdot,K-1$}
        	        \myState{compute $g_{t,k}^i = \nabla F_i(x_{t,k}^i,\xi_{t,k}^i)$}
        	        \myState{Local update:$x_{t,k+1}^i=x_{t,k}^i-\eta_Lg_{t,k}^i$}
        	    \EndFor
        	    \myState{Let $\Delta_t^i=x_{t,K}^i-x_{t,0}^i=-\eta_L\sum_{k=0}^{K-1} g_{t,k}^i$}
        	    \myState{Send gradient to server}
        	\EndFor
        	\myState{At Server:}
        	\myState{Receive $\Delta_t^i,i\in S_t$}
            \myState{let $\Delta_t=\frac{1}{|S_t|}\sum_{i\in S_t}\frac{n_i}{np_i^t}\Delta_t^i$}
        	\myState{Server update: $x_{t+1}=x_t+\eta\Delta_t$}
        	\myState{Broadcast $x_{t+1} $ to clients}
        \EndFor
\end{algorithmic}
\vspace{-0.4em}
\end{algorithm}
\looseness=-1


We first restate the convergence theorem (Theorem~\ref{theorem 1}) more formally, then prove the result for the nonconvex case.

\begin{theorem}
Under Assumptions~\ref{Assumption 1}--\ref{Assumption 3} and the sampling strategy FedIS, the expected gradient norm will converge to a stationary point of the global objective. More specifically, if the number of communication rounds T is predetermined and the learning rate $\eta$ and $\eta_L$ are constant, then the expected gradient norm will be bounded as follows:
\begin{small}
\begin{flalign}
\textstyle
    &\mathop{min} \limits_{t\in[T]} \E\|\nabla f(x_t)\|^2\leq \frac{F}{c\eta\eta_LK T}+\Phi \, ,
\end{flalign}
\end{small}%
where $F =f(x_0)-f(x_*)$, $M^2 = \sigma_L^2+4K\sigma_G^2$, and the expectation is over the local datasets samples among workers. 

Let $\eta_L < min\left(1/(8LK), C\right)$, where $C$ is obtained from the condition that $\frac{1}{2}-10L^2K^2(A^2+1)\eta_L^2-\frac{L^2\eta K(A^2+1)}{2n}\eta_L>0$ ,and $\eta \leq 1/(\eta_LL)$, it then holds that:
\begin{align}
    \Phi =
    \frac{1}{c} [
     \frac{5\eta_L^2L^2K}{2m}\sum_{i=1}^m(\sigma_L^2 +4K\sigma_G^2) + \frac{\eta\eta_LL}{2m}\sigma_L^2 + \frac{L\eta\eta_L}{2nK}V(\frac{1}{mp_i^t}\hat{g}_i^t) ] \, .
\end{align}
where $c$ is a constant that satisfies $\frac{1}{2}-10L^2K^2(A^2+1)\eta_L^2-\frac{L^2\eta K(A^2+1)}{2n}\eta_L>c>0$, and $V(\frac{1}{mp_i^t}\hat{g_i^t}) = E\|\frac{1}{mp_i^t}\hat{g}_i^t - \frac{1}{m}\sum_{i=1}^m\hat{g}_i^t\|^2$.
\end{theorem}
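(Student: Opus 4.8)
The plan is to follow the standard descent-lemma template for FedAvg-type analyses, but to exploit unbiasedness of importance sampling as carefully as possible in order to avoid the loose $\eta_L^3$ term that forces the $\mathcal{O}(T^{-2/3})$ rate in prior work. First I would start from $L$-smoothness (Assumption~\ref{Assumption 1}) applied to $f$ along the server update $x_{t+1}=x_t+\eta\Delta_t$, giving
\begin{small}
\begin{align}
\E_t f(x_{t+1}) \le f(x_t) + \eta\,\E_t\langle \nabla f(x_t),\Delta_t\rangle + \tfrac{L\eta^2}{2}\,\E_t\|\Delta_t\|^2 \,,
\end{align}
\end{small}
and then split the right-hand side into the cross term $A_1 = \E_t\langle \nabla f(x_t),\Delta_t\rangle$ and the second-moment term $A_2 = \E_t\|\Delta_t\|^2$. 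For $A_1$, I would use Lemma~\ref{lemma1} to replace $\E_t\Delta_t$ by $-\eta_L\sum_{k=0}^{K-1}\frac{1}{m}\sum_i\nabla F_i(x_{t,k}^i)$ (the full-participation drift), then add and subtract $\nabla F_i(x_t)$, apply the basic identity $\langle a,b\rangle = \tfrac12(\|a\|^2+\|b\|^2-\|a-b\|^2)$, and control the resulting client-drift term $\|x_{t,k}^i-x_t\|^2$ by Lemma~\ref{Our local update bound}. This is where the sharper local-update bound (using Assumption~\ref{Assumption 3} rather than a raw $\sigma_G^2$ bound) matters: it keeps the coefficient of $\|\nabla f(x_t)\|^2$ proportional to $(A^2+1)\eta_L^2$, which is exactly what the stated learning-rate condition $\tfrac12 - 10L^2K^2(A^2+1)\eta_L^2 - \cdots > 0$ is designed to absorb.

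The key departure from prior analyses — and the main obstacle — is the treatment of $A_2 = \E_t\|\Delta_t\|^2$. Instead of re-applying the local-update lemma a second time (which is what injects the extra $\eta_L$ power), I would write $\Delta_t = \frac{1}{n}\sum_{i\in S_t}\frac{1}{mp_i^t}(-\eta_L\hat g_i^t)$ and decompose it as its conditional mean plus sampling noise: $\Delta_t = \E_t[\Delta_t \mid \text{no client sampling}] + (\Delta_t - \E_t[\Delta_t\mid\cdots])$. Taking $\E_t\|\cdot\|^2$ splits $A_2$ into (i) $\eta_L^2\|\frac1m\sum_i\hat g_i^t\|^2$-type terms, i.e. the full-participation aggregated update, plus (ii) the participation variance, which by the with-replacement sampling structure equals $\frac{\eta_L^2}{nK}\cdot$(something like) $V(\frac{1}{mp_i^t}\hat g_i^t)$ up to the local-SGD variance contribution $\frac{\sigma_L^2}{m}$. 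The first part is again bounded via Lemma~\ref{Our local update bound} together with the smoothness expansion of $\hat g_i^t$ around $\nabla f(x_t)$; the second part is precisely the $V(\tfrac{1}{mp_i^t}\hat g_i^t)$ term that appears in $\Phi$ and that FedIS will later optimize over $p_i^t$. Getting the variance bookkeeping right — separating the randomness of client selection from local-SGD noise from the filtration of $x_t$, exactly as flagged in the proof of Lemma~\ref{lemma1} — is the delicate part, because a careless split reintroduces cross terms that degrade the rate.

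Having bounded $A_1$ and $A_2$, I would rearrange the descent inequality to isolate $\E_t\|\nabla f(x_t)\|^2$: the coefficient in front of it on the good side is $c\eta\eta_L K$ for some absolute constant $c$, provided the stated conditions $\eta_L < \min(1/(8LK), C)$ and $\eta \le 1/(\eta_L L)$ hold so that all the negative $\|\nabla f(x_t)\|^2$ contributions from the drift and second-moment terms are dominated. Summing over $t=0,\dots,T-1$, telescoping $f(x_0)-f(x_T)\le F = f(x_0)-f^*$, and dividing by $c\eta\eta_L K T$ yields
\begin{small}
\begin{align}
\min_{t\in[T]}\E\|\nabla f(x_t)\|^2 \le \frac{F}{c\eta\eta_L K T} + \Phi \,,
\end{align}
\end{small}
with $\Phi$ collecting exactly the three residual pieces: the client-drift term $\tfrac{5\eta_L^2 L^2 K}{2m}\sum_i(\sigma_L^2+4K\sigma_G^2)$, the local-SGD noise term $\tfrac{\eta\eta_L L}{2m}\sigma_L^2$, and the participation-variance term $\tfrac{L\eta\eta_L}{2nK}V(\tfrac{1}{mp_i^t}\hat g_i^t)$, all divided by $c$. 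Finally, substituting $\eta_L = \mathcal{O}(1/(\sqrt T K L))$ and $\eta = \mathcal{O}(\sqrt{Kn})$ and bounding $V(\cdot) \le \frac{1}{m}\sum_i \frac{1}{m p_i^t}\|\hat g_i^t\|^2 \lesssim K(\sigma_L^2 + K\sigma_G^2)$ (under uniform-ish $p$, or simply using Assumption~\ref{Assumption 3} on $\|\hat g_i^t\|^2$) turns $\Phi$ into $\mathcal{O}\big(\tfrac{\sigma_L^2+K\sigma_G^2}{\sqrt{nKT}}\big) + \mathcal{O}\big(\tfrac{M^2}{T}\big)$, recovering the stated asymptotic form. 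I expect the bulk of the routine work to be in the $A_1$ expansion (many applications of Young's inequality with carefully chosen weights to keep constants clean) and the genuine conceptual care to be in the $A_2$ variance decomposition described above.
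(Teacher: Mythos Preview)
Your proposal is correct and matches the paper's proof: $L$-smoothness, the $A_1/A_2$ split, unbiasedness (Lemma~\ref{lemma1}) plus the polarization identity in $A_1$ with the drift controlled by Lemma~\ref{Our local update bound}, and the client-sampling variance decomposition of $A_2$ are exactly the paper's steps. One small sharpening: the paper does \emph{not} bound the full-participation piece $\eta_L^2\,\E_t\bigl\|\tfrac1m\sum_i\sum_k\nabla F_i(x_{t,k}^i)\bigr\|^2$ that survives in $A_2$ after stripping off the sampling variance and the local-SGD noise --- instead, the $-\|a-b\|^2$ term produced by the polarization identity in $A_1$ is precisely $-\tfrac{\eta_L}{2K}$ times this same quantity, and the two are combined so that the condition $\eta\eta_L\le 1/(KL)$ makes their sum nonpositive and the whole thing is simply discarded. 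Your alternative (expand around $\nabla f(x_t)$ and reapply the local-update lemma) would also work at the same order but would add a harmless higher-order residual; the cancellation route is what delivers the clean three-term $\Phi$ exactly as stated.
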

\begin{corollary} 
Suppose $\eta_L$ and $\eta$ are such that the conditions mentioned above are satisfied,  $\eta_L=\mathcal{O}\left(\frac{1}{\sqrt{T}KL}\right)$ and $\eta=\mathcal{O}\left(\sqrt{Kn}\right)$, and let the sampling probability be FedIS~\eqref{FedIS sampling probability}. Then for sufficiently large T, the iterates of Theorem~\ref{theorem 1} satisfy:
\begin{align}
    \mathop{min} \limits_{t\in[T]} \E\|\nabla f(x_t)\|^2=\mathcal{O}\left(\frac{\sigma_L^2}{\sqrt{nKT}}+\frac{K\sigma_G^2}{\sqrt{nKT}} + \frac{\sigma_L^2+4K\sigma_G^2}{KT}\right) \, .
\end{align}
\end{corollary}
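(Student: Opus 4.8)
\textbf{Proof proposal for Theorem~\ref{theorem 1} (convergence rate of FedIS).}

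The plan is to run the standard descent-lemma machinery for FedAvg-type algorithms, but to carefully exploit the unbiasedness of importance sampling at the point where the aggregated update $\Delta_t$ is bounded, which is exactly the step that lets us shave off the $\mathcal{O}(T^{-2/3})$ term. First I would start from $L$-smoothness (Assumption~\ref{Assumption 1}) to write $\E[f(x_{t+1})] \le f(x_t) + \E\langle \nabla f(x_t), \eta\Delta_t\rangle + \tfrac{L\eta^2}{2}\E\|\Delta_t\|^2$. Using $\Delta_t = -\eta_L \tfrac{1}{n}\sum_{i\in S_t}\tfrac{1}{mp_i^t}\sum_{k}g_{t,k}^i$ and Lemma~\ref{lemma1}, the inner-product term has conditional expectation equal to $-\eta_L \langle \nabla f(x_t), \tfrac{1}{m}\sum_i \sum_k \nabla F_i(x_{t,k}^i)\rangle$, which I would split, via the identity $2\langle a,b\rangle = \|a\|^2 + \|b\|^2 - \|a-b\|^2$, into a term proportional to $-\|\nabla f(x_t)\|^2$ (the descent we want), a term controlled by $\E\|\tfrac{1}{mK}\sum_i\sum_k \nabla F_i(x_{t,k}^i) - \nabla f(x_t)\|^2$, and the client-drift term $\E\|x_{t,k}^i - x_t\|^2$. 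The drift term is handled by Lemma~\ref{Our local update bound}, which is tighter than prior bounds because it uses Assumption~\ref{Assumption 3} rather than a uniform $\sigma_G^2$ bound; this introduces the $(A^2+1)\|\nabla f(x_t)\|^2$ coefficient that must be absorbed into the descent term, producing the condition $\tfrac12 - 10L^2K^2(A^2+1)\eta_L^2 - \tfrac{L^2\eta K(A^2+1)}{2n}\eta_L > c > 0$ on the learning rates.

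The crucial step — and the one I expect to be the main obstacle — is bounding $A_2 := \E\|\Delta_t\|^2$ sharply. The key trick is the bias--variance decomposition: conditionally on everything except the client sampling, $\E\|\tfrac{1}{n}\sum_{i\in S_t}\tfrac{1}{mp_i^t}\hat g_i^t\|^2 = \|\tfrac{1}{m}\sum_i \hat g_i^t\|^2 + \tfrac{1}{n}\mathrm{Var}(\tfrac{1}{mp_i^t}\hat g_i^t)$ (for sampling with replacement; a $(1-n/m)$-type factor for without replacement). This converts the analysis of $A_2$ into (i) the full-participation quantity $\|\tfrac{1}{m}\sum_i \hat g_i^t\|^2$, which is controlled the same way as in the $n=m$ FedAvg analysis using Lemma~\ref{Our local update bound} and Assumptions~\ref{Assumprion 2}--\ref{Assumption 3}, and (ii) the update-variance term $\mathrm{Var}(\tfrac{1}{mp_i^t}\hat g_i^t)$, which stays as-is in $\Phi$ and is later minimized over $p_i^t$ to yield the FedIS probabilities~\eqref{sampling probability FedIS}. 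This is where earlier analyses (e.g.~\citep{yang2021achieving,reddi2020adaptive}) were lossy: they re-applied a local-update lemma to bound $A_2$ instead of isolating the variance, which forced an $\eta_L^3$-scaling term and hence the $\mathcal{O}(T^{-2/3})$ rate. Getting the bookkeeping right so that the $\sigma_L^2$ and $K\sigma_G^2$ terms end up multiplied only by $\eta\eta_L$ (not $\eta_L^3$) is the delicate part.

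Finally, I would telescope the per-round descent inequality over $t=0,\dots,T-1$: after rearranging, $\tfrac{c\eta\eta_L K}{1}\min_t \E\|\nabla f(x_t)\|^2 \le \tfrac{1}{T}\sum_t c\eta\eta_L K\,\E\|\nabla f(x_t)\|^2 \le \tfrac{f(x_0)-f^*}{T} + \eta\eta_L K\,\Phi$, giving $\min_t\E\|\nabla f(x_t)\|^2 \le \tfrac{f^0-f^*}{c\eta\eta_L K T} + \tfrac{1}{c}\Phi$ where $\Phi$ collects the $\tfrac{5\eta_L^2 L^2 K}{2m}\sum_i(\sigma_L^2+4K\sigma_G^2)$, $\tfrac{\eta\eta_L L}{2m}\sigma_L^2$, and $\tfrac{L\eta\eta_L}{2nK}\mathrm{Var}(\tfrac{1}{mp_i^t}\hat g_i^t)$ terms. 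Substituting the stated choices $\eta_L = \mathcal{O}(1/(\sqrt T K L))$ and $\eta = \mathcal{O}(\sqrt{Kn})$ makes $\eta\eta_L K = \mathcal{O}(\sqrt{nK/T})$ and $\eta_L^2 K = \mathcal{O}(1/(TL^2))$, so the first term becomes $\mathcal{O}((f^0-f^*)/\sqrt{nKT})$, the variance/$\sigma_L^2$/$\sigma_G^2$ contributions give $\mathcal{O}((\sigma_L^2+K\sigma_G^2)/\sqrt{nKT})$, and the $\eta_L^2$-term gives the lower-order $\mathcal{O}(M^2/T)$ with $M^2=\sigma_L^2+4K\sigma_G^2$ — which is precisely the claimed bound. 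The only remaining care is to check that the chosen $\eta_L$ indeed satisfies $\eta_L < \min(1/(8LK), C)$ for $T$ large enough, which is immediate since the constraint is $\eta_L = \mathcal{O}(1/(KL))$ up to constants.
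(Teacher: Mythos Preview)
Your proposal is correct and follows essentially the same route as the paper: the descent lemma, the inner-product split via $\langle x,y\rangle=\tfrac12(\|x\|^2+\|y\|^2-\|x-y\|^2)$, Lemma~\ref{Our local update bound} for client drift, and---crucially---the bias--variance decomposition of $A_2$ that isolates $\mathrm{Var}(\tfrac{1}{mp_i^t}\hat g_i^t)$ rather than re-applying a drift bound, which is exactly what removes the $\mathcal{O}(T^{-2/3})$ term. The final step you sketch (bounding the optimized variance under the FedIS probabilities via Assumption~\ref{Assumption 3} to pick up $K^2\sigma_G^2 + K^2(A^2{+}1)\|\nabla f(x_t)\|^2$, absorbing the latter into the descent coefficient to produce the extra $\tfrac{L^2\eta K(A^2+1)}{2n}\eta_L$ in the stepsize condition, and then substituting $\eta_L,\eta$) is also what the paper does in Appendix~\ref{app theorem1}.
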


\begin{proof}
\begin{align}
\E_t[f(x_{t+1})] & \overset{(a1)}{\leq} f(x_t) + \left \langle \nabla f(x_t),\E_t[x_{t+1}-x_t] \right \rangle + \frac{L}{2}\E_t[\left \| x_{t+1}-x_t \right \|^2] \notag\\
&=f(x_t)+\left \langle \nabla f(x_t),\E_t[\eta \Delta_t+\eta\eta_L K\nabla f(x_t)-\eta\eta_L K \nabla f(x_t)] \right \rangle+\frac{L}{2}\eta^2\E_t[\left \|\Delta_t \right \|^2]\notag \\
&= f(x_t)-\eta\eta_L K\left \|\nabla f(x_t)\right \|^2 + \eta\underbrace{\left \langle \nabla f(x_t),\E_t[\Delta_t+\eta_L K\nabla f(x_t)] \right\rangle}_{A_1}+\frac{L}{2}\eta^2\underbrace{\E_t\|\Delta_t\|^2}_{A_2} \, ,
\end{align}
where (a1) follows from the Lipschitz continuous condition. The expectation is conditioned on everything prior to the current step $k$ of round $t$. Specifically, it is taken over the sampling of clients, the sampling of local data, and the current round's model $x_t$.

Firstly we consider $A_1$:
\begin{align}
    A_1&=\left \langle \nabla f(x_t),\E_t[\Delta_t+\eta_LK\nabla f(x_t)]\right\rangle \notag\\
    &=\left \langle \nabla f(x_t),\E_t[-\frac{1}{|S_t|}\sum_{i\in S_t}\frac{1}{mp_i^t}\sum_{k=0}^{K-1}\eta_Lg_{t,k}^i+\eta_LK\nabla f(x_t)]\right\rangle \notag\\
    &\overset{(a2)}{=}\left \langle \nabla f(x_t),\E_t[-\frac{1}{m}\sum_{i=1}^m\sum_{k=0}^{K-1}\eta_L\nabla F_i(x_{t,k}^i)+\eta_LK\nabla f(x_t)]\right\rangle \notag\\
    &=\left \langle \sqrt{\eta_LK}\nabla f(x_t),-\frac{\sqrt{\eta_L}}{\sqrt{K}}\E_t[\frac{1}{m}\sum_{i=1}^m\sum_{k=0}^{K-1}(\nabla F_i(x_{t,k}^i)-\nabla F_i(x_t))]\right\rangle \notag\\
    &\overset{(a3)}{=}\frac{\eta_LK}{2}\|\nabla f(x_t)\|^2+\frac{\eta_L}{2K}\E_t\left\|\frac{1}{m}\sum_{i=1}^m\sum_{k=0}^{K-1}(\nabla F_i(x_{t,k}^i)-\nabla F_i(x_t)) \right\|^2  \notag \\
    &- \frac{\eta_L}{2 K}\E_t\|\frac{1}{m}\sum_{i=1}^m \sum_{k=0}^{K-1}\nabla F_i(x_{t,k}^i)\|^2 \notag\\
     &\overset{(a4)}{\leq}\frac{\eta_LK}{2}\|\nabla f(x_t)\|^2+\frac{\eta_LL^2}{2m}\sum_{i=1}^m\sum_{k=0}^{K-1}\E_t\left\|x_{t,k}^i-x_t \right\|^2 - \frac{\eta_L}{2K}\E_t\|\frac{1}{m}\sum_{i=1}^m\sum_{k=0}^{K-1}\nabla F_i(x_{t,k}^i)\|^2 \notag \\
    &\leq \left(\frac{\eta_LK}{2}+10K^3L^2\eta_L^3(A^2+1)\right)\|\nabla f(x_t)\|^2 + \frac{5L^2\eta_L^3}{2}K^2\sigma_L^2+  10\eta_L^3L^2K^3\sigma_G^2\notag \\
    &- \frac{\eta_L}{2K}\E_t\|\frac{1}{m}\sum_{i=1}^m\sum_{k=0}^{K-1}\nabla F_i(x_{t,k}^i)\|^2 \, ,
\end{align}
where (a2) follows from Assumption~\ref{Assumprion 2} and Lemma\ref{lemma1}. (a3) is due to $\langle x,y\rangle=\frac{1}{2}\left[\|x\|^2+\|y\|^2-\|x-y\|^2\right]$ and (a4) comes from Assumption~\ref{Assumption 1}.

Then we consider $A_2$. Let $\hat{g_i^t} = \sum_{k=0}^{K-1}g_{i,k}^t = \sum_{k=0}^{K-1}\nabla F_i(x_{t,k}^i,\xi_{t,k}^i)$
\begin{align}
    A_2&=\E_t\|\Delta_t\|^2 \notag \\
    &=\E_t\left\|\eta_L\frac{1}{n}\sum_{i\in S_t}\frac{1}{mp_i^t}\sum_{k=0}^{K-1}g_{t,k}^i\right\|^2 \notag \\
    &=\eta_L^2\frac{1}{n}\E_t\left\|\frac{1}{mp_i^t}\sum_{k=0}^{K-1}g_{t,k}^i-\frac{1}{m}\sum_{i=1}^m\sum_{k=0}^{K-1}g_{t,k}^i\right\|^2 \notag \\
    &+\eta_L^2\E_t\left\|\frac{1}{m}\sum_{i=1}^m\sum_{k=0}^{K-1}g_i(x_{t,k}^i)\right\|^2 \notag \\
    &=\frac{\eta_L^2}{n}V(\frac{1}{mp_i^t}\hat{g_i^t})\notag \\
    &+ \eta_L^2\E\|\frac{1}{m}\sum_{i=1}^m\sum_{k=0}^{K-1}[g_i(x_{t,k}^i)-\nabla F_i(x_{t,k}^i)+\nabla F_i(x_{t,k}^i)]\|^2 \notag \\
    &\leq     \frac{\eta_L^2}{n}V(\frac{1}{mp_i}\hat{g_i^t})              \notag\\
    &+ \eta_L^2\frac{1}{m^2}\sum_{i=1}^m\sum_{k=0}^{K-1}\E\|g_i(x_{t,k}^i)-\nabla F_i(x_{t,k}^i)\|^2  + \eta_L^2 \E\|\frac{1}{m}\sum_{i=1}^m\sum_{k=0}^{K-1}\nabla F_i(x_{t,k}^i)\|^2   \notag \\
    &\leq \frac{\eta_L^2}{n}V(\frac{1}{mp_i^t}\hat{g_i^t})              
    +\eta_L^2\frac{K}{m}\sigma_L^2 
    + \eta_L^2 \E\|\frac{1}{m}\sum_{i=1}^m\sum_{k=0}^{K-1}\nabla F_i(x_{t,k}^i)\|^2   \, .
\end{align}
The third equality follows from independent sampling.

Specifically, for sampling with replacement, due to every index being independent, we utilize $\E\|x_1^2 +...+x_n\|^2 = \E[\|x_1\|^2 + ... + \|x_n\|^2]$.

For sampling without replacement: 
\begin{align}
    &\E\|\frac{1}{n}\sum_{i \in S_t}(\frac{1}{mp_i^t}\hat{g_i^t}-\frac{1}{m}\sum_{i=1}^m\hat{g_i^t})\|^2 \notag \\
    & = \frac{1}{n^2}\E\|\sum_{i=1}^m \mathbb{I}_i(\frac{1}{mp_i^t}\hat{g_i^t}-\frac{1}{m}\sum_{i=1}^m\hat{g_i^t})\|^2 \notag \\
    &= \frac{1}{n^2}\E\left(\|\sum_{i=1}^m \mathbb{I}_i(\frac{1}{mp_i^t}\hat{g_i^t}-\frac{1}{m}\sum_{i=1}^m\hat{g_i^t}) \|^2 \mid \mathbb{I}_i = 1 \right) \times \mathbb{P}(\mathbb{I}_i = 1) \notag \\
    &+ \frac{1}{n^2}\E\left(\|\sum_{i=1}^m \mathbb{I}_i(\frac{1}{mp_i^t}\hat{g_i^t}-\frac{1}{m}\sum_{i=1}^m\hat{g_i^t}) \|^2 \mid \mathbb{I}_i = 0 \right) \times \mathbb{P}(\mathbb{I}_i = 0)\notag \\
    & = \frac{1}{n} \sum_{i=1}^m p_i^t \|\frac{1}{mp_i^t}\hat{g_i^t}-\frac{1}{m}\sum_{i=1}^m\hat{g_i^t}\|^2 \notag \\
    & = \frac{1}{n} E\|\frac{1}{mp_i^t}\hat{g_i^t}-\frac{1}{m}\sum_{i=1}^m\hat{g_i^t}\|^2 \, .
\end{align}

From the above, we observe that it is possible to achieve a speedup by sampling from the distribution that minimizes $V(\frac{1}{mp_i^t}\hat{g_i^t})$. Furthermore, as we discussed earlier, the optimal sampling probability is $p_i^* = \frac{\|\hat{g_i^t}\|}{\sum_{i=1}^m \|\hat{g_i^t}\|} $. For MD sampling~\citep{li2019convergence}, which samples according to the data ratio, the optimal sampling probability is $p^*_{i,t} = \frac{q_i \|\hat{g_i^t}\|}{\sum_{i=1}^m q_i \|\hat{g_i^t}\|}$, where $q_i = \frac{n_i}{N}$.

Now we substitute the expressions of $A_1$ and $A_2$:
\begin{align}
    &\quad \E_t[f(x_{t+1})] \leq  f(x_t)-\eta\eta_LK\left \|\nabla f(x_t)\right \|^2 + \eta\left \langle \nabla f(x_t),\E_t[\Delta_t+\eta_LK\nabla f(x_t)] \right\rangle +\frac{L}{2}\eta^2\E_t\|\Delta_t\|^2 \notag \\
    &\leq f(x_t) -\eta\eta_LK\left(\frac{1}{2}-10 L^2 K^2\eta_L^2(A^2+1) \right)\left\|\nabla f(x_t)\right\|^2 
    +\frac{5\eta\eta_L^3L^2K^2}{2}(\sigma_L^2 +4K\sigma_G^2) \notag \\
    &+ \frac{\eta^2\eta_L^2KL}{2m}\sigma_L^2 + \frac{L\eta^2\eta_L^2}{2n}V(\frac{1}{mp_i^t}\hat{g_i^t})
    -\left(\frac{\eta\eta_L}{2K}-\frac{L\eta^2\eta_L^2}{2}\right)\E_t\left\|\frac{1}{m}\sum_{i=1}^m\sum_{k=0}^{K-1}\nabla F_i(x_{t,k}^i)\right\|^2  \notag \\
    &\leq f(x_t) - c\eta\eta_LK\|\nabla f(x_t)\|^2 +\frac{5\eta\eta_L^3L^2K^2}{2}(\sigma_L^2 +4K\sigma_G^2) + \frac{\eta^2\eta_L^2KL}{2m}\sigma_L^2 + \frac{L\eta^2\eta_L^2}{2n}V(\frac{1}{mp_i^t}\hat{g_i^t}) \, ,
    \label{eq lr condition}
\end{align}
where the last inequality follows from $\left(\frac{\eta\eta_L}{2K}-\frac{L\eta^2\eta_L^2}{2}\right) \geq 0$ if $\eta\eta_l\leq \frac{1}{KL}$, and (a9) holds because there exists a constant $c \textgreater 0$ (for some $\eta_L$) satisfying $\frac{1}{2}-10 L^2\frac{1}{m}\sum_{i-1}^m K^2\eta_L^2(A^2+1) \textgreater c \textgreater 0$.

Rearranging and summing from $t=0,\ldots,T-1$,we have:
\begin{align}
    \sum_{t=1}^{T-1}c\eta\eta_LK\E\|\nabla f(x_t)\|^2 \leq f(x_0)-f(x_T)+T(\eta\eta_L K)\Phi \, .
\end{align}

Which implies:
\begin{align}
    \mathop{min} \limits_{t\in[T]} \E\|\nabla f(x_t)\|^2\leq \frac{f_0-f_*}{c\eta\eta_LK T}+\Phi \, ,
\end{align}
where 
\begin{align}
    \Phi 
    &= \frac{1}{c} [
     \frac{5\eta_L^2KL^2}{2}(\sigma_L^2 +4K\sigma_G^2) + \frac{\eta\eta_L L}{2m}\sigma_L^2 + \frac{L\eta\eta_L}{2nK}V(\frac{1}{mp_i^t}\hat{g_i^t}) ] \, .
\end{align}
\end{proof}

\subsection{Proof for convergence rate of FedIS (Theorem~\ref{theorem 1}) under Assumption~\ref{Assumption 1}--\ref{Assumption 3}.}
In this section, we compare the convergence rate of FedIS with and without Assumption~\ref{Assumption 5 main}.
For comparison, we first provide the convergence result under Assumption~\ref{Assumption 5 main}. 

First we show Assumption~\ref{Assumption 5 main} can be used to bound the update variance $V\left(\frac{1}{m p_{i}^{t}} \hat{g_{i}^{t}}\right)$, and under the sampling probability FedIS~\eqref{FedSRC-G}:
\begin{align}
    V\left(\frac{1}{m p_{i}^{t}} \hat{g_{i}^{t}}\right)\leq \frac{1}{m^2}\mathbb{E}\|\sum_{i=1}^m\sum_{k=1}^K\nabla F_i(x_{t,k},\xi_{k,t})\|^2\leq \frac{1}{m}\sum_{i=1}^mK\sum_{k=1}^K\mathbb{E}\|\nabla F_i(x_{t,k},\xi_{k,t})\|^2\leq K^2G^2
\end{align}

While for using Assumption~\ref{Assumption 3} instead of additional Assumption~\ref{Assumption 5 main}, we can also bound the update variance: 
\begin{align}
    V\left(\frac{1}{m p_{i}^{t}} \hat{g_{i}^{t}}\right) &\leq \frac{1}{m^2}\mathbb{E}\|\sum_{i=1}^m\sum_{k=1}^K\nabla F_i(x_{t,k},\xi_{k,t})\|^2 
    \leq \frac{1}{m}\sum_{i=1}^mK\sum_{k=1}^K\mathbb{E}\|\nabla F_i(x_{t,k},\xi_{k,t})\|^2 \notag \\
    &\leq K^2\sigma_G^2+K^2(A^2+1)\|\nabla f(x_t)\|^2
\end{align}
We replace the variance back to equation~\eqref{eq lr condition}:
\begin{align}
    &\E_t[f(x_{t+1})] \leq  f(x_t)-\eta\eta_LK\left \|\nabla f(x_t)\right \|^2 + \eta\left \langle \nabla f(x_t),\E_t[\Delta_t+\eta_LK\nabla f(x_t)] \right\rangle +\frac{L}{2}\eta^2\E_t\|\Delta_t\|^2 \notag \\
    &\leq f(x_t) -\eta\eta_LK\left(\frac{1}{2}-10 L^2 K^2\eta_L^2(A^2+1) \right)\left\|\nabla f(x_t)\right\|^2 
    +\frac{5\eta\eta_L^3L^2K^2}{2}(\sigma_L^2 +4K\sigma_G^2) \notag \\
    &+ \frac{\eta^2\eta_L^2KL}{2m}\sigma_L^2 + \frac{L\eta^2\eta_L^2}{2n}V(\frac{1}{mp_i^t}\hat{g_i^t})
    -\left(\frac{\eta\eta_L}{2K}-\frac{L\eta^2\eta_L^2}{2}\right)\E_t\left\|\frac{1}{m}\sum_{i=1}^m\sum_{k=0}^{K-1}\nabla F_i(x_{t,k}^i)\right\|^2  \notag \\
    &\leq f(x_t) -\eta\eta_LK\left(\frac{1}{2}-10 L^2 K^2\eta_L^2(A^2+1) -\frac{L\eta\eta_LK(A^2+1)}{2n}\right)\left\|\nabla f(x_t)\right\|^2 
     \notag \\
    &+\frac{5\eta\eta_L^3L^2K^2}{2}(\sigma_L^2 +4K\sigma_G^2) + \frac{\eta^2\eta_L^2KL}{2m}\sigma_L^2 + \frac{L\eta^2\eta_L^2}{2n}K^2\sigma_G^2 \notag \\
    &-\left(\frac{\eta\eta_L}{2K}-\frac{L\eta^2\eta_L^2}{2}\right)\E_t\left\|\frac{1}{m}\sum_{i=1}^m\sum_{k=0}^{K-1}\nabla F_i(x_{t,k}^i)\right\|^2.  \notag \\
    \label{equation with new eta_L}
\end{align}
This shows that the requirement for $\eta_L$ is different. It needs that there exists a constant $c>0$ (for some $\eta_L$) satisfying $\frac{1}{2}-10 L^2 K^2\eta_L^2(A^2+1) -\frac{L\eta\eta_LK(A^2+1)}{2n}>c>0$.
One can still guarantee that there exists a constant for $\eta_L$ to satisfy this inequality according to the properties of quadratic functions. Specifically, for the quadratic equation $-10 L^2 K^2(A^2+1)\eta_L^2 -\frac{L\eta K(A^2+1)}{2n}\eta_L+\frac{1}{2}$, we know that $-10 L^2 K^2(A^2+1) \textless0$, $ -\frac{L\eta K(A^2+1)}{2n}$ and $\frac{1}{2}\textgreater0$. Based on the solution of quadratic equations, we can ensure that there exists a $\eta_L\textgreater0$ solution.

Then we can substitute equation~\eqref{eq lr condition} with equation~\eqref{equation with new eta_L} and let $\eta_L=\mathcal{O}\left(\frac{1}{\sqrt{T}KL}\right)$ and $\eta=\mathcal{O}\left(\sqrt{Kn}\right)$, yielding the convergence rate of FedIS under Assumptions~\ref{Assumption 1}--~\ref{Assumption 3}:
\begin{align}
    \min \limits_{t\in[T]} E\|\nabla f(x_t)\|^2\leq \mathcal{O}\left(\frac{ f^0-f^*}{\sqrt{nKT}}\right) \!+\! \underbrace{\mathcal{O}\left(\frac{\sigma_L^2}{\sqrt{nKT}}\right) \!+ \! \mathcal{O}\left(\frac{M^2}{T}\right) \!+ \! \mathcal{O}\left(\frac{K\sigma_G^2}{\sqrt{nKT}} \right)}_{\text{order of} \ \Phi} \, .
\end{align}

\section{Convergence of DELTA. Proof of Theorem~\ref{theorem 2}}
\label{App theorem2}
\subsection{Convergence rate with improved analysis method for getting DELTA}
\label{section 4.2}
As we see FedIS can only reduce the update variance term in $\Phi$. Since we want to reduce the convergence variance as much as possible, the other term $\sigma_L$ and $\sigma_G$ still needs to be optimized. However, it is not straightforward to derive the optimization problem from $\Phi$.
In order to further reduce the variance in $\Phi$ (cf.~\ref{equation of phi}), i.e., local variance ($\sigma_L$) and global variance ($\sigma_G$), we divide the convergence of the global objective into a surrogate objective and an update gap and analyze them separately. The analysis framework is shown in Figure~\ref{analysis flow}.

\begin{figure}[ht]
 \vspace{-2.em}
\vskip 0.2in
\begin{center}
\centerline{\includegraphics[width=0.9\columnwidth]{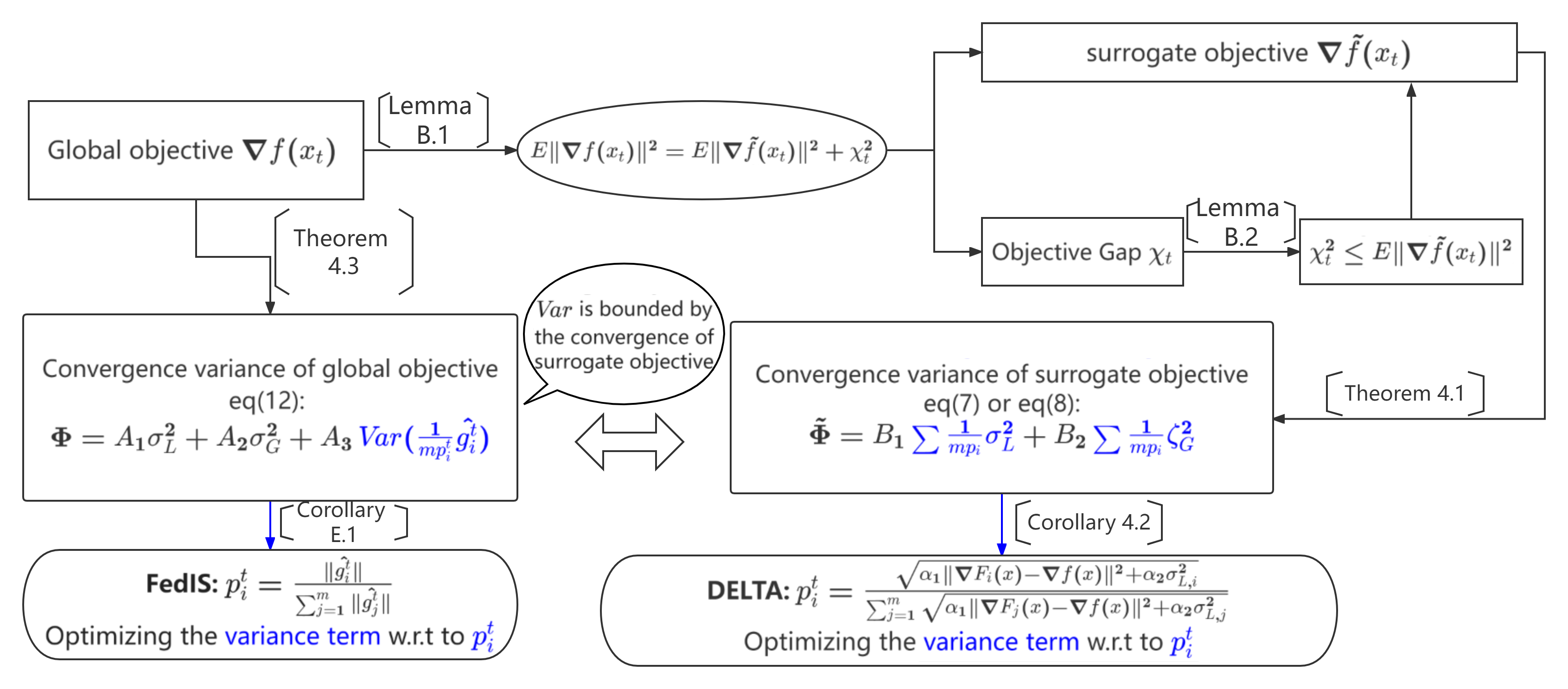}}
\caption{\textbf{Theoretical analysis flow.} The figure shows the theoretical analysis flow of FedIS (left) and DELTA (right), highlighting the differences in sampling probability due to variance.}
\label{analysis flow}
\end{center}
\vskip -0.2in
\end{figure}

As for the update gap, inspired by the expression form of the update variance, we formally define it as follows: \looseness=-1

\begin{definition}[Update gap] In order to measure the update inconsistency, we define the update gap: \looseness=-1
\begin{small}
    \begin{align}
    \textstyle
    \chi_t =\Eb{\norm{ \nabla \tilde{f}(x_t) - \nabla f(x_t)}} \, .
\end{align}
\end{small}%
Here, the expectation is taken over the distribution of all clients.
When all clients participate, we have $\chi_t^2 =0$. The update inconsistency exists as long as only a partial set of clients participate.
\label{objective gap}
\end{definition}

The update gap is a direct manifestation of the objective inconsistency in the update process. The presence of an update gap makes the analysis of the global objective different from the analysis of the surrogate objective. However, by ensuring the convergence of the update gap, we can re-derive the convergence result for the global objective.
Formally, the update gap allows us to connect global objective convergence and surrogate objective convergence as follows:
\begin{small}
\begin{align}
    \label{equal relation}
    \E\|\nabla f(x_t)\|^2 = \E\|\nabla\tilde{ f}(x_t)\|^2 + \chi_t^2 \,.
\end{align}
\end{small}%
The equation follows from the property of unbiasedness, as shown in Lemma~\ref{lemma1}.

To deduce the convergence rate of the global objective, we begin by examining the convergence analysis of the surrogate objective.\looseness=-1

\begin{theorem}[Convergence rate of surrogate objective]
\label{app convergence of surrogate obj}
Under Assumption \ref{Assumption 1}--\ref{Assumption 3} and let local and global learning rates $\eta$ and $\eta_L$ satisfy $\eta_L<\nicefrac{1}{(\sqrt{40K}L\sqrt{\frac{1}{n}\sum_{l=1}^m\frac{1}{mp_l^t}})}$ and $\eta\eta_L\leq \nicefrac{1}{KL}$, the minimal gradient norm of surrogate objective will be bounded as below:
\begin{small}
\begin{equation}
\textstyle
\min _{t \in[T]} \E\left\|\nabla \tilde{f}\left(x_{t}\right)\right\|^{2} \leq \frac{f^0-f^*}{\tilde{c} \eta \eta_{L} K T}+\frac{ \tilde{\Phi}}{\tilde{c}} \, ,
\end{equation}
\end{small}
where $f^0=f(x_0)$, $f^* = f(x_*)$, the expectation is over the local dataset samples among workers.
\end{theorem}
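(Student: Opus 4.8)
The plan is to re-run the descent-lemma argument behind Theorem~\ref{theorem 1}, but carried out for the \emph{surrogate} objective $\tilde{f}_{S_t}$ of \eqref{surrogeta objective} instead of for $f$, and then to translate the telescoped bound back into the global value $f^0-f^*$ using the unbiasedness identity of Lemma~\ref{lemma1}. First I would fix the sampled set $S_t$. Since each $F_i$ is $L$-smooth and $\tfrac1{mp_i^t}>0$, the function $\tilde{f}_{S_t}=\tfrac1n\sum_{i\in S_t}\tfrac1{mp_i^t}F_i$ is $\tilde{L}_{S_t}$-smooth with $\tilde{L}_{S_t}=\bigl(\tfrac1n\sum_{i\in S_t}\tfrac1{mp_i^t}\bigr)L\le\bigl(\tfrac1n\sum_{l=1}^m\tfrac1{mp_l^t}\bigr)L$; this is exactly why the admissible local step size is now $\eta_L<1/\bigl(\sqrt{40K}\,L\sqrt{\tfrac1n\sum_{l}\tfrac1{mp_l^t}}\bigr)$ in place of $\eta_L<1/(8LK)$. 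Applying the smoothness bound to $x_{t+1}=x_t+\eta\Delta_t$ and taking the expectation over the local-SGD noise only gives, as in Appendix~\ref{app theorem1},
\begin{equation*}
\E[\tilde{f}_{S_t}(x_{t+1})\mid \cF_t,S_t]\le \tilde{f}_{S_t}(x_t)-\eta\eta_LK\|\nabla\tilde{f}_{S_t}(x_t)\|^2+\eta\,\tilde{A}_1+\tfrac{\tilde{L}_{S_t}}{2}\,\eta^2\,\tilde{A}_2,
\end{equation*}
where $\tilde{A}_1=\bigl\langle \nabla\tilde{f}_{S_t}(x_t),\,\E[\Delta_t+\eta_LK\nabla\tilde{f}_{S_t}(x_t)\mid \cF_t,S_t]\bigr\rangle$ is the client-drift error and $\tilde{A}_2=\E[\|\Delta_t\|^2\mid \cF_t,S_t]$.

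The two error terms are estimated just as for FedIS, with two bookkeeping changes. First, every occurrence of $\nabla F_i$ now carries the weight $\tfrac1{mp_i^t}$, so the Cauchy--Schwarz/Jensen steps produce the factor $\tfrac1n\sum_{i\in S_t}\tfrac1{mp_i^t}$ in front of the $L^2$-terms; this, together with $\eta\eta_L\le 1/(KL)$, is what the step-size hypotheses are consumed for. Second, in place of Lemma~\ref{Our local update bound} I would use its surrogate-objective analogue (Lemma~\ref{local update bound of DELTA}), which bounds $\E\|x_{t,k}^i-x_t\|^2$ by a multiple of $\eta_L^2\sigma_{L,i}^2+K\eta_L^2\zeta_{G,i,t}^2$ plus a $\|\nabla\tilde{f}_{S_t}(x_t)\|^2$ term --- the point being that the per-client dissimilarity here enters through the diversity $\zeta_{G,i,t}=\|\nabla F_i(x_t)-\nabla f(x_t)\|$ rather than the uniform bound $\sigma_G$, so that after optimization one recovers the DELTA probabilities~\eqref{FedSRC-D} and not the IS probabilities. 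Expanding $\tilde{A}_1$ via $\langle a,b\rangle=\tfrac12(\|a\|^2+\|b\|^2-\|a-b\|^2)$, splitting $\tilde{A}_2$ by the independence of the sampling into a variance term plus a full-participation term, and substituting these bounds collapses the inequality into
\begin{equation*}
\E[\tilde{f}_{S_t}(x_{t+1})\mid \cF_t,S_t]\le \tilde{f}_{S_t}(x_t)-\tilde{c}\,\eta\eta_LK\|\nabla\tilde{f}_{S_t}(x_t)\|^2+\eta\eta_LK\,\Psi_{S_t,t},
\end{equation*}
valid once the coefficient of $\|\nabla\tilde{f}_{S_t}(x_t)\|^2$ stays positive (this fixes $\tilde{c}>0$ and uses the stated step-size restrictions), where $\Psi_{S_t,t}$ collects the local-variance and diversity terms weighted by $\tfrac1{mp_i^t}$.

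It then remains to remove the conditioning on $S_t$ and telescope. Taking $\E_{S_t}[\cdot\mid\cF_t]$ and invoking Lemma~\ref{lemma1} gives $\E_{S_t}[\tilde{f}_{S_t}(x_t)\mid\cF_t]=f(x_t)$ and $\E_{S_t}\|\nabla\tilde{f}_{S_t}(x_t)\|^2=\E\|\nabla\tilde{f}(x_t)\|^2$, while $\E_{S_t}[\Psi_{S_t,t}]$ is exactly the combination $\tilde{\Phi}$ of \eqref{tildephi} (sampling without replacement) or \eqref{new Phi} (sampling with replacement). Summing the resulting one-step inequality over $t=0,\dots,T-1$, rearranging, using $f\ge f^*$ at the terminal iterate, and dividing by $\tilde{c}\,\eta\eta_LKT$ yields $\min_{t\in[T]}\E\|\nabla\tilde{f}(x_t)\|^2\le \tfrac{f^0-f^*}{\tilde{c}\,\eta\eta_LKT}+\tfrac{\tilde{\Phi}}{\tilde{c}}$, which is the claim; Theorem~\ref{theorem 2} then follows by combining this with \eqref{from surrogate to global}.

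The step I expect to be the main obstacle is exactly the passage from the per-round surrogate inequality back to a telescoping chain: conditioned on $S_t$ the Lyapunov quantity at round $t$ is $\tilde{f}_{S_t}$, yet the iterate $x_{t+1}$ is correlated with $S_t$, so $\E[\tilde{f}_{S_t}(x_{t+1})\mid\cF_t]$ is not literally $\E[f(x_{t+1})\mid\cF_t]$ and successive rounds do not cancel naively. The resolution is to keep $f$ as the global Lyapunov function throughout --- rewriting the left-hand side above via $f(x_t)=\E_{S_t}[\tilde{f}_{S_t}(x_t)\mid\cF_t]$ --- and to control the one-round discrepancy $\E[\tilde{f}_{S_t}(x_{t+1})-f(x_{t+1})\mid\cF_t]$, which is higher order in $\eta$ precisely because $\nabla\tilde{f}_{S_t}$ and $\nabla f$ have equal conditional expectation at $x_t$, and to fold it --- together with the update-gap term $\chi_t^2$ of \eqref{update gap equation} bounded via Lemma~\ref{lemma2} --- into $\tilde{\Phi}$ and into the constant $\tilde{c}$. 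Getting this identification's bookkeeping right, and verifying that the coefficient of $\|\nabla\tilde{f}(x_t)\|^2$ stays bounded below by a positive constant under the step-size condition (which is where $\tfrac1n\sum_l\tfrac1{mp_l^t}$ re-enters as the effective smoothness), is essentially all the work beyond transcribing the $\tilde{A}_1,\tilde{A}_2$ estimates from Appendix~\ref{app theorem1}.
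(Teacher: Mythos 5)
Your proposal follows essentially the same route as the paper's own proof in Appendix~\ref{App theorem2}: a descent step for the surrogate objective, the $A_1$/$A_2$ decomposition bounded via Lemma~\ref{local update bound of DELTA} (with $\zeta_{G,i}$ in place of $\sigma_G$), expectation over the sampling distribution to produce the $\frac{1}{m^2 p_l^t}$-weighted variance terms of $\tilde{\Phi}$, Lemma~\ref{lemma1} and Lemma~\ref{lemma2} to pass between $\nabla f$ and $\nabla \tilde{f}$, and a telescoping sum under the stated step-size conditions. The telescoping subtlety you flag is treated in the paper simply by taking the expectation over client sampling and telescoping in $f$ (using $\E_{S_t}[\tilde{f}_{S_t}(x_t)]=f(x_t)$), so your more careful accounting of the one-round discrepancy is extra bookkeeping on top of, rather than a departure from, the paper's argument.
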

\textbf{$\tilde{\Phi}$ is the new combination of variance}, representing combinations of local variance and client gradient diversity.

For sampling without replacement:
\begin{small}
    \begin{align}
    \label{tildephi}
    \textstyle
        \tilde{\Phi} = \frac{5L^2K\eta_L^2}{2mn}\sum_{i=1}^m\frac{1}{p_i^t}(\sigma_{L,i}^2+4K\zeta_{G,i} ^2)+\frac{L\eta_L\eta}{2n}\sum_{i=1}^m\frac{1}{m^2p_i^t}\sigma_{L,i}^2 \, ,
    \end{align}
\end{small}%

For sampling with replacement: 
\begin{small}
    \begin{align}
    \textstyle
        \label{new Phi}
        \tilde{\Phi} = \frac{5L^2K\eta_L^2}{2m^2}\sum_{i=1}^m\frac{1}{p_i^t}(\sigma_{L,i}^2+4K\zeta_{G,i}^2)+\frac{L\eta_L\eta}{2n}\sum_{i=1}^m\frac{1}{m^2p_i^t}\sigma_{L,i}^2
    \end{align}
\end{small}%
where $\zeta_{G,i}$ represents client gradient diversity: $\zeta_{G,i} = \| \nabla F_i(x_t) - \nabla f(x_t)\|$~\footnote{In the Appendix, we abbreviate $\zeta_{G,i,t}$ to $\zeta_{G,i}$ for the sake of simplicity in notation, without any loss of generality.}, and $\tilde{c}$ is a constant. The proof of Theorem~\ref{app convergence of surrogate obj} is provided in Appendix~\ref{proof for surrogate} and Appendix~\ref{proof without re}. Specifically, the proof for sampling with replacement is shown in Appendix~\ref{proof for surrogate}, while the proof for sampling without replacement is shown in Appendix~\ref{proof without re}.
\looseness=-1

\begin{remark}
We observe that there is no update variance in $\tilde{\Phi}$, but the local variance and global variance are still present. Additionally, the new combination of variance $\tilde{\Phi}$ can be minimized by optimizing the sampling probability, as will be shown later.
\end{remark}

\textbf{Derive the convergence from surrogate objective to global objective.} 
As shown in Lemma~\ref{lemma1}, unbiased sampling guarantees that the expected partial client updates are equal to the participation of all clients. With sufficient training rounds, unbiased sampling can ensure that the update gap $\chi^2$ will converge to zero.
However, we still need to know the convergence speed of $\chi_t^2$ to recover the convergence rate of the global objective. Fortunately, we can bound the convergence behavior of $\chi_t^2$ by the convergence rate of the surrogate objective according to Definition \ref{objective gap} and Lemma~\ref{lemma2}. This means that the update gap can achieve at least the same convergence rate as the surrogate objective.

\begin{corollary}[New convergence rate of global objective]
\label{new convergence rate}
Under Assumption~\ref{Assumption 1}--\ref{Assumption 3} and based on the above analysis that update variance is bounded, the global objective will converge to a stationary point. 
Its gradient is bounded as:
\begin{small}
    \begin{align}
    \begin{split}
    \textstyle
        \min _{t \in[T]} \E\|\nabla f(x_t)\|^2 =\min _{t \in[T]}  \E\|\nabla \tilde{f}(x_t)\|^2+\E\|\chi_t^2\| \leq \min _{t \in[T]} 2\E\|\nabla \tilde{f}(x_t)\|^2 
        \leq \frac{f^0-f^*}{c \eta \eta_{L} K T}+\frac{ \tilde{\Phi}}{c}
    \end{split} \, .
    \end{align}
\end{small}%
\end{corollary}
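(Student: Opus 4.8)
The plan is to derive the global-objective rate from the surrogate-objective rate already established in Theorem~\ref{app convergence of surrogate obj}, losing only a universal constant. First I would record the bias--variance decomposition that unbiased sampling forces: by Lemma~\ref{lemma1}, $\E_{S_t}[\nabla\tilde f_{S_t}(x_t)] = \nabla f(x_t)$, so applying the identity $\E\|X\|^2 = \|\E X\|^2 + \E\|X-\E X\|^2$ with $X=\nabla\tilde f_{S_t}(x_t)$ yields~\eqref{equal relation}, relating $\E\|\nabla\tilde f(x_t)\|^2$, $\E\|\nabla f(x_t)\|^2$ and the update gap $\chi_t^2$. Since $\chi_t^2\ge 0$ and, by Lemma~\ref{lemma2}, $\chi_t^2 \le \E\|\nabla\tilde f(x_t)\|^2$, this gives $\E\|\nabla f(x_t)\|^2 \le \E\|\nabla\tilde f(x_t)\|^2 \le 2\,\E\|\nabla\tilde f(x_t)\|^2$ for every $t$; taking $\min_{t\in[T]}$, which is monotone, preserves the inequality and reproduces the displayed chain.

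Second, I would substitute Theorem~\ref{app convergence of surrogate obj}: under the stated step-size restrictions (equivalently $\eta_L<1/(2\sqrt{10K}L\sqrt{\tfrac1n\sum_l\tfrac{1}{mp_l^t}})$ as in Theorem~\ref{theorem 2}, together with $\eta\eta_L\le 1/(KL)$) one has $\min_{t\in[T]}\E\|\nabla\tilde f(x_t)\|^2 \le \frac{f^0-f^*}{\tilde c\,\eta\eta_L KT}+\frac{\tilde\Phi}{\tilde c}$; absorbing the factor $2$ into the constant (i.e.\ $c=\tilde c/2$) produces the claimed bound. So the corollary is a two-line chaining once the surrogate theorem is in hand.

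Consequently the real work --- and the step I expect to be the obstacle --- is Theorem~\ref{app convergence of surrogate obj} itself, which I would prove by replaying the descent argument of Theorem~\ref{theorem 1} on $\tilde f$ rather than $f$. Concretely: (i) use $L$-smoothness of $\tilde f$ to expand $\E_t[\tilde f(x_{t+1})] \le \tilde f(x_t) - \eta\eta_L K\|\nabla\tilde f(x_t)\|^2 + \eta A_1 + \tfrac{L}{2}\eta^2 A_2$ with $A_1 = \langle\nabla\tilde f(x_t),\E_t[\Delta_t+\eta_L K\nabla\tilde f(x_t)]\rangle$ and $A_2=\E_t\|\Delta_t\|^2$; (ii) bound client drift $\E\|x_{t,k}^i-x_t\|^2$ via the DELTA local-update bound (the reweighted analogue of Lemma~\ref{Our local update bound} that keeps $\tfrac{1}{mp_i^t}$ explicit), which is precisely what makes $\sigma_{L,i}$ and the gradient diversity $\zeta_{G,i,t}$ appear weighted by $1/p_i^t$ and hence amenable to optimization over $p_i^t$; (iii) bound $A_1$ by smoothness and the polarization identity $\langle x,y\rangle=\tfrac12(\|x\|^2+\|y\|^2-\|x-y\|^2)$, and bound $A_2$ using the with-replacement / without-replacement variance identities exactly as in the $A_2$ computation of Theorem~\ref{theorem 1} --- this is where the two forms of $\tilde\Phi$ in~\eqref{tildephi} and~\eqref{new Phi} diverge; (iv) choose $c$ so the coefficient of $\|\nabla\tilde f(x_t)\|^2$ stays positive under the step-size conditions, telescope over $t=0,\dots,T-1$, and divide by $c\eta\eta_L KT$. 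The delicate point is that recovering the global rate essentially for free requires the update gap $\chi_t^2$ to vanish at least as fast as the surrogate gradient norm, which is guaranteed by Definition~\ref{objective gap} together with Lemma~\ref{lemma2}; without it one would be left with a non-vanishing $\chi_t^2$ residual that the telescoping does not remove.
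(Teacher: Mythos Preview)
Your proposal is correct and follows exactly the paper's route: invoke the bias--variance identity (Lemma~\ref{lemma1} plus $\E\|X\|^2=\|\E X\|^2+\E\|X-\E X\|^2$), bound the update gap via Lemma~\ref{lemma2}, take the minimum over $t$, and plug in Theorem~\ref{app convergence of surrogate obj} with the factor $2$ absorbed into $c$. Your sketch of how Theorem~\ref{app convergence of surrogate obj} is proved (descent on $\tilde f$, the reweighted local-drift lemma yielding $\sigma_{L,i}$ and $\zeta_{G,i}$ weighted by $1/p_i^t$, the two variance computations for with/without replacement) also matches the paper's Appendix~\ref{proof for surrogate}--\ref{proof without re}. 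One small observation: the variance identity actually gives $\E\|\nabla\tilde f(x_t)\|^2=\E\|\nabla f(x_t)\|^2+\chi_t^2$, so $\E\|\nabla f(x_t)\|^2\le\E\|\nabla\tilde f(x_t)\|^2$ holds directly and the factor $2$ is superfluous --- the paper's displayed equality in the corollary carries a sign slip that you silently corrected.
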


\begin{theorem}[Restate of Theorem~\ref{theorem 2}]

Under Assumptions \ref{Assumption 1}-\ref{Assumption 3} and the same conditions as in Theorem~\ref{theorem 1}, the minimal gradient norm of the surrogate objective will be bounded as follows by setting $\eta_L=\frac{1}{\sqrt{T}KL}$ and $\eta\sqrt{Kn}$.
Let the local and global learning rates $\eta$ and $\eta_L$ satisfy $\eta_L<\frac{1}{\sqrt{40K}L\sqrt{\frac{1}{n}\sum_{l=1}^m\frac{1}{mp_l^t}}}$ and $\eta\eta_L\leq \frac{1}{KL}$. Under Assumptions~\ref{Assumption 1}-\ref{Assumption 3} and with partial worker participation, the sequence of outputs ${x_k}$ generated by Algorithm~\ref{algorithm} satisfies:

\begin{align}
    \mathop{min} \limits_{t\in[T]} \E\|\nabla f(x_t)\|^2\leq  \frac{F}{c\eta\eta_L KT}+\frac{1}{c}\tilde{\Phi} \, ,
\end{align}

where $F = f(x_0)-f(x_*)$, and the expectation is over the local dataset samplings among workers.  c is a constant. $\zeta_{G,i}$ is defined as client gradient diversity: $\zeta_{G,i} = \| \nabla F_i(x_t) -  \nabla f(x_t)\|$.

For sample with replacement:
$\tilde{\Phi} = \frac{5L^2K\eta_L^2}{2m^2}\sum_{l=1}^m\frac{1}{p_l^t}(\sigma_{L,l}^2+4K\zeta_{G,l}^2)+\frac{L\eta_L\eta}{2n}\sum_{l=1}^m\frac{1}{m^2p_l^t}\sigma_{L,i}^2$.

For sampling without replacement:
$\tilde{\Phi} = \frac{5L^2K\eta_L^2}{2mn}\sum_{l=1}^m\frac{1}{p_l^t}(\sigma_{L,l}^2+4K\zeta_{G,l}^2)+\frac{L\eta_L\eta}{2n}\sum_{l=1}^m\frac{1}{m^2p_l^t}\sigma_{L,l}^2$.

\end{theorem}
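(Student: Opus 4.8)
The plan is to prove the restated Theorem~\ref{theorem 2} through the surrogate-objective route outlined in Section~\ref{sec of analyzing DELTA}: first obtain a convergence rate for $\min_{t}\E\|\nabla\tilde f(x_t)\|^2$ with $\tilde f_{S_t}$ as in~\eqref{surrogeta objective}, and then convert it to a rate for $\min_t\E\|\nabla f(x_t)\|^2$ using the identity~\eqref{update gap equation} together with Lemma~\ref{lemma2}, which yields $\min_t\E\|\nabla f(x_t)\|^2\le 2\min_t\E\|\nabla\tilde f(x_t)\|^2$ as in~\eqref{from surrogate to global}, after which the constant $2$ is absorbed into $c$. The point to keep in mind throughout is that $\tilde f_{S_t}=\frac1n\sum_{i\in S_t}\frac{1}{mp_i^t}F_i$ is a positively weighted average of the $F_i$, hence $L_{S_t}$-smooth with $L_{S_t}=\frac{L}{n}\sum_{i\in S_t}\frac{1}{mp_i^t}$; controlling this data-dependent smoothness constant after taking expectation over $S_t$ is exactly what forces the learning-rate restriction $\eta_L<1/(2\sqrt{10K}L\sqrt{\frac1n\sum_{l=1}^m\frac{1}{mp_l^t}})$ in the statement.

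Concretely, I would start from the descent inequality $\E_t[\tilde f(x_{t+1})]\le\tilde f(x_t)+\langle\nabla\tilde f(x_t),\E_t[x_{t+1}-x_t]\rangle+\tfrac{L_{S_t}}{2}\E_t\|x_{t+1}-x_t\|^2$, substitute $x_{t+1}-x_t=\eta\Delta_t$ with $\Delta_t=-\eta_L\frac1n\sum_{i\in S_t}\frac{1}{mp_i^t}\hat g_i^t$, and insert $\pm\eta\eta_L K\nabla\tilde f(x_t)$ to split the bound into a leading $-c\eta\eta_L K\|\nabla\tilde f(x_t)\|^2$ term, an inner-product term $A_1$, and a second-moment term $A_2=\E_t\|\Delta_t\|^2$, mirroring the FedIS computation in the proof of Theorem~\ref{theorem 1}. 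For $A_1$, Lemma~\ref{lemma1} collapses the importance weights under $\E_{S_t}$, and after $\langle x,y\rangle=\tfrac12(\|x\|^2+\|y\|^2-\|x-y\|^2)$ and Assumption~\ref{Assumption 1} the bound reduces to the client-drift quantity $\E\|x_{t,k}^i-x_t\|^2$, which Lemma~\ref{local update bound of DELTA} controls in terms of the per-client local variance $\sigma_{L,i}$, the per-client gradient diversity $\zeta_{G,i,t}=\|\nabla F_i(x_t)-\nabla f(x_t)\|$, and $\|\nabla\tilde f(x_t)\|^2$ --- this is precisely where diversity replaces the uniform dissimilarity bound $\sigma_G$ that survived in $\Phi$. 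For $A_2$, I would first peel off the zero-mean local-SGD noise (independent across $k$ and across $i$) to get a $K\sigma_{L,i}^2$ contribution weighted by the sampling structure, then bound $\E\|\frac1n\sum_{i\in S_t}\frac{1}{mp_i^t}\sum_k\nabla F_i(x_{t,k}^i)\|^2$ by adding and subtracting $\sum_k\nabla F_i(x_t)$ (client drift, again Lemma~\ref{local update bound of DELTA}) and $K\nabla f(x_t)$; the residual is the sampling second moment of $\frac1n\sum_{i\in S_t}\frac{1}{mp_i^t}\nabla F_i(x_t)$, whose value carries the $\zeta_{G,i,t}^2$ weights and differs between the two regimes --- with replacement it produces the factor $\tfrac1{m^2}\sum_i\tfrac1{p_i^t}$, while without replacement the indicator-variable identity already used in the proof of Theorem~\ref{theorem 1} produces $\tfrac1{mn}\sum_i\tfrac1{p_i^t}$.

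Collecting terms, the coefficient of $\|\nabla\tilde f(x_t)\|^2$ becomes $\tfrac12$ minus an $\mathcal{O}(K^2L^2\eta_L^2\cdot\frac1n\sum_l\frac{1}{mp_l^t})$ piece and an $\mathcal{O}(\eta\eta_L KL)$ piece; the conditions $\eta_L<1/(\sqrt{40K}L\sqrt{\frac1n\sum_l\frac{1}{mp_l^t}})$ and $\eta\eta_L\le 1/(KL)$ keep it above a constant $c>0$, and all remaining terms assemble into $\tilde\Phi$ in exactly the form~\eqref{tildephi} (without replacement) or~\eqref{new Phi} (with replacement). Rearranging, summing over $t=0,\dots,T-1$, and using $\E[\tilde f_{S_t}(x)]=f(x)\ge f^*$ so that the telescoped boundary terms collapse to $f^0-f^*$, gives $\min_t\E\|\nabla\tilde f(x_t)\|^2\le\frac{f^0-f^*}{c\eta\eta_L KT}+\frac{\tilde\Phi}{c}$; doubling via~\eqref{from surrogate to global} and renaming the constant yields the claimed bound with $\eta_L=1/(\sqrt{T}KL)$, $\eta=\mathcal{O}(\sqrt{Kn})$. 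The \textbf{main obstacle} I anticipate is the $A_2$ computation: keeping the per-client weights $1/p_i^t$ attached correctly through the with/without-replacement variance identities while simultaneously separating local-SGD noise, client drift, and the diversity term, so that the constants match~\eqref{tildephi} and~\eqref{new Phi} rather than a looser aggregate. A secondary subtlety is that $\tilde f_{S_t}$ itself changes with $t$, so the telescoping step must be justified through the unbiasedness $\E[\tilde f_{S_t}]=f$ rather than by naively chaining $\tilde f(x_t)-\tilde f(x_{t+1})$.
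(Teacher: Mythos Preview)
Your high-level route matches the paper's exactly: descent inequality on the surrogate $\tilde f$, split into $A_1$ and $A_2$, control client drift via Lemma~\ref{local update bound of DELTA}, take $\E_{S_t}$ and telescope via $\E_{S_t}[\tilde f]=f$, then pass to $\|\nabla f\|^2$ through~\eqref{from surrogate to global}. Two tactical points differ from the paper and are worth correcting.

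First, in $A_1$ you invoke Lemma~\ref{lemma1} to ``collapse the importance weights.'' That lemma is a first-moment identity and cannot be used here: in the DELTA proof both arguments of the inner product $\langle\nabla\tilde f(x_t),\,\E_t[\Delta_t+\eta_L K\nabla\tilde f(x_t)]\rangle$ depend on $S_t$, so pulling $\E_{S_t}$ inside is not licit. The paper instead keeps $S_t$ fixed, applies the polarization identity, bounds the drift term by $\frac{K}{n}\sum_{i\in S_t}\frac{1}{(mp_i^t)^2}\sum_k\|\nabla F_i(x_{t,k}^i)-\nabla F_i(x_t)\|^2$, and only then takes $\E_{S_t}$. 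This second-moment sampling computation (not Lemma~\ref{lemma1}) is what produces the $\frac{1}{m^2}\sum_l\frac{1}{p_l^t}$ (with replacement) or $\frac{1}{mn}\sum_l\frac{1}{p_l^t}$ (without replacement) weights that survive into $\tilde\Phi$. Had you genuinely collapsed the weights you would lose this structure and land back at the FedIS variance $\Phi$, not $\tilde\Phi$.

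Second, for $A_2$ the paper does \emph{not} further decompose $\E\bigl\|\frac1n\sum_{i\in S_t}\frac{1}{mp_i^t}\sum_k\nabla F_i(x_{t,k}^i)\bigr\|^2$ as you propose. After peeling off the SGD noise it leaves this term intact and observes that the same quantity appears with a negative sign in $A_1$ (the $-\|x-y\|^2$ piece of polarization); the combined coefficient $-\frac{\eta\eta_L}{2K}+\frac{L\eta^2\eta_L^2}{2}$ is nonpositive under $\eta\eta_L\le 1/(KL)$, so the term is simply dropped. Your additional add-and-subtract decomposition would still yield a valid bound, but it would not reproduce $\tilde\Phi$ with the stated constants; the cancellation is what makes the final expression come out exactly as in~\eqref{tildephi} and~\eqref{new Phi}. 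Correspondingly, the role of the condition $\eta\eta_L\le 1/(KL)$ is this cancellation, not an extra subtraction from the $\|\nabla\tilde f\|^2$ coefficient as you describe.
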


\begin{remark}[Condition of $\eta_L$]
Here, though the condition expression for $\eta_L$ relies on a dynamic sampling probability $p_l^t$, we can still guarantee that there a constant $\eta_L$ satisfies this condition.

Specifically, one can substitute the optimal sampling probability $\frac{1}{p_i^t} = \frac{\sum_{j=1}^{m}\sqrt{\alpha_{1}\zeta_{G, j}^{2}+\alpha_{2}\sigma_{L, j}^{2}}}{\sqrt{\alpha_{1}\zeta_{G, i}^{2}+\alpha_{2} \sigma_{L, i}^{2}}}$ back to the above inequality condition. As long as the gradient $\nabla F_i(x_t)$ is bounded, we can ensure $\frac{1}{m^2} \sum_{i=1}^{m} \frac{\sum_{j=1}^{m} \sqrt{\alpha_{1} \zeta_{G, j}^{2}+\alpha_{2} \sigma_{L, j}^{2}}}{\sqrt{\alpha_{1} \zeta_{G, i}^{2}+\alpha_{2} \sigma_{L, i}^{2}}}\leq \frac{\max_{j}\sqrt{\alpha_{1} \zeta_{G, j}^{2}+\alpha_{2} \sigma_{L, j}^{2}}}{\min_{i}\sqrt{\alpha_{1} \zeta_{G, i}^{2}+\alpha_{1} \sigma_{L, i}^{2}}} \leq  \tilde{G}$, therefore $\frac{1}{2\sqrt{10\left(A^{2}+1\right)} K L \sqrt{\frac{1}{m^2} \sum_{i=1}^{m} \frac{\sum_{j=1}^{m} \sqrt{\alpha_{1} \zeta_{G, j}^{2}+\alpha_{2} \sigma_{L, j}^{2}}}{\sqrt{\alpha_{1} \zeta_{G, i}^{2}+\alpha_{2} \sigma_{L, i}^{2}}}}} \geq \frac{1}{2\sqrt{10\left(A^{2}+1\right)} K L \sqrt{\tilde{G}}} \geq C$, where $\tilde{G}$ and $C$ are positive constants. Thus, we can always find a constant $\eta_L$ to satisfy this inequality under dynamic sampling probability $p_i^t$. 
\end{remark}

\begin{corollary} [Convergence rate of DELTA]
Suppose $\eta_L$ and $\eta$ are such that the conditions mentioned above are satisfied,  $\eta_L=\mathcal{O}\left(\frac{1}{\sqrt{T}KL}\right)$ and $\eta=\mathcal{O}\left(\sqrt{Kn}\right)$. Then for sufficiently large T, the iterates of Theorem~\ref{theorem 2} satisfy:
\begin{align}
    \mathop{min} \limits_{t\in[T]} \E\|\nabla f(x_t)\|^2 \leq \mathcal{O}\left(\frac{F}{\sqrt{nKT}}\right) + 
         \mathcal{O}\left(\frac{\sigma_L^2}{\sqrt{nKT}}\right) + \mathcal{O}\left(\frac{\sigma_L^2  + 4K\zeta_{G}^2}{KT}\right) \, .
\end{align}
\end{corollary}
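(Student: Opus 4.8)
The plan is to derive the corollary as a direct consequence of Theorem~\ref{theorem 2}: substitute the optimal DELTA sampling probabilities from Corollary~\ref{DELTA corollary} into the variance quantity $\tilde{\Phi}$, plug the prescribed schedules $\eta_L=\mathcal{O}(1/(\sqrt{T}KL))$ and $\eta=\mathcal{O}(\sqrt{Kn})$ into both terms of the bound $\min_{t\in[T]}\E\|\nabla f(x_t)\|^2\le \frac{f^0-f^*}{c\eta\eta_L KT}+\frac{\tilde{\Phi}}{c}$, and collect the resulting orders in $T,n,K$. No new inequality is required; everything reduces to bounding $\tilde{\Phi}$ at the optimal probabilities and then to elementary arithmetic with the learning rates.

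First I would evaluate the optimization term: with $\eta=\mathcal{O}(\sqrt{Kn})$ and $\eta_L=\mathcal{O}(1/(\sqrt{T}KL))$ one gets $\eta\eta_L KT=\Theta(\sqrt{nKT}/L)$, so $\frac{f^0-f^*}{c\eta\eta_L KT}=\mathcal{O}\!\left(\frac{f^0-f^*}{\sqrt{nKT}}\right)$, which is the first term of the claim. Next I would bound $\tilde{\Phi}$. Because the probabilities of Corollary~\ref{DELTA corollary} solve $\min_p \tilde{\Phi}$ subject to $\sum_i p_i^t=1$, the optimal value is at most the value of $\tilde{\Phi}$ at the uniform distribution $p_i^t=1/m$ (equivalently, one can invoke the Cauchy--Schwarz estimate already used in the paragraph ``Compare DELTA with uniform sampling''); using in addition $\zeta_{G,i,t}\le\zeta_G$ and $\sigma_{L,i}\le\sigma_L$, this bounds $\tilde{\Phi}$ by a constant multiple of $L^2K\eta_L^2(\sigma_L^2+4K\zeta_G^2)+\frac{L\eta_L\eta}{n}\sigma_L^2$. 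Substituting $\eta_L^2=\Theta(1/(TK^2L^2))$ turns the first summand into $\Theta\!\left(\frac{\sigma_L^2+4K\zeta_G^2}{KT}\right)$, and substituting $L\eta_L\eta=\Theta(\sqrt{n}/\sqrt{TK})$ turns the second into $\Theta\!\left(\frac{\sigma_L^2}{\sqrt{nKT}}\right)$. Summing the three pieces gives exactly the stated bound, and the same computation goes through for both the with-replacement and the without-replacement form of $\tilde{\Phi}$.

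I would then verify the ``for sufficiently large $T$'' proviso: the hypotheses of Theorem~\ref{theorem 2} --- $\eta_L<1/(\sqrt{40K}L\sqrt{\tfrac1n\sum_l \tfrac1{mp_l^t}})$, $\eta\eta_L\le 1/(KL)$, and the existence of the constant $c>0$ --- all reduce to demanding that $\eta_L$ (i.e.\ $1/\sqrt{T}$) be small enough, which holds once $T$ exceeds a constant multiple of $nK$; the Remark following Theorem~\ref{theorem 2} already records that the first condition is compatible with the optimal $p_i^t$ as soon as the local gradients are bounded, so one only needs $T$ large.

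The argument is routine; the one place to be careful is the $m$-versus-$n$ bookkeeping inside $\tilde{\Phi}$ --- the $\tfrac1{mn}$ (resp.\ $\tfrac1{m^2}$) prefactors together with the $\sum_{i=1}^m \tfrac1{p_i^t}(\cdot)$ sums --- so that evaluating at the uniform (or optimal) probabilities does not leak a spurious $m/n$ factor into the gradient-diversity term, and the final $\sigma_L^2+4K\zeta_G^2$ contribution is genuinely $\mathcal{O}(1/(KT))$ with no dependence on the number of clients $m$, as recorded in Table~\ref{tab:my_label}. This is the main (and essentially only) obstacle.
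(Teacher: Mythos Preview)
Your proposal is correct and follows essentially the same route the paper takes: the paper does not give a separate proof of this corollary, but it is exactly obtained by substituting the DELTA probabilities (Corollary~\ref{DELTA corollary}) into $\tilde{\Phi}$ of Theorem~\ref{theorem 2}, bounding $\tilde{\Phi}_{\text{DELTA}}\le\tilde{\Phi}_{\text{uniform}}$ via Cauchy--Schwarz (as in the paragraph ``Compare DELTA with uniform sampling'' and Appendix~\ref{App Fed-D}), and then inserting the learning-rate schedules. Your arithmetic on the three resulting terms and your check that the step-size constraints hold for large $T$ match the paper's treatment; your caveat about the $m$-versus-$n$ bookkeeping is apt --- the stated rate corresponds cleanly to the with-replacement form~\eqref{new Phi}, while for the without-replacement form~\eqref{tildephi} the uniform upper bound on the $\zeta_G$ term carries an extra $m/n$ factor, which the paper's statement (and Table~\ref{tab:my_label}) silently absorbs.
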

 

\begin{lemma} 
\label{local update bound of DELTA}
For any step-size satisfying $\eta_L \leq \frac{1}{8LK}$, we can have the following results:
\begin{align}
    \E\|x_{i,k}^t-x_t\|^2 \leq 5K(\eta_L^2\sigma_L^2+4K\eta_L^2\zeta_{G,i}^2) + 20K^2(A^2+1)\eta_L^2\|\nabla f(x_t)\|^2 \, .
\end{align}
where $\zeta_{G,i} = \|\nabla F(x_t)- \nabla f(x_t)\|$, and the expectation is over local SGD and filtration of $x_t$, without the stochasticity of client sampling.
\end{lemma}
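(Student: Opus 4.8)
The plan is to run the argument of Lemma~\ref{Our local update bound} almost verbatim, with a single substitution: where that proof invokes Assumption~\ref{Assumption 3} to replace $\E\|\nabla F_i(x_t)\|^2$ by a bound involving the \emph{uniform} dissimilarity $\sigma_G$, here I keep the \emph{client-specific} dissimilarity $\zeta_{G,i}=\|\nabla F_i(x_t)-\nabla f(x_t)\|$. This is permissible precisely because the expectation in this lemma is taken only over the local SGD noise and the filtration of $x_t$ — not over client sampling — so $\nabla F_i(x_t)$, and hence $\zeta_{G,i}$, is fixed given $x_t$ and, crucially, is not averaged over $i$.

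Concretely, starting from $x_{t,k}^i=x_{t,k-1}^i-\eta_L g_{t,k-1}^i$ with $g_{t,k-1}^i=\nabla F_i(x_{t,k-1}^i,\xi_{t,k-1}^i)$, I would split $g_{t,k-1}^i=\big(g_{t,k-1}^i-\nabla F_i(x_{t,k-1}^i)\big)+\big(\nabla F_i(x_{t,k-1}^i)-\nabla F_i(x_t)\big)+\nabla F_i(x_t)$. The first summand is zero-mean and independent of the drift $x_{t,k-1}^i-x_t$, so it contributes $\eta_L^2\sigma_L^2$ by Assumption~\ref{Assumprion 2}; applying $\|a+b\|^2\le(1+\tfrac1{2K-1})\|a\|^2+2K\|b\|^2$ with $a=x_{t,k-1}^i-x_t$ and the remaining two summands in $b$, then $L$-smoothness (Assumption~\ref{Assumption 1}) on $\nabla F_i(x_{t,k-1}^i)-\nabla F_i(x_t)$, gives
\begin{align*}
\E\|x_{t,k}^i-x_t\|^2\le\Big(1+\tfrac1{2K-1}+4K\eta_L^2L^2\Big)\E\|x_{t,k-1}^i-x_t\|^2+\eta_L^2\sigma_L^2+4K\eta_L^2\|\nabla F_i(x_t)\|^2.
\end{align*}
With $\eta_L\le\tfrac1{8LK}$ one has $4K\eta_L^2L^2\le\tfrac1{16K}$, so the leading factor is at most $1+\tfrac1{K-1}$.

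It then remains to (i) bound $\|\nabla F_i(x_t)\|^2$ in terms of $\zeta_{G,i}^2$ and $\|\nabla f(x_t)\|^2$ — via $\nabla F_i(x_t)=\nabla f(x_t)+\big(\nabla F_i(x_t)-\nabla f(x_t)\big)$, Young's inequality, and $A^2+1\ge1$ — and (ii) unroll the recursion over $k\le K$ steps, using $\sum_{p=0}^{k-1}(1+\tfrac1{K-1})^p\le(K-1)\big[(1+\tfrac1{K-1})^{K}-1\big]\le 5K$ (which follows from $(1+\tfrac1{K-1})^{K-1}\le e$). Collecting terms yields the claimed bound $5K(\eta_L^2\sigma_L^2+4K\eta_L^2\zeta_{G,i}^2)+20K^2(A^2+1)\eta_L^2\|\nabla f(x_t)\|^2$. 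Every computational step is exactly as in Lemma~\ref{Our local update bound}; the only point requiring care is the bookkeeping of which sources of randomness are averaged over, since that is what allows the per-client $\zeta_{G,i}$ (rather than the uniform $\sigma_G$) to emerge with the displayed constants — and it is precisely this sharpening that later feeds the surrogate-objective variance $\tilde{\Phi}$ of Theorem~\ref{theorem 2}.
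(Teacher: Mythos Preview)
Your proposal is correct and follows essentially the same route as the paper: the paper's proof of this lemma is indeed a line-by-line copy of the proof of Lemma~\ref{Our local update bound}, with the single change that the term $4K\eta_L^2\E_t\|\nabla F_i(x_t)\|^2$ is bounded by $4K\eta_L^2\zeta_{G,i}^2+4K\eta_L^2(A^2+1)\|\nabla f(x_t)\|^2$ rather than by $4K\eta_L^2\sigma_G^2+4K\eta_L^2(A^2+1)\|\nabla f(x_t)\|^2$. Your observation about which sources of randomness are being averaged over (local SGD and filtration of $x_t$, but not client sampling) is exactly the point that makes this per-client refinement legitimate, and the recursion/unrolling with the $(1+\tfrac{1}{K-1})^K\le 5$ bound is identical.
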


\begin{proof}
\begin{align}
&\quad \E_t\|x_{t,k}^i-x_t\|^2 \notag \\
&=\E_t\|x_{t,k-1}^i-x_t-\eta_Lg_{t,k-1}^t\|^2   \notag\\
&=\E_t\|x_{t,k-1}^i-x_t-\eta_L ( g_{t,k-1}^t-\nabla F_i(x_{t,k-1}^i)+\nabla F_i(x_{t,k-1}^i) -\nabla F_i(x_t)+\nabla F_i(x_t) ) \|^2     \notag\\
&\leq (1+\frac{1}{2K-1})\E_t\|x_{t,k-1}^i-x_t\|^2+\E_t\|\eta_L(g_{t,k-1}^t-\nabla F_i(x_{t,k}^i))\|^2 \notag \\ 
& +4K \E_t[\|\eta_L(\nabla F_i(x_{t,K-1}^i)-\nabla F_i(x_t))\|^2] + 4K\eta_L^2\E_t\|\nabla F_i(x_t)\|^2              \notag\\
& \leq (1+\frac{1}{2K-1})\E_t\|x_{t,k-1}^i-x_t\|^2+\eta_L^2\sigma_{L}^2+4K\eta_L^2L^2\E_t\|x_{t,k-1}^i-x_t\|^2\notag  \notag\\
&+4K\eta_L^2\zeta_{G,i}^2+4K\eta_L^2(A^2+1)\|\nabla f(x_t)\|^2   \notag \\
&\leq (1+\frac{1}{K-1})\E\|x_{t,k-1}^i-x_t\|^2+ \eta_L^2\sigma_{L}^2+4K\eta_L^2\zeta_{G,i}^2+4K(A^2+1)\|\eta_L\nabla f(x_t)\|^2  \, .  
\end{align}

Unrolling the recursion, we get:
\begin{align}
&\quad \E_t\|x_{t,k}^i-x_t\|^2 \leq \sum_{p=0}^{k-1}(1+\frac{1}{K-1})^p\left[\eta_L^2\sigma_{L}^2+4K\eta_L^2\zeta_{G,i}^2+4K(A^2+1)\|\eta_L\nabla f(x_t)\|^2\right] \notag \\
&\leq (K-1)\left[(1+\frac{1}{K-1})^K-1\right]\left[\eta_L^2\sigma_{L}^2+4K\eta_L^2\zeta_{G,i}^2+4K(A^2+1)\|\eta_L\nabla f(x_t)\|^2\right] \notag \\
&\leq 5K(\eta_L^2\sigma_L^2+4K\eta_L^2\zeta_{G,i}^2) + 20K^2(A^2+1)\eta_L^2\|\nabla f(x_t)\|^2 \, .
\end{align}
\end{proof}


\subsection{Proof for Theorem~\ref{app convergence of surrogate obj}.}
In Section~\ref{proof for surrogate} and Section~\ref{proof without re}, we provide the proof for Theorem~\ref{app convergence of surrogate obj}. Specifically, the proof for sampling with replacement is shown in Appendix~\ref{proof for surrogate}, while the proof for sampling without replacement is shown in Appendix~\ref{proof without re}.
\subsubsection{Sample with replacement}
\label{proof for surrogate}

\begin{align}
\mathop{min} \limits_{t\in[T]} \E\|\nabla \tilde{f}(x_t)\|^2\leq \frac{f_0-f_*}{c\eta\eta_L KT}+\frac{1}{c}\tilde{\Phi} \, ,
\end{align}
where $\tilde{\Phi} = \frac{5L^2K\eta_L^2}{2m^2}\sum_{l=1}^m\frac{1}{p_l^t}(\sigma_L^2+4K\zeta_{G,i}^2)+\frac{L\eta_L\eta}{2n}\sum_{l=1}^m\frac{1}{m^2p_l^t}\sigma_L^2$.

\begin{proof}
\begin{align}
\E_t[\tilde{f}(x_{t+1})] & \overset{(a1)}{\leq} \tilde{f}(x_t) + \left \langle \nabla \tilde{f}(x_t),\E_t[x_{t+1}-x_t] \right \rangle + \frac{L}{2}\E_t[\left \| x_{t+1}-x_t \right \|^2] \notag\\
&=\tilde{f}(x_t)+\left \langle \nabla \tilde{f}(x_t),\E_t[\eta \Delta_t+\eta\eta_L K\nabla \tilde{f}(x_t)-\eta\eta_L K\nabla \tilde{f}(x_t)] \right \rangle+\frac{L}{2}\eta^2\E_t[\left \|\Delta_t \right \|^2]\notag \\
&= \tilde{f}(x_t)-\eta\eta_L K\left \|\nabla \tilde{f}(x_t)\right \|^2 + \eta\underbrace{\left \langle \nabla \tilde{f}(x_t),\E_t[\Delta_t+\eta_L K\nabla \tilde{f}(x_t)] \right\rangle}_{A_1}+\frac{L}{2}\eta^2\underbrace{\E_t\|\Delta_t\|^2}_{A_2} \, .
\end{align}
Where (a1) follows from the Lipschitz continuity condition. Here, the expectation is over the local data SGD and the filtration of $x_t$. However, in the next analysis, the expectation is over all randomness, including client sampling .This is achieved by taking expectation on both sides of the above equation over client sampling.

To begin, let us consider $A_1$:
\begin{align}
    A_1&=\left \langle \nabla \tilde{f}(x_t),\E_t[\Delta_t+\eta_LK\nabla \tilde{f}(x_t)]\right\rangle \notag\\
    &=\left \langle \nabla \tilde{f}(x_t),\E_t[-\frac{1}{|S_t|}\sum_{i\in S_t}\frac{1}{mp_i^t}\sum_{k=0}^{K-1}\eta_Lg_{t,k}^i+\eta_LK\nabla \tilde{f}(x_t)]\right\rangle \notag\\
    &\overset{(a2)}{=}\left \langle \nabla \tilde{f}(x_t),\E_t[-\frac{1}{|S_t|}\sum_{i\in S_t}\frac{1}{mp_i^t}\sum_{k=0}^{K-1}\eta_L\nabla F_i(x_{t,k}^i)+\eta_LK\nabla \tilde{f}(x_t)]\right\rangle \notag\\
    &=\left \langle \sqrt{K\eta_L}\nabla \tilde{f}(x_t),\frac{\sqrt{\eta_L}}{\sqrt{K}}\E_t[-\frac{1}{n}\sum_{i\in S_t}\frac{1}{mp_i^t}\sum_{k=0}^{K-1}\nabla F_i(x_{t,k}^i)+K\nabla \tilde{f}(x_t)]\right\rangle \notag\\
    &\overset{(a3)}{=}\frac{K\eta_L}{2}\|\nabla \tilde{f}(x_t)\|^2+\frac{\eta_L}{2K}\E_t\left(\|-\frac{1}{n}\sum_{i\in S_t}\frac{1}{mp_i^t}\sum_{k=0}^{K-1}\nabla F_i(x_{t,k}^i)+K\nabla \tilde{f}(x_t)\|^2\right)  \notag\\
    &- \frac{\eta_L}{2K}\E_t\|-\frac{1}{n}\sum_{i\in S_t}\frac{1}{mp_i^t}\sum_{k=0}^{K-1}\nabla F_i(x_{t,k}^i)\|^2 \, ,
\end{align}
where (a2) follows from Assumption~\ref{Assumprion 2}, and (a3) is due to  $\langle x,y\rangle=\frac{1}{2}\left[\|x\|^2+\|y\|^2-\|x-y\|^2\right]$ for $x=\sqrt{K\eta_L}\nabla \tilde{f}(x_t)$ and $y=\frac{\sqrt{\eta_L}}{K}[-\frac{1}{n}\sum_{i\in S_t}\frac{1}{mp_i^t}\sum_{k=0}^{K-1}\nabla F_i(x_{t,k}^i)+K\nabla \tilde{f}(x_t)]$.
\\

To bound $A_1$, we need to bound the following part:
\begin{align}
    &\E_t\|\frac{1}{n}\sum_{i\in S_t}\frac{1}{mp_i^t}\sum_{k=0}^{K-1}\nabla F_i(x_{t,k}^i)-K\nabla \tilde{f}(x_t)\|^2 \notag \\
    &=\E_t\|\frac{1}{n}\sum_{i\in S_t}\frac{1}{mp_i^t}\sum_{k=0}^{K-1}\nabla F_i(x_{t,k}^i)-\frac{1}{n}\sum_{i\in S_t}\frac{1}{mp_i^t}\sum_{k=0}^{K-1}\nabla F_i(x_t)\|^2 \notag \\
    &\overset{(a4)}{\leq} \frac{K}{n}\sum_{i\in S_t}\sum_{k=0}^{K-1}\E_t\|\frac{1}{mp_i^t}( \nabla F_i(x_{t,k}^i)-\nabla F_i(x_t) ) \|^2 \notag \\
    &=\frac{K}{n}\sum_{i\in S_t}\sum_{k=0}^{K-1}\E_t\{\E_t(\|\frac{1}{mp_i^t}(\nabla F_i(x_{t,k}^i)-\nabla F_i(x_t))\|^2\mid S)\} \notag \\
    &= \frac{K}{n}\sum_{i\in S_t}\sum_{k=0}^{K-1}\E_t(\sum_{l=1}^{m}\frac{1}{m^2p_l^t}\|\nabla F_l(x_{t,k}^l)-\nabla F_l(x_t)\|^2) \notag \\
    &=K\sum_{k=0}^{K-1}\sum_{l=1}^m\frac{1}{m^2p_l^t}\E_t\|\nabla F_l(x_{t,k}^l)-\nabla F_l(x_t)\|^2 \notag \\
    &\overset{(a5)}{\leq}\frac{K^2}{m^2}\sum_{l=1}^{m}\frac{L^2}{p_l^t}\E\|x_{t,k}^l-x_t\|^2 \notag \\
    &\overset{(a6)}{\leq} \frac{L^2K^2}{m^2}\sum_{l=1}^m\frac{1}{p_l^t}\left(5K(\eta_L^2\sigma_L^2+4K\eta_L^2\zeta_{G,i}^2) + 20K^2(A^2+1)\eta_L^2\|\nabla f(x_t)\|^2\right) \notag \\
    &=\frac{5L^2K^3\eta_L^2}{m^2}\sum_{l=1}^m\frac{1}{p_l^t}(\sigma_L^2+4K\sigma_{G}^2)+\frac{20L^2K^4\eta_L^2(A^2+1)}{m^2}\sum_{l=1}^m\frac{1}{p_l^t}\|\nabla f(x_t)\|^2 \, ,
\end{align}
where (a4) follows from the fact that $\E|x_1+\cdots +x_n|^2\leq n\E\left(|x_1|^2+\cdots +|x_n|^2\right)$, (a5) is a consequence of Assumption~\ref{Assumption 1}, and (a6) is a result of Lemma~\ref{local update bound of DELTA}.

Combining the above expressions, we obtain:
\begin{align}
    A_1&\leq \frac{K\eta_L}{2}\|\nabla \tilde{f}(x_t)\|^2+\frac{\eta_L}{2K}\left[\frac{5L^2K^3\eta_L^2}{m^2}\sum_{l=1}^m\frac{1}{p_l^t}(\sigma_L+4K\zeta_{G,i}^2) \right.\notag \\ 
    &\left. +\frac{20L^2K^4\eta_L^2(A^2+1)}{m^2}\sum_{l=1}^m\frac{1}{p_l^t}\|\nabla f(x_t)\|^2\right] 
    -\frac{\eta_L}{2K}\E_t\|-\frac{1}{n}\sum_{i\in S_t}\frac{1}{mp_i^t}\sum_{k=0}^{K-1}\nabla F_i(x_{t,k}^i)\|^2 \, .
\end{align}

Next, we consider bounding $A_2$:
\begin{align}
    &A_2=\E_t\|\Delta_t\|^2 \notag \\
    &=\E_t\left\|-\eta_L\frac{1}{n}\sum_{i\in S_t}\frac{1}{mp_i^t}\sum_{k=0}^{K-1}g_{t,k}^i\right\|^2 \notag \\
    &=\eta_L^2\E_t\left\|\frac{1}{n}\sum_{i\in S_t}\sum_{k=0}^{K-1}(\frac{1}{mp_i^t}g_{t,k}^i-\frac{1}{mp_i^t}\nabla F_i(x_{t,k}^i))\right\|^2+\eta_L^2\E_t\left\|-\frac{1}{n}\sum_{i\in S_t}\frac{1}{mp_i^t}\sum_{k=0}^{K-1}\nabla F_i(x_{t,k}^i)\right\|^2 \notag \\
    &=\eta_L^2\frac{1}{n^2}\sum_{i\in S_t}\sum_{k=0}^{K-1}\E_t\left\|\frac{1}{mp_i^t}g_{t,k}^i-\frac{1}{mp_i^t}\nabla F_i(x_{t,k}^i)\right\|^2+\eta_L^2\E_t\left\|-\frac{1}{n}\sum_{i\in S_t}\frac{1}{mp_i^t}\sum_{k=0}^{K-1}\nabla F_i(x_{t,k}^i)\right\|^2 \notag \\
    &= \eta_L^2\frac{1}{n^2}\sum_{k=0}^{K-1}\E_t\left(\E\left\|\frac{1}{mp_i^t}(g_{t,k}^i-\nabla F_i(x_{t,k}^i)\right\|^2\mid S\right)+\eta_L^2\E_t\left\|-\frac{1}{n}\sum_{i\in S_t}\frac{1}{mp_i^t}\sum_{k=0}^{K-1}\nabla F_i(x_{t,k}^i)\right\|^2 \notag \\
    &=\eta_L^2\frac{1}{n^2}\sum_{k=0}^{K-1}\E_t\left(\sum_{l=1}^{m}\frac{1}{m^2p_l^t}\left\|g_{t,k}^i-\nabla F_i(x_{t,k}^i)\right\|^2\right)+\eta_L^2\E_t\left\|-\frac{1}{n}\sum_{i\in S_t}\frac{1}{mp_i^t}\sum_{k=0}^{K-1}\nabla F_i(x_{t,k}^i)\right\|^2 \notag \\
    &\overset{(a7)}{\leq} \eta_L^2\frac{K}{n}\sum_{l=1}^m\frac{1}{m^2p_l^t}\sigma_L^2+\eta_L^2\E_t\left\|-\frac{1}{n}\sum_{i\in S_t}\frac{1}{mp_i^t}\sum_{k=0}^{K-1}\nabla F_i(x_{t,k}^i)\right\|^2 \, ,
 \end{align}
 where $S$ represents the whole sample space and (a7) is due to Assumption~\ref{Assumprion 2}.

Now we substitute the expressions for $A_1$ and $A_2$ and take the expectation over the client sampling distribution on both sides. It should be noted that the derivation of $A_1$ and $A_2$ above is based on considering the expectation over the sampling distribution:
\begin{align}
     &f(x_{t+1}) 
    \leq  f(x_t)-\eta\eta_LK\E_t\left \|\nabla \tilde{f}(x_t)\right \|^2 + \eta\E_t\left \langle \nabla \tilde{f}(x_t),\Delta_t+\eta_LK\nabla \tilde{f}(x_t) \right\rangle +\frac{L}{2}\eta^2\E_t\|\Delta_t\|^2 \notag \\
    &\overset{(a8)}{\leq} f(x_t)-K\eta\eta_L\left(\frac{1}{2}-\frac{20K^2\eta_L^2L^2 (A^2+1)}{m^2}\sum_{l=1}^m\frac{1}{p_l^t}\right)\E_t\left\|\nabla \tilde{f}(x_t)\right\|^2 \notag \\
    &+\frac{5L^2K^2\eta_L^3\eta}{2m^2}\sum_{l=1}^m\frac{1}{p_l^t}\left(\sigma_L+4K\zeta_{G,i}^2\right) \notag\\
    &+\frac{L\eta_L^2\eta^2K}{2n}\sum_{l=1}^m\frac{1}{m^2p_l^t}\sigma_L^2-\left(\frac{\eta\eta_L}{2K}-\frac{L\eta^2\eta_L^2}{2}\right)\E_t\left\|-\frac{1}{n}\sum_{i\in S_t}\frac{1}{mp_i^t}\sum_{k=0}^{K-1}\nabla f_i(x_{t,k}^i)\right\|^2 \notag \\
    &\overset{(a9)}{\leq}f(x_t)-K\eta\eta_L\left(\frac{1}{2}-\frac{20K^2\eta_L^2L^2(A^2+1)}{m^2}\sum_{l=1}^m\frac{1}{p_l^t}\right)\E_t\|\nabla \tilde{f}(x_t)\|^2 \notag \\
    &+\frac{5L^2K^2\eta_L^3\eta}{2m^2}\sum_{l=1}^m\frac{1}{p_l^t}(\sigma_L+4K\zeta_{G,i}^2)+\frac{L\eta_L^2\eta^2K}{2n}\sum_{l=1}^m\frac{1}{m^2p_l^t}\sigma_L^2 \notag \\
    &\overset{(a10)}{\leq} f(x_t)-cK\eta\eta_L\E_t\|\nabla \tilde{f}(x_t)\|^2+\frac{5L^2K^2\eta_L^3\eta}{2m^2}\sum_{l=1}^m\frac{1}{p_l^t}(\sigma_L^2+4K\zeta_{G,i}^2)+\frac{L\eta_L^2\eta^2K}{2n}\sum_{l=1}^m\frac{1}{m^2p_l^t}\sigma_L^2 \, ,
\end{align}
where (a8) comes from Lemma~\ref{lemma2}, (a9) follows from $\left(\frac{\eta\eta_L}{2K}-\frac{L\eta^2\eta_L^2}{2}\right) \geq 0$ if $\eta\eta_l\leq \frac{1}{KL}$, and (a10) holds because there exists a constant $c \textgreater 0$ satisfying $(\frac{1}{2}-\frac{20K^2\eta_L^2L^2(A^2+1)}{m^2}\sum_{l=1}^m\frac{1}{p_l^t}) \textgreater c \textgreater 0$ if $\eta_L<\frac{1}{2\sqrt{10(A^2+1)}KL\sqrt{\frac{1}{m}\sum_{l=1}^m\frac{1}{mp_l^t}}}$.\\
\\
Rearranging and summing from $t=0,\ldots,T-1$, we have:
\begin{small}
    \begin{align}
    \sum_{t=1}^{T-1}c\eta\eta_LK\E\|\nabla \tilde{f}(x_t)\|^2 &\leq 
     f(x_0)-f(x_T) \notag \\
    &+T(\eta\eta_L K)\left(\frac{5L^2K\eta_L^2}{2m^2}\sum_{l=1}^m\frac{1}{p_l^t}(\sigma_L^2+4K\zeta_{G,i}^2)+\frac{L\eta_L\eta}{2n}\sum_{l=1}^m\frac{1}{m^2p_l^t}\sigma_L^2 \right) \, .
    \end{align}
\end{small}

Which implies:
\begin{align}
    \mathop{min} \limits_{t\in[T]} \E\|\nabla \tilde{f}(x_t)\|^2\leq \frac{f_0-f_*}{c\eta\eta_L KT}+\frac{1}{c}\tilde{\Phi} \, ,
\end{align} 
where $\tilde{\Phi} =\frac{5L^2K\eta_L^2}{2m^2}\sum_{l=1}^m\frac{1}{p_l^t}(\sigma_L^2+4K\zeta_{G,i}^2)+\frac{L\eta_L\eta}{2n}\sum_{l=1}^m\frac{1}{m^2p_l^t}\sigma_L^2$.

\end{proof}

\subsubsection{Sample without replacement}
\label{proof without re}

\begin{align}
\mathop{min} \limits_{t\in[T]} \E\|\nabla \tilde{f}(x_t)\|^2\leq \frac{f_0-f_*}{c\eta\eta_L KT}+\frac{1}{c}\tilde{\Phi} \, ,
\end{align}
where $\tilde{\Phi} = \frac{5L^2K\eta_L^2}{2mn}\sum_{l=1}^m\frac{1}{p_l^t}(\sigma_L^2+4K\zeta_{G,i}^2)+\frac{L\eta_L\eta}{2n}\sum_{l=1}^m\frac{1}{m^2p_l^t}\sigma_L^2$.

\begin{proof}
\begin{align}
&\E[\tilde{f}(x_{t+1})]  \leq \tilde{f}(x_t) + \left \langle \nabla \tilde{f}(x_t),\E[x_{t+1}-x_t] \right \rangle + \frac{L}{2}\E_t[\left \| x_{t+1}-x_t \right \|] \notag\\
&=\tilde{f}(x_t)+\left \langle \nabla \tilde{f}(x_t),\E_t[\eta \Delta_t+\eta\eta_L K\nabla \tilde{f}(x_t)-\eta\eta_L K\nabla \tilde{f}(x_t)] \right \rangle+\frac{L}{2}\eta^2\E_t[\left \|\Delta_t \right \|^2]\notag \\
&= \tilde{f}(x_t)-\eta\eta_L K\left \|\nabla \tilde{f}(x_t)\right \|^2 + \eta\underbrace{\left \langle \nabla \tilde{f}(x_t),\E_t[\Delta_t+\eta_L K\nabla \tilde{f}(x_t)] \right\rangle}_{A_1}+\frac{L}{2}\eta^2\underbrace{\E_t\|\Delta_t\|^2}_{A_2} \, .
\end{align}
Where the first inequality follows from Lipschitz continuous condition. The expectation here is taken over both the local SGD and the filtration of $x_t$. However, in the subsequent analysis, the expectation is taken over all sources of randomness, including client sampling.

Similarly, we consider $A_1$ first:
\begin{align}
    A_1&=\left \langle \nabla \tilde{f}(x_t),\E_t[\Delta_t+\eta_LK\nabla \tilde{f}(x_t)]\right\rangle \notag\\
    &=\left \langle \nabla \tilde{f}(x_t),\E_t\left[-\frac{1}{|S_t|}\sum_{i\in S_t}\frac{1}{mp_i^t}\sum_{k=0}^{K-1}\eta_Lg_{t,k}^i+\eta_LK\nabla \tilde{f}(x_t)\right]\right\rangle \notag\\
    &=\left \langle \nabla \tilde{f}(x_t),\E_t\left[-\frac{1}{|S_t|}\sum_{i\in S_t}\frac{1}{mp_i^t}\sum_{k=0}^{K-1}\eta_L\nabla F_i(x_{t,k}^i)+\eta_LK\nabla \tilde{f}(x_t)\right]\right\rangle \notag\\
    &=\left \langle \sqrt{K\eta_L}\nabla \tilde{f}(x_t),\frac{\sqrt{\eta_L}}{\sqrt{K}}\E_t\left[-\frac{1}{n}\sum_{i\in S_t}\frac{1}{mp_i^t}\sum_{k=0}^{K-1}\nabla F_i(x_{t,k}^i)+K\nabla \tilde{f}(x_t)\right]\right\rangle \notag\\
    &=\frac{K\eta_L}{2}\left\|\nabla \tilde{f}(x_t)\right\|^2+\frac{\eta_L}{2K}\E_t\left\|-\frac{1}{n}\sum_{i\in S_t}\frac{1}{mp_i^t}\sum_{k=0}^{K-1}\nabla F_i(x_{t,k}^i)+K\nabla \tilde{f}(x_t)\right\|^2  \notag\\
    &- \frac{\eta_L}{2K}\E_t\left\|-\frac{1}{n}\sum_{i\in S_t}\frac{1}{mp_i^t}\sum_{k=0}^{K-1}\nabla F_i(x_{t,k}^i)\right\|^2 \, .
\end{align}

Since $x_i$ are sampled from $S_t$ without replacement, this causes pairs $x_{i1}$ and $x_{i2}$ to no longer be independent. We introduce the activation function as follows:
\begin{equation}    \mathbb{I}_m \triangleq
 \begin{cases}
    1   &  \text{$if\ x\in S_t$}  \, ,\\
    0   &  \text{otherwise} \, .
 \end{cases}                \end{equation}

Then we obtain the following bound:
\begin{align}
    &\E_t\left\|\frac{1}{n}\sum_{i\in S_t}\frac{1}{mp_i^t}\sum_{k=0}^{K-1}\nabla F_i(x_{t,k}^i)-K\nabla \tilde{f}(x_t)\right\|^2 \notag \\
    &=\E_t\left\|\frac{1}{n}\sum_{l=1}^m\mathbb{I}_m\frac{1}{mp_l^t}\sum_{k=0}^{K-1}\nabla F_l(x_{t,k}^l)-\frac{1}{n}\sum_{l=1}^m\mathbb{I}_m\frac{1}{mp_l^t}\sum_{k=0}^{K-1}\nabla F_l(x_t)\right\|^2 \notag \\
    &\overset{(b1)}{\leq}\frac{m}{n^2}\sum_{l=1}^{m}\E_t\left\|\mathbb{I}_m\frac{1}{mp_l^t}\sum_{k=0}^{K-1}\left(\nabla F_l(x_{t,k}^l)-\nabla F_l(x_t)\right) \right\|^2 \notag \\
    &-\frac{1}{n^2}\sum_{l_1\neq l_2}\E_t\left\|\left\{\mathbb{I}_m\frac{1}{mp_{l_1}}\sum_{k=0}^{K-1}\left(\nabla F_{l_1}(x_{t,k}^{l_1})-\nabla F_{l_1}(x_t)\right)\right\} \right. \notag \\
    &\left.-\left\{\mathbb{I}_m\frac{1}{mp_{l_2}}\sum_{k=0}^{K-1}\left(\nabla F_{l_2}(x_{t,k}^{l_2})-\nabla F_{l_2}(x_t)\right)\right\} \right\|^2 \notag \\
    &\leq    \frac{m}{n^2}\sum_{l=1}^{m}\E_t\left\|\mathbb{I}_m\frac{1}{mp_l^t}\sum_{k=0}^{K-1}\left(\nabla F_l(x_{t,k}^l)-\frac{1}{mp_l^t}\nabla F_l(x_t)\right)\right\|^2 \notag \\
    &=\frac{m}{n^2}\sum_{l=1}^{m} \E_t\left\{\left\|\mathbb{I}_m\frac{1}{mp_l^t}\sum_{k=0}^{K-1}\left(\nabla F_l(x_{t,k}^l)-\frac{1}{mp_l^t}\nabla F_l(x_t)\right) \right\|^2 \mid \mathbb{I}_m =1\right\}\times P(\mathbb{I}_m =1) \notag \\
    &+\E_t\left\{\left\|\mathbb{I}_m(\frac{1}{mp_l^t}\sum_{k=0}^{K-1}\nabla F_l(x_{t,k}^l)-\frac{1}{mp_l^t}\nabla F_l(x_t) \right\|^2 \mid \mathbb{I}_m =0\right\}\times P(\mathbb{I}_m =0)) \notag \\
    &=\frac{m}{n^2}\sum_{l=1}^{m}n p_l^t\E\left\|\frac{1}{mp_l^t}\sum_{k=0}^{K-1}\nabla F_l(x_{t,k}^l)-\frac{1}{mp_l^t}\sum_{k=0}^{K-1}\nabla F_l(x_t)\right\|^2 \notag \\
    & \overset{(b2)}{\leq} \frac{L^2K}{mn}\sum_{k=0}^{K-1}\sum_{l=1}^m\frac{1}{p_l^t}\E\|x_{t,k}^l-x_t\|^2 \notag \\
    &\overset{(b3)}{\leq} \frac{L^2K^2}{n}\left(5K\frac{\eta_L^2}{m}\sum_{l=1}^m\frac{1}{p_l^t}(\sigma_L^2+4K\zeta_{G,i}^2)+20K^2(A^2+1)\eta_L^2\|\nabla f(x_t)\|^2\frac{1}{m}\sum_{l=1}^m\frac{1}{p_l^t}\right) \, ,
\end{align}
where (b1) follows from $\|\sum_{i=1}^mt_i\|^2=\sum_{i \in [m]}\|t_i\|^2+\sum_{i\neq j}\langle t_i,t_j\rangle\overset{c1}{=}\sum_{i \in [m]}m\|t_i\|^2-\frac{1}{2}\sum_{i\neq j}\| t_i-t_j\|^2$ ((c1) here is due to $\langle x,y\rangle=\frac{1}{2}\left[\|x\|^2+\|y\|^2-\|x-y\|^2\right]$), 
 (b2) is due to $\E\|x_1+\cdots +x_n\|^2\leq n\E\left(\|x_1\|^2+\cdots +\|x_n\|^2\right)$, and (b3) comes from Lemma~\ref{local update bound of DELTA}.

Therefore, we have the bound of  $A_1$:
\begin{align}
    &A_1 \leq \frac{K\eta_L}{2}\|\nabla \tilde{f}(x_t)\|^2+\frac{\eta_LL^2K}{2n}\left(5K\frac{\eta_L^2}{m}\sum_{l=1}^m\frac{1}{p_l^t}(\sigma_L^2+4K\zeta_{G,i}^2) \right. \notag \\
    &\left.+20K^2(A^2+1)\eta_L^2\|\nabla f(x_t)\|^2\frac{1}{m}\sum_{l=1}^m\frac{1}{p_l^t}\right) -\frac{\eta_L}{2K}\E_t\left\|-\frac{1}{n}\sum_{i\in S_t}\frac{1}{mp_i^t}\sum_{k=0}^{K-1}\nabla F_i(x_{t,k}^i)\right\|^2 \, .
\end{align}

The expression for $A_2$ is as follows:
\begin{align}
    &A_2=\E_t\|\Delta_t\|^2 \notag \\
    &=\E_t\left\|-\eta_L\frac{1}{n}\sum_{i\in S_t}\frac{1}{mp_i^t}\sum_{k=0}^{K-1}g_{t,k}^i\right\|^2 \notag \\
    &=\eta_L^2\E_t\left\|\frac{1}{n}\sum_{i\in S_t}\sum_{k=0}^{K-1}(\frac{1}{mp_i^t}g_{t,k}^i-\frac{1}{mp_i^t}\nabla F_i(x_{t,k}^i))\right\|^2+\eta_L^2\E_t\left\|-\frac{1}{n}\sum_{i\in S_t}\frac{1}{mp_i^t}\sum_{k=0}^{K-1}\nabla F_i(x_{t,k}^i)\right\|^2 \notag \\
    &=\eta_L^2\frac{1}{n^2}\E_t\left\|\sum_{l=1}^m\mathbb{I}_m\sum_{k=0}^{K-1}\frac{1}{mp_l^t}(g_{t,k}^l-\nabla F_i(x_{t,k}^i))\right\|^2+\eta_L^2\E_t\left\|-\frac{1}{n}\sum_{i\in S_t}\frac{1}{mp_i^t}\sum_{k=0}^{K-1}\nabla F_i(x_{t,k}^i)\right\|^2 \notag \\
    &=\eta_L^2\frac{1}{n^2}\sum_{l=1}^m \E_t\left\|\sum_{l=1}^m\mathbb{I}_m\sum_{k=0}^{K-1}\frac{1}{mp_l^t}(g_{t,k}^l-\nabla F_i(x_{t,k}^i))\right\|^2+\eta_L^2\E_t\left\|-\frac{1}{n}\sum_{i\in S_t}\frac{1}{mp_i^t}\sum_{k=0}^{K-1}\nabla F_i(x_{t,k}^i)\right\|^2 \notag \\
     &=\eta_L^2\frac{1}{n^2}\sum_{l=1}^m np_l^t \E_t\left\|\sum_{k=0}^{K-1}\frac{1}{mp_l^t}(g_{t,k}^l-\nabla F_i(x_{t,k}^i))\right\|^2+\eta_L^2\E_t\left\|-\frac{1}{n}\sum_{i\in S_t}\frac{1}{mp_i^t}\sum_{k=0}^{K-1}\nabla F_i(x_{t,k}^i)\right\|^2 \notag \\
    &\leq \eta_L^2\frac{K}{n}\sum_{l=1}^m\frac{1}{m^2p_l^t}\sigma_L^2+\eta_L^2\E_t\left\|-\frac{1}{n}\sum_{i\in S_t}\frac{1}{mp_i^t}\sum_{k=0}^{K-1}\nabla F_i(x_{t,k}^i)\right\|^2 \, .
 \end{align}

Now we substitute the expressions for $A_1$ and $A_2$ and take the expectation over the client sampling distribution on both sides. It should be noted that the derivation of $A_1$ and $A_2$ above is based on considering the expectation over the sampling distribution:
\begin{align}
     &f(x_{t+1}) 
    \leq  f(x_t)-\eta\eta_LK \E_t\left\|\nabla \tilde{f}(x_t)\right \|^2 + \eta \E_t\left\langle \nabla \tilde{f}(x_t),\Delta_t+\eta_LK\nabla \tilde{f}(x_t) \right\rangle +\frac{L}{2}\eta^2\E_t\|\Delta_t\|^2 \notag \\
    & \overset{(b4)}{\leq}  f(x_t)-\eta\eta_LK\left(\frac{1}{2}-\frac{20L^2K^2 (A^2+1)\eta_L^2}{nm}\sum_{l=1}^m\frac{1}{p_l^t}\right) \E_t\|\nabla \tilde{f}(x_t)\|^2+\frac{2K^2\eta\eta_L^3L^2}{2nm}\sum_{l=1}^m\frac{1}{p_l^t}\left(\sigma_L^2 \right.\notag \\
    &\left.+4K\zeta_{G,i}^2\right)+\frac{L\eta^2\eta_L^2K}{2n}\sum_{l=1}^m\frac{1}{p_l^t}\sigma_L^2-\left(\frac{\eta\eta_L}{2K}-\frac{L\eta^2\eta_L^2}{2}\right)\E_t\left\|-\frac{1}{n}\sum_{i\in S_t}\frac{1}{mp_i^t}\sum_{k=0}^{K-1}\nabla F_i(x_{t,k}^i)\right\|^2 \notag \\
    &\leq f(x_t)-c\eta\eta_LK \E_t\|\nabla \tilde{f}(x_t)\|^2+\frac{2K^2\eta\eta_L^3L^2}{2nm}\sum_{l=1}^m\frac{1}{p_l^t}(\sigma_L^2+4K\zeta_{G,i}^2)+\frac{L\eta^2\eta_L^2K}{2n}\sum_{l=1}^m\frac{1}{p_l^t}\sigma_L^2 \, .
\end{align}
Also, for (b4), step sizes need to satisfy $\left(\frac{\eta\eta_L}{2K}-\frac{L\eta^2\eta_L^2}{2}\right) \geq 0$ if $\eta\eta_l\leq \frac{1}{KL}$, and there exists a constant $c \textgreater 0$ satisfying $(\frac{1}{2}-\frac{20K^2\eta_L^2L^2(A^2+1)}{mn}\sum_{l=1}^m\frac{1}{p_l^t}) \textgreater c \textgreater 0$ if $\eta_L<\frac{1}{2\sqrt{10(A^2+1)}KL\sqrt{\frac{1}{n}\sum_{l=1}^m\frac{1}{mp_l^t}}}$.

Rearranging and summing from $t=0,\ldots,T-1$,we have:
\begin{align}
    \sum_{t=1}^{T-1}c\eta\eta_LK\E\|\nabla \tilde{f}(x_t)\|^2 \leq f(x_0)-f(x_T)+T(\eta\eta_L K)\tilde{\Phi} \, .
\end{align}

Which implies:
\begin{align}
    \mathop{min} \limits_{t\in[T]} \E\|\nabla \tilde{f}(x_t)\|^2\leq \frac{f_0-f_*}{c\eta\eta_L KT}+ \frac{1}{c}\tilde{\Phi} \, ,
\end{align}
where $\tilde{\Phi} =\frac{5L^2K\eta_L^2}{2mn}\sum_{l=1}^m\frac{1}{p_l^t}(\sigma_L^2+4K\zeta_{G,i}^2)+\frac{L\eta_L\eta}{2n}\sum_{l=1}^m\frac{1}{m^2p_l^t}\sigma_L^2$.

\end{proof}

\begin{remark}
$\zeta_G$ used in DELTA can be easily transformed into a  $\sigma_G$ related term, thus it is fair to compare DELTA with FedIS.
In particular, by taking the expectation on $\zeta_G$, it equates to $E\|\nabla F_i(x_t) - \nabla f(x_t)\|^2$. As demonstrated in [1], one can derive $E\|\nabla F_i(x_t)-\nabla f(x_t)\|^2 = E\|\nabla F_i(x_t)\|^2 -\|\nabla f(x_t)\|^2 \leq A\|\nabla f(x_t)\|^2 + \sigma_G^2$. Shifting $A\|\nabla f(x_t)\|^2$ to the left side of the convergence result, $\zeta_G$ can be directly transformed into $\sigma_G$.
\end{remark}

\section{Proof of the Optimal Sampling Probability}
\label{App optimal sampling probability}
\subsection{Sampling probability FedIS}
\label{App FedSRC-G}

\begin{corollary}[Optimal sampling probability for FedIS]
\begin{align}
    & \mathop {\min }\limits_{p_l^t}\Phi  \qquad  s.t. \sum_{l=1}^m p_l^t=1 \notag \, .
\end{align}
Solving the above optimization problem, we obtain the expression for the optimal sampling probability:
\begin{small}
    \begin{align}
    \label{FedSRC-G}
    \textstyle
        p_i^t = \frac{\|\hat{g_i^t}\|}{\sum_{j=1}^m \|\hat{g_j^t}\|} \,,
    \end{align}
\end{small}%
where $\hat{g_i^t} = \sum_{k=0}^{K-1} g_{k}^i$ is the sum of the gradient updates across multiple updates.
\end{corollary}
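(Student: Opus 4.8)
The plan is to set up the variance term $\Phi$ as a function of the sampling probabilities $\{p_l^t\}_{l=1}^m$ and then solve the constrained minimization $\min_{p_l^t}\Phi$ subject to $\sum_{l=1}^m p_l^t = 1$ using a Lagrange multiplier argument. Recall from \eqref{equation of phi} (and the restated Theorem~\ref{theorem 1}) that the only part of $\Phi$ that depends on the sampling probabilities is the update-variance term $\frac{L\eta\eta_L}{2nK}\,V\!\left(\tfrac{1}{mp_i^t}\hat{g}_i^t\right)$; the terms carrying $\sigma_L^2$ and $\sigma_G^2$ are constants with respect to $\{p_l^t\}$. So minimizing $\Phi$ is equivalent to minimizing the update variance. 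First I would write out this variance explicitly. Since $\E\!\left[\tfrac{1}{mp_i^t}\hat{g}_i^t\right] = \tfrac{1}{m}\sum_{l=1}^m \hat{g}_l^t$ (by Lemma~\ref{lemma1}, the unbiasedness), we have
\begin{align}
V\!\left(\tfrac{1}{mp_i^t}\hat{g}_i^t\right)
&= \E\Big\|\tfrac{1}{mp_i^t}\hat{g}_i^t - \tfrac{1}{m}\textstyle\sum_{l=1}^m \hat{g}_l^t\Big\|^2
= \sum_{l=1}^m p_l^t \Big\|\tfrac{1}{mp_l^t}\hat{g}_l^t\Big\|^2 - \Big\|\tfrac{1}{m}\sum_{l=1}^m \hat{g}_l^t\Big\|^2 \notag\\
&= \frac{1}{m^2}\sum_{l=1}^m \frac{\|\hat{g}_l^t\|^2}{p_l^t} - \Big\|\tfrac{1}{m}\sum_{l=1}^m \hat{g}_l^t\Big\|^2 \,.
\end{align}

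Next I would observe that the second term is independent of $\{p_l^t\}$, so the optimization reduces to $\min_{p}\sum_{l=1}^m \|\hat{g}_l^t\|^2 / p_l^t$ subject to $\sum_l p_l^t = 1$ and $p_l^t > 0$. This is a standard convex program: the objective is a sum of convex functions $a_l / p_l$ with $a_l = \|\hat{g}_l^t\|^2 \ge 0$, minimized over the probability simplex. I would form the Lagrangian $\mathcal{L} = \sum_l \|\hat{g}_l^t\|^2/p_l^t + \lambda(\sum_l p_l^t - 1)$, set $\partial\mathcal{L}/\partial p_l^t = -\|\hat{g}_l^t\|^2/(p_l^t)^2 + \lambda = 0$, which gives $p_l^t \propto \|\hat{g}_l^t\|$; enforcing the normalization $\sum_l p_l^t = 1$ yields $p_i^t = \|\hat{g}_i^t\| / \sum_{j=1}^m \|\hat{g}_j^t\|$, which is exactly \eqref{FedSRC-G}. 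Alternatively, and perhaps more cleanly, I would invoke the Cauchy–Schwarz inequality directly: $\big(\sum_l \|\hat{g}_l^t\|\big)^2 = \big(\sum_l \tfrac{\|\hat{g}_l^t\|}{\sqrt{p_l^t}}\sqrt{p_l^t}\big)^2 \le \big(\sum_l \tfrac{\|\hat{g}_l^t\|^2}{p_l^t}\big)\big(\sum_l p_l^t\big) = \sum_l \tfrac{\|\hat{g}_l^t\|^2}{p_l^t}$, with equality iff $\|\hat{g}_l^t\|/\sqrt{p_l^t} \propto \sqrt{p_l^t}$, i.e. $p_l^t \propto \|\hat{g}_l^t\|$. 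This gives both the minimizer and the certificate of optimality in one line.

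I do not anticipate a genuine obstacle here — the content is a routine constrained optimization. The only point requiring a little care is the bookkeeping connecting $\Phi$ to the update variance and confirming that no other term in $\Phi$ depends on $\{p_l^t\}$ (so that we really are minimizing the full convergence bound, not just one summand), and handling the with-replacement versus without-replacement cases, where the variance expression picks up slightly different constants (as in the $A_2$ bound of the Theorem~\ref{theorem 1} proof) but the dependence on $\{p_l^t\}$ is still through $\sum_l \|\hat{g}_l^t\|^2/p_l^t$, so the optimizer is unchanged. I would also remark that the same computation with weights $q_i = n_i/N$ inside the estimator yields the MD-sampling variant $p_{i,t}^* \propto q_i \|\hat{g}_i^t\|$, matching the remark in the proof of Theorem~\ref{theorem 1}.
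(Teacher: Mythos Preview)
Your proposal is correct and follows essentially the same approach as the paper: isolate the only $p$-dependent term of $\Phi$, namely the update variance $V(\tfrac{1}{mp_i^t}\hat{g}_i^t)$, rewrite it as $\tfrac{1}{m^2}\sum_l \|\hat{g}_l^t\|^2/p_l^t$ minus a constant, and minimize over the simplex via KKT/Lagrange multipliers to obtain $p_i^t \propto \|\hat{g}_i^t\|$. Your added Cauchy--Schwarz certificate and the remark on with/without-replacement and MD weights are consistent with the paper's treatment (which uses Cauchy--Schwarz only to compare against uniform sampling after stating the KKT solution).
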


Recall Theorem~\ref{theorem 1}; only the last variance term in the convergence term $\Phi$ is affected by sampling. In other words, we need to minimize the variance term with respect to probability.
We formalize this as follows:
\begin{align}
    \mathop{min} \limits_{p_i^t \in [0,1],\sum_{i=1}^mp_i^t=1} V(\frac{1}{mp_i^t}\hat{g_i^t}) \Leftrightarrow
    \mathop{min} \limits_{p_i^t \in [0,1],\sum_{i=1}^mp_i^t=1} \frac{1}{m^2}\sum_{i=1}^m\frac{1}{p_i^t}\|\hat{g_i^t}\|^2 \, .
\end{align}

This optimization problem can be solved in closed form using the KKT conditions. It is straightforward to verify that the solution to the optimization problem is:
\begin{align}
    p_{i,t}^* = \frac{\|\sum_{k=0}^{K-1}g_{t,k}^i\|}{\sum_{i=1}^m\|\sum_{k=0}^{K-1} g_{t,k}^i\|}, \forall i \in {1,2,...,m} \, .
    \label{FedIS sampling probability}
\end{align}

Under the optimal sampling probability, the variance will be:
\begin{align}
    V\left(\frac{1}{m p_{i}^{t}} \hat{g_{i}^{t}}\right) \leq \E\left\|\frac{\sum_{i=1}^{m} \hat{g}_{i}^{t}} {m}\right\|^{2}=\frac{1}{m^2}\E\|\sum_{i=1}^m\sum_{k=1}^K\nabla F_i(x_{t,k},\xi_{k,t})\|^2
\end{align} 
Therefore, the variance term is bounded by:
\begin{align}
   V\left(\frac{1}{m p_{i}^{t}} \hat{g_{i}^{t}}\right) \leq \frac{1}{m}\sum_{i=1}^mK\sum_{k=1}^K\E\|\nabla F_i(x_{t,k},\xi_{k,t})\|^2\leq K^2G^2
\end{align}
\textbf{Remark:} If the uniform distribution is adopted with $p_i^t = \frac{1}{m}$, it is easy to observe that the variance of the stochastic gradient is bounded by $\frac{\sum_{i=1}^m|g_i|^2}{m}$.

According to Cauchy-Schwarz inequality,
\begin{align}
    \frac{\sum_{i=1}^m\|\hat{g_i^t}\|^2}{m} \bigg/ \left(\frac{\sum_{i=1}^m\| \hat{g_i}\|}{m}\right)^2 = \frac{m\sum_{i=1}^m\|\hat{g_i}\|^2}{\left(\sum_{i=1}^m\| \hat{g_i}\|\right)^2} \geq 1\, ,
\end{align}
this implies that importance sampling does improve convergence rate, especially when $\frac{\left(\sum_{i=1}^m\| g_i\|\right)^2}{\sum_{i=1}^m\|g_i\|^2} <<  m$.

\subsection{Sampling probability of DELTA}
\label{App Fed-D}

Our result is of the following form:
\begin{align}
    \mathop{min} \limits_{t\in[T]} \E\|\nabla f(x_t)\|^2\leq \frac{f_0-f_*}{c\eta\eta_L KT}+\tilde{\Phi} \, ,
\end{align}
It is easy to see that the sampling strategy only affects $\tilde{\Phi}$. To enhance the convergence rate, we need to minimize $\tilde{\Phi}$ with respect to $p_l^t$. As shown, the expression for $\tilde{\Phi}$ with and without replacement is similar, and only differs in the values of $n$ and $m$. Here, we will consider the case with replacement. Specifically, we need to solve the following optimization problem:
\begin{align}
    & \mathop {\min }\limits_{p_l^t}\tilde{\Phi} =\frac{1}{c}(\frac{5L^2K\eta_L^2}{2m^2}\sum_{l=1}^m\frac{1}{p_l^t}(\sigma_{L,l}^2+4K\zeta_{G,i}^2)+\frac{L\eta_L\eta}{2n}\sum_{l=1}^m\frac{1}{m^2p_l^t}\sigma_{L,i}^2) \qquad  s.t. \sum_{l=1}^m p_l^t=1 \notag \, .
\end{align}
Solving this optimization problem, we find that the optimal sampling probability is:

\begin{align}
     p_{i,t}^*=\frac{\sqrt{5KL\eta_L(\sigma_{L,i}^2+4K\zeta_{G,i}^2)+\frac{\eta}{n}\sigma_{L,l}^2}}{\sum_{l=1}^m\sqrt{5KL\eta_L(\sigma_{L,l}^2+4K\zeta_{G,l}^2)+\frac{\eta}{n}\sigma_{L,l}^2}} \, .
\end{align}

For simplicity, we rewrite the optimal sampling probability as:
\begin{align}
    p_{i,t}^* = \frac{\sqrt{\alpha_1\zeta_{G,i}^2+ \alpha_2\sigma_{L,i}^2}}{\sum_{l =1}^m \sqrt{\alpha_1\zeta_{G,l}^2+ \alpha_2\sigma_{L,l}^2}} \, ,
\end{align}
where $\alpha_1 =20K^2L\eta_L$, $\alpha_2 = 5KL\eta_L+\frac{\eta}{n}$.

\textbf{Remark:}
Now, we will compare this result with the uniform sampling strategy:
\begin{align}
    \Phi_{DELTA}=\frac{L\eta_L}{2c}\left(\frac{\sum_{l=1}^{m}\sqrt{\alpha_1\zeta_{G,l}^2+ \alpha_2\sigma_{L,l}^2}}{m}\right)^2 \, .
\end{align}
For uniform $p_l=\frac{1}{m}$:
\begin{align}
       \Phi_{uniform}=\frac{L\eta_L}{2c}\frac{\sum_{l=1}^{m}\left(\sqrt{\alpha_1\zeta_{G,l}^2+ \alpha_2\sigma_{L,l}^2}\right)^2}{m} \, .
\end{align}

According to Cauchy-Schwarz inequality:
\begin{small}
\begin{align}
    \frac{\sum_{l=1}^{m}\left(\sqrt{\alpha_1\zeta_{G,l}^2+ \alpha_2\sigma_{L,l}^2}\right)^2}{m} / \left(\frac{\sum_{l=1}^{m}\sqrt{\alpha_1\zeta_{G,l}^2+ \alpha_2\sigma_{L,l}^2}}{m}\right)^2=\frac{m\sum_{l=1}^{m}\left(\sqrt{\alpha_1\zeta_{G,l}^2+ \alpha_2\sigma_{L,l}^2}\right)^2}{\left(\sum_{l=1}^{m}\sqrt{\alpha_1\zeta_{G,l}^2+ \alpha_2\sigma_{L,l}^2}\right)^2}\geq 1 \, ,
\end{align}
\end{small}
this implies that our sampling method does improve the convergence rate (our sampling approach might be $n$ times faster in convergence than uniform sampling), especially when $\frac{\left(\sum_{l=1}^{m}\sqrt{\alpha_1\zeta_{G,l}^2+ \alpha_2\sigma_{L,l}^2}\right)^2}{\sum_{l=1}^{m}\left(\sqrt{\alpha_1\zeta_{G,l}^2+ \alpha_2\sigma_{L,l}^2}\right)^2} << m$.

\section{Convergence Analysis of The Practical Algorithm}
\label{app convergence of practical algorithm}
In order to provide the convergence rate of the practical algorithm, we need an additional Assumption~\ref{Assumption 5 main} ($\|\nabla F_i(x) \|^2 \leq G^2, \forall i$).
This assumption tells us a useful fact that will be used later: 

It can be shown that $\|\nabla F_i(x_{t,k},\xi_{t,k}) / \nabla F_i(x_{s,k},\xi_{s,k})\| \leq U$ for all $i$ and $k$, where the subscript $s$ refers to the last round in which client $i$ participated, and $U$ is a constant upper bound. This tells us that the change in the norm of the client's gradient is bounded. $U$ comes from the following inequality constraint procedure:
\begin{align}
     &V\left(\frac{1}{m p_{i}^{s}} \hat{g_{i}^{t}}\right) = E||\frac{1}{mp_i^s}\hat{{g_{i}^{t}}} - \frac{1}{m}\sum_{i=1}^m\hat{g_{i}^{t}}||^2 \leq E|| \frac{1}{mp_i^t}\hat{{g_{i}^{t}}}||^2 = E\left|\left|\frac{1}{m}\frac{\hat{g_{i}^{t}}}{||\hat{g_i^s}||} \sum_{j=1}^m||\hat{g_j^s}|| \right|\right|^{2} \notag \\
     &\leq E\left(||\frac{1}{m}||^2\frac{||\hat{g_i^t}||^2}{||\hat{g_i^s}||^2}\left|\left|\sum_{j=1}^m||\hat{g_j^s} ||\right|\right|^{2}\right) 
     \leq \frac{1}{m^2}U^2m\sum_{j=1}^mK\sum_{k=1}^KE||\nabla F_j(x_{k,s},\xi_{k,s})||^2.
\end{align}
We establish the upper bound $U$ based on two factors: (1) Assumption 4, and (2) the definition of importance sampling $E_{q(z)}( F_i(z) ) = E_{p(z)}\left( {q_i(z)}/{p_i(z)}F_i(z) \right)$, where there exists a positive constant $\gamma$ such that $p_i(z)\geq\gamma>0$. Thus, for $p_i^s=\frac{\hat{g}_i^s}{\sum_j \hat{g}_j^s}\geq\gamma$, we can easily ensure $\frac{\|g_i^t\|}{\|g_i^s\|}\leq U$ since $\hat{g}_i^s>0$ is consistently bounded. 

In general, the gradient norm tends to become smaller as training progresses, which leads to $\|\nabla F_i(x_{t,k},\xi_{t,k}) / \nabla F_i(x_{s,k},\xi_{s,k})\|$ going to zero. Even if there are some oscillations in the gradient norm, the gradient will vary within a limited range and will not diverge to infinity.
Figures~\ref{Norm of gradient performance} and Figure~\ref{Norm of gradient diversity performance} depict the norms of gradients and gradient diversity across all clients in each round. Notably, these figures demonstrate that in the case of practical IS and practical DELTA, the change ratio of both gradient and gradient diversity remains limited, with the maximum norm being under 8 and the minimum norm exceeding 0.5.
\begin{figure}[!t]
     \centering
     \vspace{-2.5em}
     \subfigure[\small FedIS's gradient norm  on MNIST]{\includegraphics[width=.38\textwidth,]{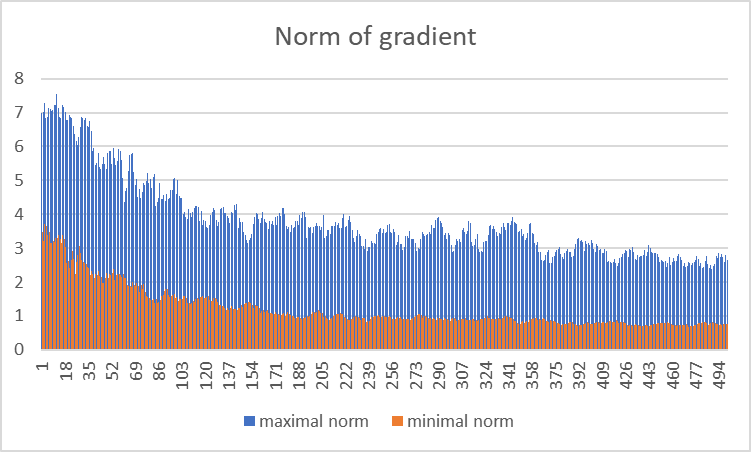}\label{Norm mnist}}
      \subfigure[\small FedIS' gradient norm on FashionMNIST]{\includegraphics[width=.38\textwidth,]{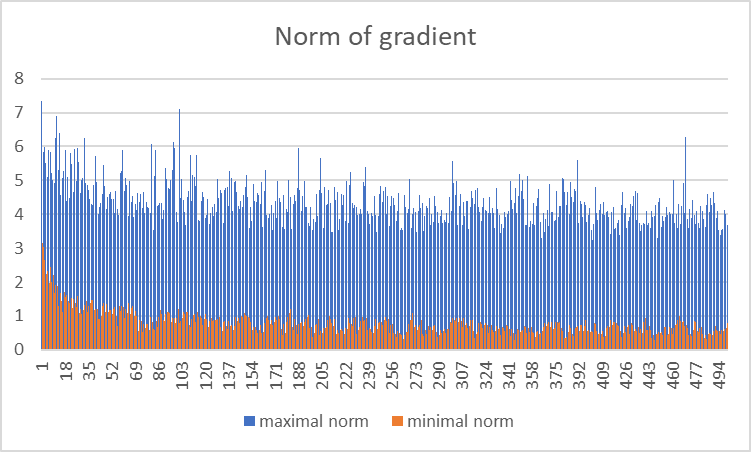} \label{Norm FashionMNIST}}
     \vspace{-1.em}
     \caption{\small \textbf{Performance of gradient norm of FedIS.} We evaluate the performance of FedIS on MNIST and FashionMNIST datasets.  In each round, we report the maximal and minimal gradient norm among all clients.}
     \label{Norm of gradient performance}
     \vspace{-.5em}
\end{figure}
\looseness=-1

 \begin{figure}[!t]
     \centering
     \vspace{-.5em}
     \subfigure[\small Gradient diversity norm of DELTA on MNIST]{\includegraphics[width=.38\textwidth,]{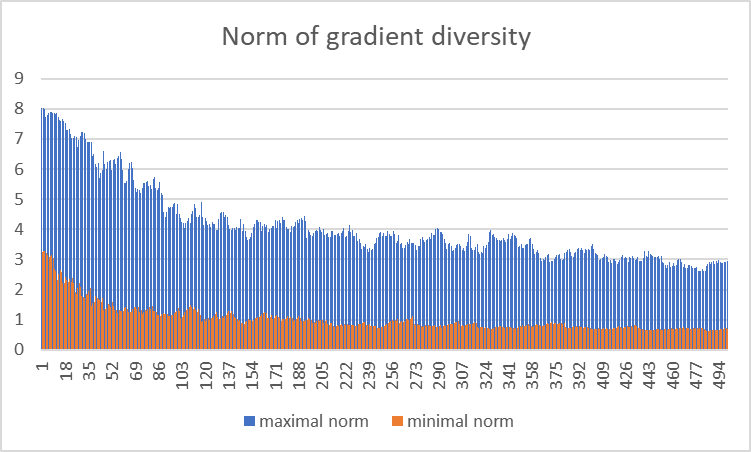}\label{Norm diversity mnist}}
      \subfigure[\small Gradient diversity norm of DELTA on FashionMNIST]{\includegraphics[width=.38\textwidth,]{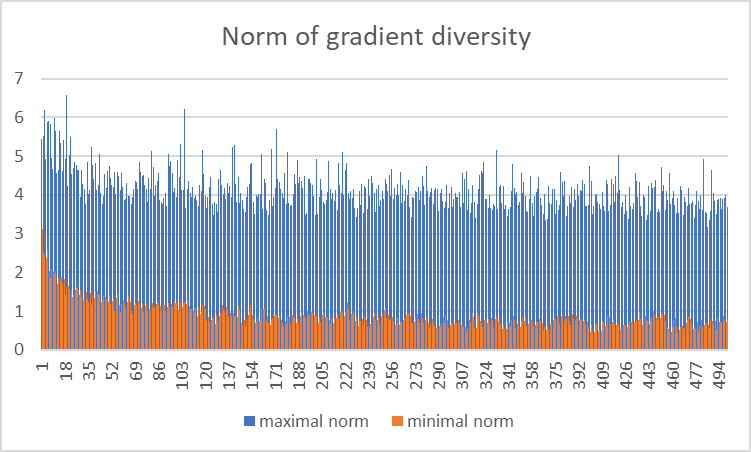} \label{Norm diversity FashionMNIST}}
     \vspace{-1.em}
     \caption{\small \textbf{Performance of gradient diversity norm of DELTA.} We evaluate the performance of DELTA on MNIST and FashionMNIST datasets. In each round, we report the maximal and minimal gradient diversity norm among all clients. }
     \label{Norm of gradient diversity performance}
     \vspace{-1.em}
\end{figure}
\looseness=-1

Based on Assumption~\ref{Assumption 5 main} and Assumption~\ref{Assumption 3}, we can re-derive the convergence analysis for both convergence variance $\Phi$~\eqref{equation of phi} and $\tilde{\Phi}$~\eqref{tildephi}.  In particular, for Assumption~\ref{Assumption 3} ($\E{ \norm{ \nabla F_i(x) }^2 } \leq (A^2+1)\|\nabla f(x)\|^2 + \sigma_G^2$), we use $\sigma_{G,s}$ and $\sigma_{G,t}$ instead of a unified $\sigma_G$ for the sake of comparison.

Specifically, $\Phi = \frac{1}{c} [
     \frac{5\eta_L^2L^2K}{2m}\sum_{i=1}^m(\sigma_L^2 +4K\sigma_G^2) + \frac{\eta\eta_LL}{2m}\sigma_L^2 + \frac{L\eta\eta_L}{2nK}V(\frac{1}{mp_i^t}\hat{g_i^t}) ]$, where $\hat{g_i^t} = \sum_{k=1}^K\nabla F_i(x_{k,s},\xi_{k,s})$.
With the practical sampling probability $p_i^s$ of FedIS:
\begin{align}
    V\left(\frac{1}{m p_{i}^{s}} \hat{g_{i}^{t}}\right) &= E\|\frac{1}{mp_i^s}\hat{{g}_{i}^{t}} - \frac{1}{m}\sum_{i=1}^m\hat{{g}_{i}^{t}}\|^2 
    \leq E\| \frac{1}{mp_i^t}\hat{{g}_{i}^{t}}\|^2 =
    E\|\frac{1}{m}\frac{\hat{g}_{i}^{t}}{\hat{g}_i^s} \sum_{j=1}^m\hat{g}_j^s \|^{2}  \, .
\end{align}
According to Assumption~\ref{Assumption 5 main}, we know $\|\frac{\hat{g}_i^t}{\hat{g}_i^s}\|^2 = \|\frac{\sum_{k=1}^{K}\nabla F_i(x_{t,k}^i,\xi_{t,k}^i)}{\sum_{k=1}^{K}\nabla F_i(x_{s,k}^i,\xi_{s,k}^i)}\| \leq U^2$. Then we get
\begin{align}
    V\left(\frac{1}{m p_{i}^{s}} \hat{g_{i}^{t}}\right) & \leq  E\left(\|\frac{1}{m}\|^2\|\|\frac{\hat{g}_i^t}{\hat{g}_i^s}\|^2\|\sum_{j=1}^m\hat{g}_j^s \|^{2}\right) \leq \frac{1}{m^2}U^2E\|\sum_{i=1}^m\sum_{k=1}^K\nabla F_i(x_{k,s},\xi_{k,s})\|^2 \notag \\
    &\leq \frac{1}{m^2}U^2m\sum_{i=1}^mK\sum_{k=1}^KE\|\nabla F_i(x_{k,s},\xi_{k,s})\|^2
\end{align}

Similar to the previous proof, based on Assumption~\ref{Assumption 3}. we can get the new convergence rate:
\begin{small}
\begin{equation}
\textstyle
\min \limits_{t\in[T]} E\|\nabla f(x_t)\|^2\leq \mathcal{O}\left(\frac{ f^0-f^*}{\sqrt{nKT}}\right) \!+\! \underbrace{\mathcal{O}\left(\frac{\sigma_L^2}{\sqrt{nKT}}\right) \!+ \! \mathcal{O}\left(\frac{M^2}{T}\right) \!+ \! \mathcal{O}\left(\frac{KU^2\sigma_{G,s}^2}{\sqrt{nKT}} \right)}_{\text{order of} \ \Phi} \, .
\end{equation}
\end{small}%
where $M = \sigma_L^2 + 4K\sigma_{G,s}^2$.

\begin{remark}[Discussion on $U$ and convergence rate.]
    \label{remark about U}
    It is worth noting that $\|\nabla F_i(x_{t,k},\xi_{t,k}) / \nabla F_i(x_{s,k},\xi_{s,k})\| $ is typically relatively small because the gradient tends to go to zero as the training process progresses. This means that $U$ can be relatively small, more specifically, $U \textless 1$ in the upper bound term $\mathcal{O}\left(\frac{K U^2 \sigma_{G,s}^2}{\sqrt{nKT}} \right)$. However, this does not necessarily mean that the practical algorithm is better than the theoretical algorithm because the values of $\sigma_G$ are different, as we stated at the beginning. Typically, the value of $\sigma_{G,s}$ for the practical algorithm is larger than the value of $\sigma_{G,t}$, which also comes from the fact that the gradient tends to go to zero as the training process progresses. Additionally, due to the presence of the summation over both $i$ and $k$, the gap between $\sigma_{G,s}$ and $\sigma_{G,t}$ is multiplied, and $\sigma_{G,s}/\sigma_{G,t} \sim m^2K^2\frac{1}{U^2}$. Thus, the practical algorithm leads to a slower convergence than the theoretical algorithm.
\end{remark}
Similarly, as long as the gradient is consistently bounded, we can assume that $\|\nabla F_i(x_t)-\nabla f(x_t)\| / \| \nabla F_i(x_s)-\nabla f(x_s)\| \leq \tilde{U}_1\leq \tilde{U}$ and $\|\sigma_{L,t}/\sigma_{L,s}\|\leq \tilde{U}_2\leq \tilde{U}$ for all $i$, where $\sigma_{L,s}^2 =\Eb{ \norm{ \nabla F_i(x_s,\xi_{s}^i)-\nabla F_i(x_s) } }$. Then, we can obtain a similar conclusion by following the same analysis on $\tilde{\Phi}$.

Specifically, we have $\tilde{\Phi}= \frac{L\eta_L}{2m^2c}\sum_{i=1}^m\frac{1}{p_i^s}\left(\alpha_1\zeta_{G,i}^2+ \alpha_2\sigma_{L,i}^2\right)$, where $\alpha_1$ and $\alpha_2$ are constants defined in \eqref{FedSRC-D}. For the sake of comparison of different participation rounds $s$ and $t$, we rewrite the symbols as $\zeta_{G,s}^i$ and $\sigma_{L,s}^i$. Then, using the practical sampling probability $p_i^s$ of DELTA, and letting $R_{i}^s =\sqrt{\alpha_1{\zeta_{G,s}^{i}}^2+ \alpha_2{\sigma_{L,s}^i}^2}$, we have:
\begin{align}
    \tilde{\Phi} &= \frac{L\eta_L}{2m^2c}\sum_{i=1}^m\frac{1}{p_i^s}(R_i^t)^2 = \frac{L\eta_L}{2m^2c}\sum_{i=1}^m\frac{(R_i^t)^2 }{R_i^s}\sum_{j=1}^m(R_j^s)^2 = \frac{L\eta_L}{2m^2c}\sum_{i=1}^m \left(\frac{{R_i^t} }{R_i^s}\right)^2 R_i^s \sum_{j=1}^m{R_j^s} \notag \\
    &\leq \frac{L\eta_L}{2m^2c}\tilde{U}^2\sum_{i=1}^mR_i^s \sum_{j=1}^m{R_j^s}
    = \frac{L\eta_L}{2m^2c}\tilde{U}^2\left(\sum_{i=1}^mR_i^s\right)^2 \leq \frac{L\eta_L}{2m^2c}\tilde{U}^2 m \sum_{i=1}^m (R_i^s)^2 \notag \\
    &\leq \frac{L\eta_L}{2c}\tilde{U}^2 (5KL\eta_L(\sigma_{L,s}^2+4K\zeta_{G,s}^2)+\frac{\eta}{n}\sigma_{L}^2)
\end{align}
Therefore, compared to the theoretical algorithm of DELTA, the practical algorithm of DELTA has the following convergence rate:
\begin{align}
    \begin{split}
    \textstyle
        \min_{t \in[T]} \E\|\nabla f(x_t)\|^2 \leq  \mathcal{O}\left(\frac{f^0-f^*}{\sqrt{nKT}}\right) + 
        \underbrace{ \mathcal{O}\left(\frac{\tilde{U}^2\sigma_{L,s}^2}{\sqrt{nKT}}\right) + \mathcal{O}\left(\frac{\tilde{U}^2\sigma_{L,s}^2  + 4K\tilde{U}^2\zeta_{G,s}^2}{KT}\right)}_{\text{order of } \tilde{\Phi}}
    \end{split} \, .
\end{align}
This discussion of the effect of $\tilde{U}$ on the convergence rate is similar to the discussion of $U$ in Remark~\ref{remark about U}.


\section{Additional Experiment Results and Experiment Details.}
\label{App experi}


\subsection{Experimental Environment} 
For all experiments, we use NVIDIA GeForce RTX 3090 GPUs. Each simulation trail with 500 communication rounds and three random seeds.

\subsection{Experiment setup}
\label{app experiment setup}
\paragraph{Setup for the synthetic dataset.}
To demonstrate the validity of our theoretical results, we first conduct experiments using logistic regression on synthetic datasets. Specifically, we randomly generate $(x,y)$ pairs using the equation $y = \log \left( \frac{(A x - b)^2}{2} \right)$ with given values for $A_i$ and $b_i$ as training data for clients. Each client's local dataset contains 1000 samples. In each round, we select 10 out of 20 clients to participate in training (we also provide the results of 10 out of 200 clients in Figure~\ref{synthetic with 200 clients}).

To simulate gradient noise, we calculate the gradient for each client $i$ using the equation $g_{i} = \nabla f_i(A_i, b_i, D_i) + \nu_i$, where $A_i$ and $b_i$ are the model parameters, $D_i$ is the local dataset for client $i$, and $\nu_i$ is a zero-mean random variable that controls the heterogeneity of client $i$. The larger the value of $\E{\left\lVert \nu_i \right\rVert^2}$, the greater the heterogeneity of client $i$.

We demonstrate the experiment on different functions with different values of $A$ and $b$. Each function is set with noise levels of 20, 30, and 40 to illustrate our theoretical results. To construct different functions, we set $A=8,10$ and $b=2,1$, respectively, to observe the convergence behavior of different functions.

All the algorithms run in the same environment with a fixed learning rate of $0.001$. We train each experiment for 2000 rounds to ensure that the global loss has a stable convergence performance.

\paragraph{Setup for FashionMNIST and CIFAR-10.}
To evaluate the performance of DELTA and FedIS, we train a two-layer CNN on the non-iid FashionMNIST dataset and a ResNet-18 on the non-iid CIFAR-10 dataset, respectively. CIFAR-10 is composed of $32\times32$ images with three RGB channels, belonging to 10 different classes with 60000 samples.

The "non-iid" follows the idea introduced in~\citep{yu2019parallel,hsu2019measuring}, where we leverage Latent Dirichlet Allocation (LDA) to control the distribution drift with the Dirichlet parameter $\alpha$. Larger $\alpha$ indicates smaller drifts. Unless otherwise stated, we set the Dirichlet parameter $\alpha = 0.5$.

Unless specifically mentioned otherwise, our studies use the following protocol: all datasets are split with a parameter of $\alpha = 0.5$, the server chooses $n=20$ clients according to our proposed probability from the total of $m=300$ clients, and each is trained for $T=500$ communication rounds with $K=5$ local epochs. The default local dataset batch size is 32. The learning rates are set the same for all algorithms, specifically $lr_{global}=1$ and $lr_{local}=0.01$.

All algorithms use FedAvg as the backbone. We compare DELTA, FedIS and Cluster-based IS with FedAvg on different datasets with different settings.

\paragraph{Setup for Split-FashionMNIST.}
In this section, we evaluate our algorithms on the split-FashionMNIST dataset. In particular, we let $10\%$  clients own $90\%$ of the data, and the detailed split data process is shown below:
\begin{itemize}
    \item Divide the dataset by labels. For example, divide FashionMNIST into 10 groups, and assign each client one label
    \item Random select one client
    \item Reshuffle the data in the selected client
    \item Equally divided into 100 clients
\end{itemize}

\paragraph{Setup for LEAF.} To test our algorithm's efficiency on diverse real datasets, we use the non-IID FEMNIST dataset and non-IID CelebA dataset in LEAF, as given in \citep{caldas2018leaf}. All baselines use a 4-layer CNN for both datasets with a learning rate of $lr_{local}=0.1$, batch size of 32, sample ratio of $20\%$ and communication round of $T=500$. The reported results are averaged over three runs with different random seeds.

\paragraph{The implementation detail of different sampling algorithms.}
The power-of-choice sampling method is proposed by \citep{cho2022towards}. The sampling strategy is to first sample 20 clients randomly from all clients, and then choose 10 of the 20 clients with the largest loss as the selected clients.
FedAvg samples clients according to their data ratio. Thus, FedAvg promises to be unbiased, which is given in \citep{fraboni2021clustered, li2019convergence} to be an unbiased sampling method. As for FedIS, the sampling strategy follows Equation \eqref{sampling probability FedIS}. For cluster-based IS, it first clusters clients following the gradient norm and then uses the importance sampling strategy similar to FedIS in each cluster. And for DELTA, the sampling probability follows Equation \eqref{FedSRC-D}. For the practical implementation of FedIS and DELTA, the sampling probability follows the strategy described in Section~\ref{practical algorithm}.

\subsection{Additional Experimental Results}
\label{app additional experiments}
\paragraph{Performance of algorithms on the synthetic dataset.}
We display the log of the global loss of different sampling methods on synthetic dataset in Figure~\ref{Performance of different algorithms on noisy quadratic model Appendix}, where the Power-of-Choice is a biased sampling strategy that selects clients with higher loss~\citep{cho2022towards}.

\begin{figure*}[!t]
 \centering
 \subfigure[$A = 8, b = 2, \nu = 20$]{ \includegraphics[width=.32\textwidth,]{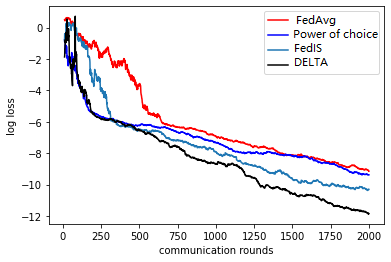}}
 \subfigure[$A = 8, b = 2, \nu = 30$]{ \includegraphics[width=.32\textwidth,]{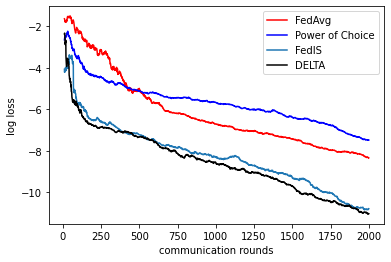}}
 \subfigure[$A = 8, b = 2, \nu = 40$]{ \includegraphics[width=.32\textwidth,]{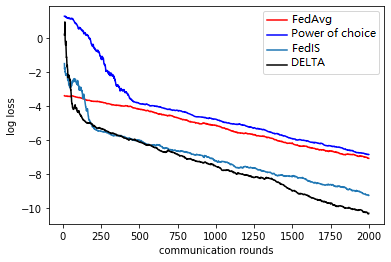}} \\
 \subfigure[$A = 10, b = 1, \nu = 20$]{ \includegraphics[width=.32\textwidth,]{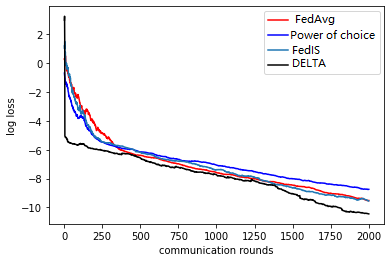}}
 \subfigure[$A = 10, b = 1, \nu = 30$]{ \includegraphics[width=.32\textwidth,]{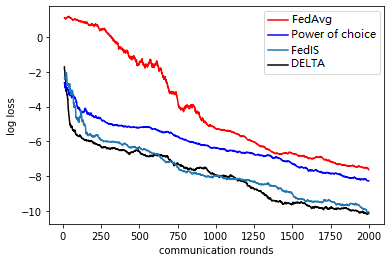}}
 \subfigure[$A = 10, b = 1, \nu = 40$]{ \includegraphics[width=.32\textwidth,]{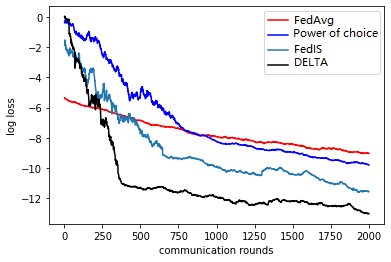}}
 \caption{Performance of different algorithms on the regression model. The loss is calculated by $f(x, y) = \norm{y - \log(\frac{(A_i x - b_i)^2}{2})}^2$, we report the logarithm of the global loss with different degrees of gradient noise $\nu$. All methods are well-tuned, and we report the best result of each algorithm under each setting. }
 \label{Performance of different algorithms on noisy quadratic model Appendix}
\end{figure*}

We also show the convergence behavior of different sampling algorithms under small noise, as shown in Figure~\ref{synthetic with small noise}.
\begin{figure*}[!t]
 \centering
 \subfigure[$ \nu = 10$]{ \includegraphics[width=.32\textwidth,]{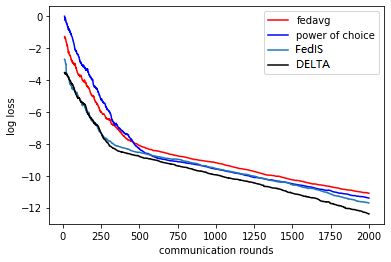}}
 \subfigure[$\nu = 5$]{ \includegraphics[width=.32\textwidth,]{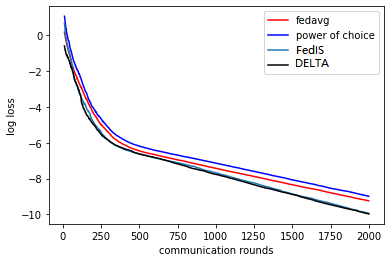}}
 \subfigure[$ \nu = 1$]{ \includegraphics[width=.32\textwidth,]{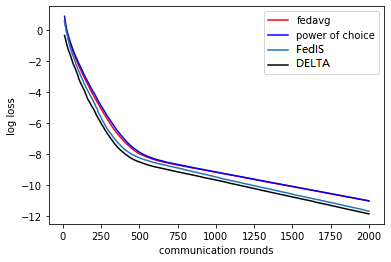}} \\
 \subfigure[$\nu = 0.5$]{ \includegraphics[width=.32\textwidth,]{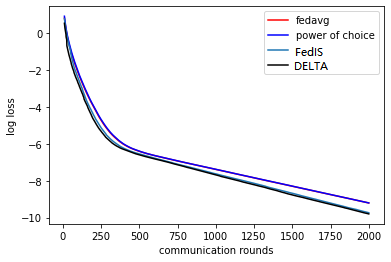}}
 \subfigure[$\nu = 0.1$]{ \includegraphics[width=.32\textwidth,]{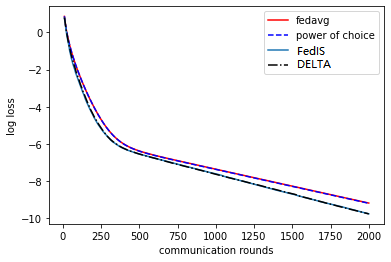}}
 \vspace{.em} 
 \caption{Performance of different algorithms on the regression model with different (small) noise settings. }
 \label{synthetic with small noise}
\end{figure*}

To simulate a large number of clients, we increased the client number from 20 to 200, with only 10 clients participating in each round. The results in Figure~\ref{synthetic with 200 clients} demonstrate the effectiveness of DELTA.
\begin{figure*}[!t]
 \centering
 \subfigure[$ \nu = 30$]{ \includegraphics[width=.32\textwidth,]{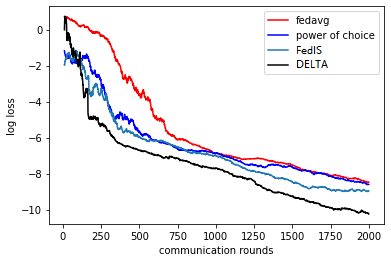}}
 \subfigure[$\nu = 20$]{ \includegraphics[width=.32\textwidth,]{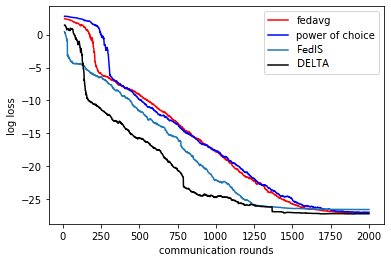}}
 \subfigure[$ \nu = 10$]{ \includegraphics[width=.32\textwidth,]{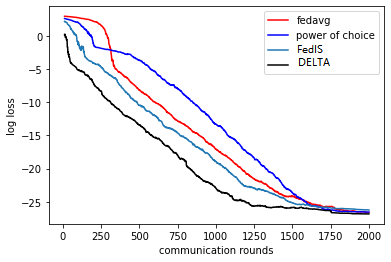}} \\
 \subfigure[$\nu = 5$]{ \includegraphics[width=.32\textwidth,]{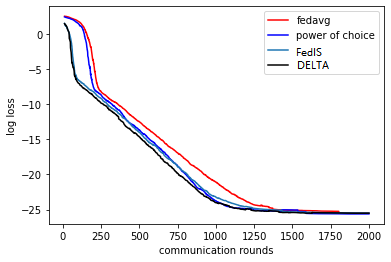}}
 \subfigure[$\nu = 1$]{ \includegraphics[width=.32\textwidth,]{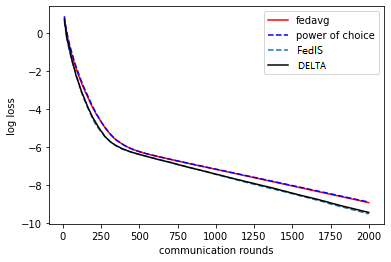}}
 \caption{Performance of different algorithms on synthetic data with different noise settings. Specifically, for testing the large client number setting, in each round, 10 out of 200 clients are selected to participate in training. }
 \label{synthetic with 200 clients}
\end{figure*}

 \begin{figure}[!t]
     \centering
     \vspace{-1.5em}
     \subfigure[\small Performance of algorithms on split-FashionMNIST]{\includegraphics[width=.45\textwidth,]{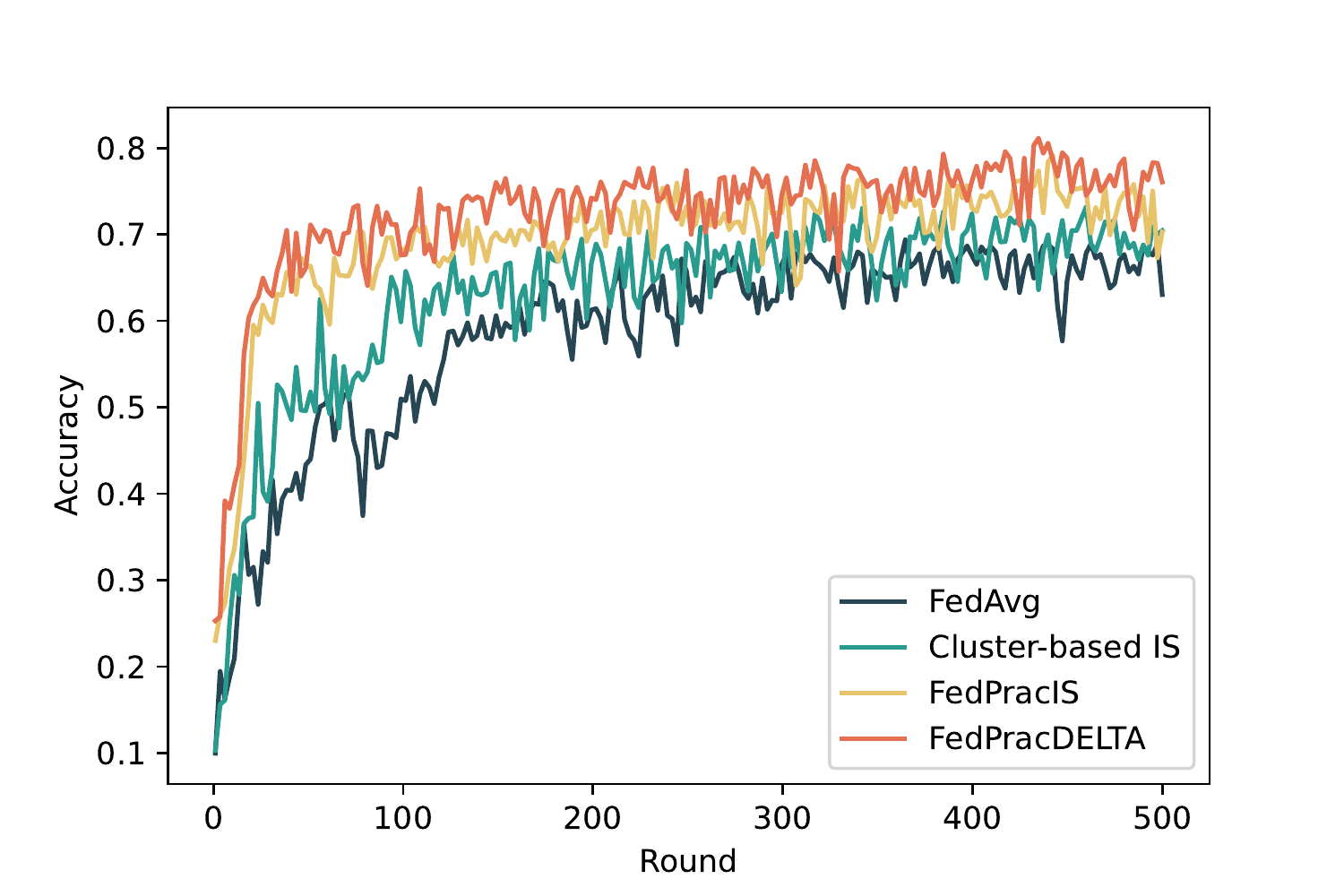}\label{split FEMNIST accuracy}}
      \subfigure[\small Performance of algorithms on FEMNIST]{\includegraphics[width=.45\textwidth,]{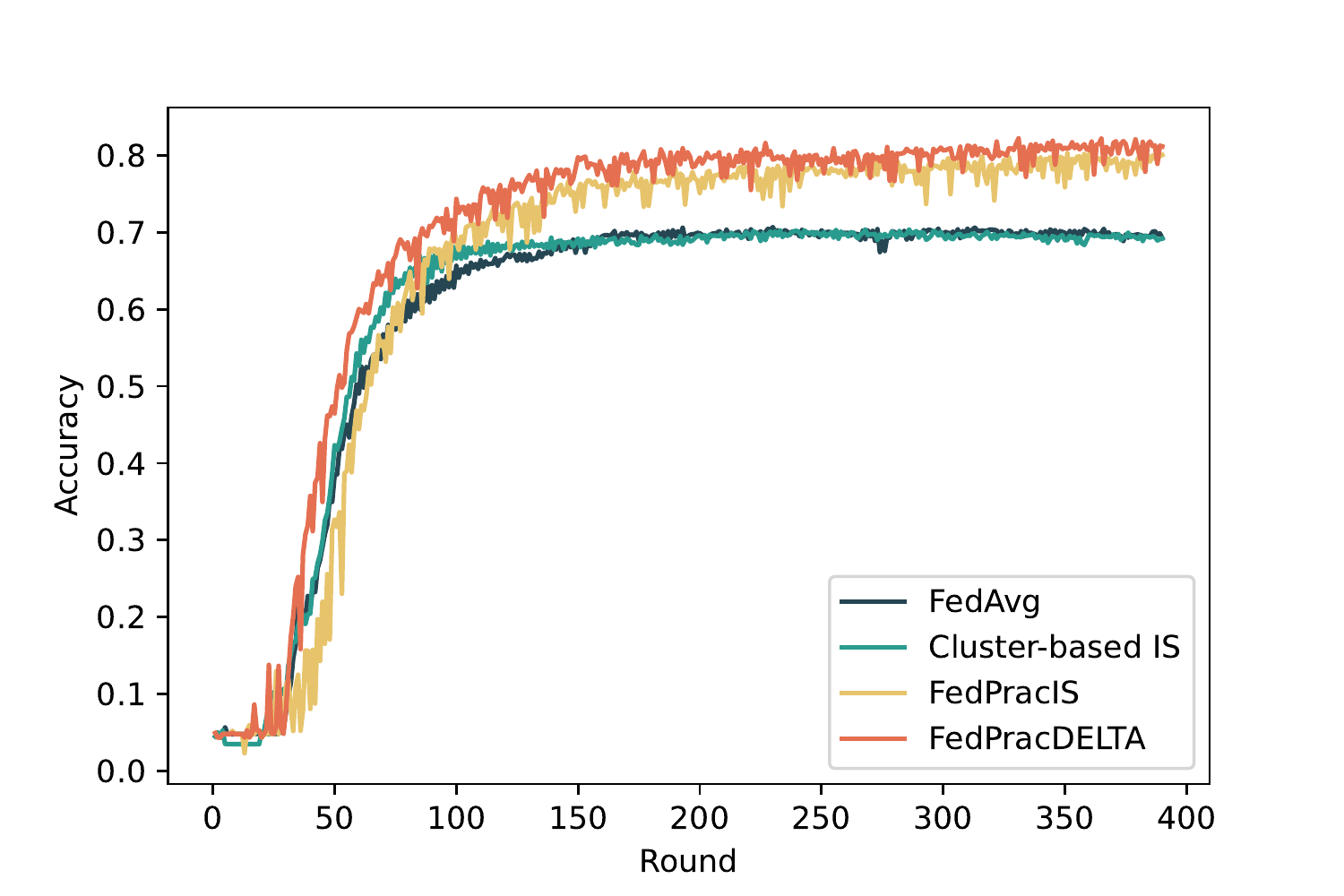} \label{accuracy of practical algorithm on FEMNIST}}
     \vspace{-.em}
     \caption{\small Performance comparison of accuracy using different sampling algorithms.}
     \label{accuracy performance of theoretical and practical algorithm}
     \vspace{-1.em}
\end{figure}
\looseness=-1
\paragraph{Convergence performance of theoretical DELTA on split-FashionMNIST and practical DELTA on FEMNIST.} Figure~\ref{split FEMNIST accuracy} illustrates the theoretical DELTA outperforms other methods in convergence speed. Figure~\ref{accuracy of practical algorithm on FEMNIST} indicates that cluster-based IS and practical DELTA exhibit rapid initial accuracy improvement, while practical DELTA and practical IS achieve higher accuracy in the end.

\paragraph{Ablation study for DELTA with different sampled numbers.}
\begin{figure}
    \centering
    \centerline{\includegraphics[width=.5\columnwidth]{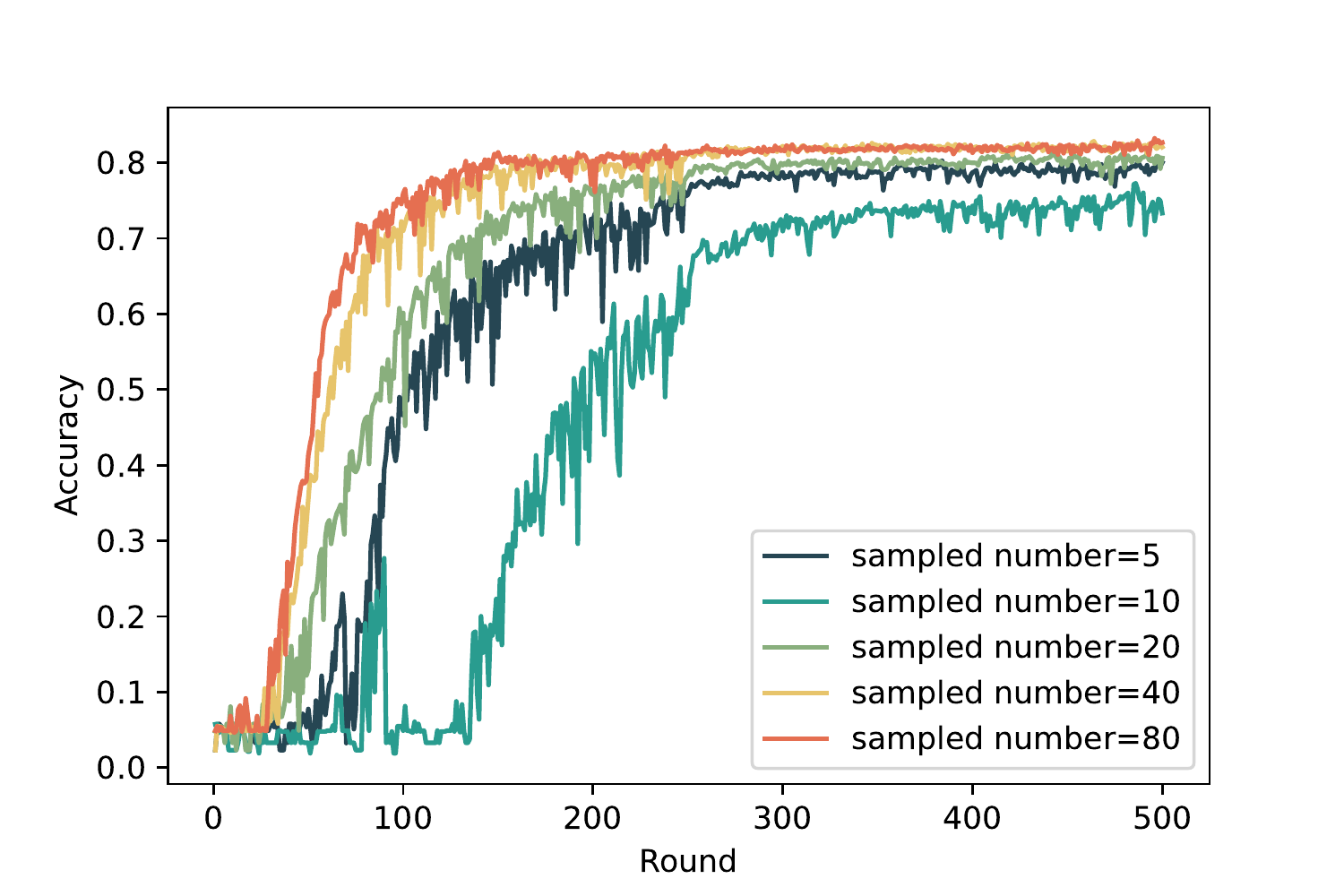}}
    \caption{Ablation study of the number of sampled clients.}
    \label{app number of clients}
\end{figure}

Figure~\ref{app number of clients} shows the accuracy performance of practical DELTA algorithms on FEMNIST with different sampled numbers of clients. In particular, the larger number of sampled clients, the faster the convergence speed is. This is consistent with our theoretical result (Corollary~\ref{Corollary practical DELTA}).

\paragraph{Performance on FashionMNIST and CIFAR-10.}
For CIFAR-10, we report the mean of the best 10 test accuracies on test data.
In Table~\ref{acc of real dataset}, we compare the performance of DELTA, FedIS, and FedAvg on non-IID FashionMNIST and CIFAR-10 datasets. Specifically, we use $\alpha=0.1$ for FashionMNIST and $\alpha=0.5$ for CIFAR-10 to split the datasets. As for Multinomial Distribution (MD) sampling~\citep{li2018federated}, it samples based on the clients' data ratio and average aggregates. It is symmetric in sampling and aggregation with FedAvg, with similar performance.
It can be seen that DELTA has better accuracy than FedIS, while both DELTA and FedIS outperform FedAvg with the same number of communication rounds.


\paragraph{Assessing the Compatibility of FedIS with Other Optimization Methods.}
In Table~\ref{acc of pro and mom}, we demonstrate that DELTA and FedIS are compatible with other FL optimization algorithms, such as Fedprox~\citep{li2018federated} and FedMIME~\citep{karimireddy2020mime}. Furthermore, DELTA maintains its superiority in this setting.

\begin{table*}[!ht]
 \small
 \centering
 \caption{\small
  \textbf{Performance of sampling algorithms integrated with momentum and prox.}
  We run 500 communication rounds on CIFAR-10 for each algorithm. We report the mean of maximum 5 accuracies for test datasets and the number of communication rounds to reach the threshold accuracy.
 }
 \vspace{-0.em}
 \label{acc of pro and mom}
 \resizebox{0.8\textwidth}{!}{%
  \begin{tabular}{l c c c c c c c c c c}
   \toprule
   \multirow{2}{*}{Algorithm} & \multicolumn{2}{c}{Sampling + momentum} & \multicolumn{2}{c}{Sampling + proximal}\\
   \cmidrule(lr){2-3} \cmidrule(lr){4-5} 
            & Acc (\%) & Rounds for 65\% & Acc (\%) & rounds for 65\%\\
   \midrule
   FedAvg (w/ uniform sampling)    & 0.6567 & 390 & 0.6596 & 283\\
   FedIS                         &0.6571 & \textbf{252}  & 0.661 & 266  \\
   DELTA                      & \textbf{0.6604} &283  & \textbf{0.6677}& \textbf{252} \\ 
   \bottomrule
  \end{tabular}%
  }
  \vspace{-.5em}
\end{table*}

In Table~\ref{acc FedVARP}, we demonstrate that DELTA and FedIS are compatible with other variance reduction algorithms, like FedVARP~\citep{jhunjhunwala2022fedvarp}. 

It is worth noting that FedVARP utilizes the historic update to approximate the unparticipated clients' updates. However, in this setting, the improvement of the sampling strategy on the results is somewhat reduced. This is because the sampling strategy is slightly redundant when all users are involved. Thus, when VARP and DELTA/FedIS are combined, instead of reassigning weights in the aggregation step, we use \eqref{FedIS sampling probability} or \eqref{FedSRC-D} to select the current round update clients and then average aggregate the updates of all clients. One can see that the combination of DELTA/FedIS and VARP can still show the advantages of sampling.

\begin{table*}[!ht]
 \small
 \centering
 \caption{\small
  \textbf{Performance of DELTA/FedIS in combination with FedVARP.}
  We run 500 communication rounds on FashionMNIST with $\alpha = 0.1$ for each algorithm. We report the mean of maximum 5 accuracies for test datasets and the number of communication rounds to reach the threshold accuracy.
 }
 \vspace{-0.em}
 \label{acc FedVARP}
 \resizebox{0.5\textwidth}{!}{%
  \begin{tabular}{l c c c c c c c c c c}
   \toprule
   \multirow{2}{*}{Algorithm} & \multicolumn{2}{c}{FashionMNIST} \\
   \cmidrule(lr){2-3} 
            & Acc (\%) & Rounds for 73\% \\
   \midrule
    FedVARP   & 73.81 $\pm$ 0.18 & 470 \\
   FedIS + FedVARP &73.96 $\pm$ 0.14& 452 \\
   DELTA +FedVARP  & \textbf{74.22}$\pm$ 0.14 & \textbf{436}   \\ 
   \bottomrule
  \end{tabular}%
  }
  \vspace{-.5em}
\end{table*}

In addition to the above optimization methods like VARP, we also conduct experiments on LEAF (FEMNIST and CelebA) with other non-vanilla SGD algorithms, namely Adagrad and Adam, to show that the proposed client selection framework be applied to federated learning algorithms other than vanilla SGD. The results are shown in Table~\ref{Performance on Adgrad} and Table~\ref{Performance on ADAM}.

\begin{table*}[!ht]
 \small
 \centering
 \vspace{-.2em}
 \caption{\small
  \textbf{Performance of sampling algorithms integrated with Adagrad and Adam on FEMNIST.}
  We run 1000 communication rounds on FEMNIST for each algorithm. In particular, we set the global learning rate $\eta = 0.01$ to ensure convergence of Adagrad and Adam. We report the mean of the maximum of 5 accuracies for test datasets and the average number of communication rounds that reach the threshold accuracy $80\%$.
 }
 \vspace{-0.6em}
 \label{Performance on Adgrad}
 \resizebox{0.7\textwidth}{!}{%
  \begin{tabular}{l c c c c c c c c c c}
   \toprule
   \multirow{2}{*}{Algorithm} & \multicolumn{2}{c}{Sampling + Adagrad} & \multicolumn{2}{c}{Sampling + Adam}\\
   \cmidrule(lr){2-3} \cmidrule(lr){4-5} 
            & Acc (\%) & Rounds for 80\% & Acc (\%) & rounds for 80\%\\
   \midrule
   FedAvg    & 80.93 ± 0.08 & 893 (1.0$\times$) & 80.04 ± 0.85 & 882  (1.0$\times$)\\
   Cluster-based IS    & 80.69±0.42 & 760 (1.17$\times$) & 79.11± 0.18   & - \\
   FedIS      &80.96 ± 0.31 & 723 (1.24$\times$) & 80.10 ±0.25 & 787 (1.12$\times$)  \\
   DELTA        & 81.79  ± 0.09 & 612 (1.46$\times$) & 80.92 ±0.27 & 600  (1.47$\times$)\\ 
   \bottomrule
  \end{tabular}%
  }
  \vspace{-.5em}
\end{table*}

\begin{table*}[!ht]
 \small
 \centering
 \vspace{-.2em}
 \caption{\small
  \textbf{Performance of sampling algorithms integrated with Adagrad and Adam on CelebA.}
  We run 1000 communication rounds on CelebA for each algorithm. In particular, we set the global learning rate $\eta = 0.01$ to ensure convergence of Adagrad and Adam. We report the mean of the maximum of 5 accuracies for test datasets and the average number of communication rounds that reach the threshold accuracy $80\%$.
 }
 \vspace{-0.6em}
 \label{Performance on ADAM}
 \resizebox{0.7\textwidth}{!}{%
  \begin{tabular}{l c c c c c c c c c c}
   \toprule
   \multirow{2}{*}{Algorithm} & \multicolumn{2}{c}{Sampling + Adagrad} & \multicolumn{2}{c}{Sampling + Adam}\\
   \cmidrule(lr){2-3} \cmidrule(lr){4-5} 
            & Acc (\%) & Rounds for 80\% & Acc (\%) & rounds for 80\%\\
   \midrule
   FedAvg    & 88.92 ± 0.08 & 329 (1.0$\times$) & 89.04 ± 0.22 & 244  (1.0$\times$)\\
   Cluster-based IS    & 89.71±0.10 & 329 (1.0$\times$) & 89.26± 0.19   & 164 (1.49$\times$) \\
   FedIS      &90.14 ± 0.01 & 243 (1.35$\times$) & 89.92 ±0.05 & 140 (1.74$\times$)  \\
   DELTA        & 90.38  ± 0.02 & 214 (1.54$\times$) & 90.58 ±0.07 & 109  (2.24$\times$)\\ 
   \bottomrule
  \end{tabular}%
  }
  \vspace{-.5em}
\end{table*}

Table~\ref{Performance of sampling algorithms under the common thresholds} provide results under some common thresholds, including 50\% for CIFAR-10 and 80\% for CelebA, to replace 54\% for CIFAR-10 and 85\% for CelebA in Table~\ref{acc of real dataset}.

\begin{table*}[!ht]
 \small
 \centering
 \vspace{-.2em}
 \caption{\small
  \textbf{Performance of sampling algorithms under the common thresholds.}
  We report the results of algorithms on CIFAR-10 with $50\%$ threshold and on CelebA with $80\%$ threshold.
  We run 500 communication rounds for each algorithm. We report the mean of the maximum of 5 accuracies for test datasets and the average number of communication rounds that reach the threshold accuracy. 
 }
 \vspace{-0.6em}
 \label{Performance of sampling algorithms under the common thresholds}
 \resizebox{0.7\textwidth}{!}{%
  \begin{tabular}{l c c c c c c c c c c}
   \toprule
   \multirow{2}{*}{Algorithm} & \multicolumn{2}{c}{CIFAR-10} & \multicolumn{2}{c}{CelebA}\\
   \cmidrule(lr){2-3} \cmidrule(lr){4-5} 
            & Acc (\%) & Rounds for 50\% & Acc (\%) & rounds for 80\%\\
   \midrule
   FedAvg    & 54.28 ± 0.29 & 181 (1.0$\times$) & 85.92 ± 0.89 & 339  (1.0$\times$)\\
   Cluster-based IS    & 54.83± 0.02 & 187 (0.91$\times$) & 86.77± 0.11   & 303 (1.11$\times$) \\
   FedIS      &55.05 ± 0.27 & 168 (1.07$\times$) & 89.12 ±0.71 & 261 (1.29$\times$)  \\
   DELTA        & 55.20  ± 0.26 & 151 (1.20$\times$) & 89.67 ±0.56 & 257  (1.32$\times$)\\ 
   \bottomrule
  \end{tabular}%
  }
  \vspace{-.5em}
\end{table*}

\paragraph{Albation study for $\alpha$.}
In Table~\ref{acc of different alpha}, we experiment with different choices of heterogeneity $\alpha$ in the CIFAR-10 dataset. The parameter of heterogeneity $\alpha$ changes from $0.1$ to $0.5$ to $1$. We observe a consistent improvement of DELTA compared to the other algorithms. This shows that DELTA is robust to changes in the level of heterogeneity in the data distribution.

\begin{table*}[!ht]
 \small
 \centering
 \caption{\small
  \textbf{Performance of algorithms under different $\alpha$.}
  We run 500 communication rounds on CIFAR10 for each algorithm (with momentum). We report the mean of maximum 5 accuracies for test datasets and the number of communication rounds to reach the threshold accuracy.
 }
 \vspace{-0.em}
 \label{acc of different alpha}
 \resizebox{1.\textwidth}{!}{%
  \begin{tabular}{l c c c c c c c c c c}
   \toprule
   \multirow{2}{*}{Algorithm} & \multicolumn{2}{c}{$\alpha = 0.1$} & \multicolumn{2}{c}{$\alpha = 0.5$}& \multicolumn{2}{c}{$\alpha = 1.0$}\\
   \cmidrule(lr){2-3} \cmidrule(lr){4-5} \cmidrule(lr){6-7} 
            & Acc (\%) & Rounds for 42\% & Acc (\%) & rounds for 65\% & Acc (\%) & rounds for 71\%\\
   \midrule
   FedAvg (w/ uniform sampling)    & 0.4209 & 263 & 0.6567 & 283 & 0.7183 & 246\\
   FedIS                         &0.427 & 305  & 0.6571 & \textbf{252}  & 0.7218 & 239 \\
   DELTA                      & \textbf{0.4311} &\textbf{209}  & \textbf{0.6604}& 283 & \textbf{0.7248} & \textbf{221} \\ 
   \bottomrule
  \end{tabular}%
  }
  \vspace{-.5em}
\end{table*}

\end{document}
